\newenvironment{customlemma}[1]
  {\innercustomlemma}
  {\endinnercustomlemma}
\newenvironment{customthm}[1]
  {\innercustomthm}
  {\endinnercustomthm}
\newenvironment{customcor}[1]
  {\innercustomcor}
  {\endinnercustomcor}
\newenvironment{custompro}[1]
  {\innercustompro}
  {\endinnercustompro}
\newtheorem{remark}{Remark}
\theoremstyle{plain}
\newtheorem{theorem}{Theorem}
\newtheorem{proposition}[theorem]{Proposition}
\newtheorem{lemma}[theorem]{Lemma}
\newtheorem{corollary}[theorem]{Corollary}
\theoremstyle{definition}
\newtheorem{definition}[theorem]{Definition}
\newtheorem{example}{Example} 
\newcommand{\normmm}[1]{{\left\vert\kern-0.25ex\left\vert\kern-0.25ex\left\vert #1 
   \right\vert\kern-0.25ex\right\vert\kern-0.25ex\right\vert}}
\newcommand{\sroman}[1]{(\text{\romannumeral#1})}
\newcommand{\Ceil}[1]{\lceil #1 \rceil}
\title{Coverage-Guaranteed Prediction Sets for Out-of-Distribution Data}
\author{
    Xin Zou,
    Weiwei Liu\thanks{corresponding author}
}
\begin{document}

\maketitle

\begin{abstract}
Out-of-distribution (OOD) generalization has attracted increasing research attention in recent years, due to its promising experimental results in real-world applications. In this paper, we study the confidence set prediction problem in the OOD generalization setting. Split conformal prediction (SCP) is an efficient framework for handling the confidence set prediction problem. However, the validity of SCP requires the examples to be exchangeable, which is violated in the OOD setting. Empirically, we show that trivially applying SCP results in a failure to maintain the marginal coverage when the unseen target domain is different from the source domain. To address this issue, we develop a method for forming confident prediction sets in the OOD setting and theoretically prove the validity of our method. Finally, we conduct experiments on simulated data to empirically verify the correctness of our theory and the validity of our proposed method.
\end{abstract}

\section{Introduction} \label{sec::introduction}
Recent years have witnessed the remarkable success of modern machine learning techniques in many applications. A fundamental assumption of most machine learning algorithms is that the training and test data are drawn from the same underlying distribution. However, this assumption is consistently violated in many practical applications. In reality, the test environment is influenced by a range of factors, such as the distributional shifts across photos caused by the use of different cameras in image classification tasks, the voices of different persons in voice recognition tasks, and the variations between scenes in self-driving tasks \cite{DBLP:conf/iclr/NagarajanAN21}. As a result, there is now a rapidly growing body of research with a focus on generalizing to unseen target domains with the help of the source domains, namely OOD generalization \cite{DBLP:journals/corr/abs-2108-13624}.

Existing OOD generalization methods focus on improving worst-case performance on the target domains, i.e., improving the average test accuracy of the model on the worst target domain. However, in some systems that require high security (such as medical diagnosis), even a single mistake may have disastrous consequences. In these cases, it is important to quantify the uncertainty of the predictions. One way to perform uncertainty estimation \citep{DBLP:journals/corr/AmodeiOSCSM16, DBLP:journals/jamia/JiangOKO12, DBLP:conf/nips/JiangKGG18, DBLP:conf/iclr/AngelopoulosBJM21} is to create confident prediction sets that provably contain the correct answer with high probability. Let $X_{n+1} \in \mathcal{X}$ be a new test example for which we would like to predict the corresponding label $Y_{n+1} \in \mathcal{Y}$, where $\mathcal{X}$ is the input space and $\mathcal{Y}$ is the label space. For any given $\alpha \in (0,1)$, the aim of confidence set prediction is to construct a set-valued output, $\mathcal{C}(X_{n+1})$, which contains the label $Y_{n+1}$ with distribution-free marginal coverage at a significance level $\alpha$, i.e., $\mathbb{P}\left( Y_{n+1} \in \mathcal{C}(X_{n+1}) \right) \ge 1-\alpha$. A confidence set predictor $\mathcal{C}^\alpha$ is said to be \textbf{valid} if $\mathbb{P}\left( Y_{n+1} \in \mathcal{C}^\alpha(X_{n+1}) \right) \ge 1-\alpha$ for any $\alpha \in (0,1)$, where $\alpha$ is a hyper-parameter of the predictor. To simplify the notation, we omit the superscript $\alpha$ in the remainder of this paper.

Conformal prediction \citep[\textbf{CP}]{10.5555/1062391, DBLP:journals/jmlr/ShaferV08} is a model-agnostic, non-parametric and distribution-free (the coverage guarantee holds for any distribution) framework for creating confident prediction sets. Split conformal prediction \citep[\textbf{SCP}]{DBLP:journals/ml/Vovk13, 10.5555/1062391}, a special type of CP, has been shown to be computationally efficient. SCP reserves a set of data as the calibration set, and then uses the relative value of scores of the calibration set and that of a new test example to construct the prediction set. The validity of SCP relies on the assumption that the examples are exchangeable. However, in the OOD setting, the distributional shift between the training and test distributions leads to the violation of the exchangeability assumption. We empirically evaluate the performance of SCP in the OOD setting in Section \ref{sec::motivating-experiment}. Unfortunately, we find that trivially applying SCP results in a failure to maintain marginal coverage in the OOD setting.

To address this issue, we construct a set predictor based on the $f$-divergence \cite{fdivergence} between the test distribution (target domain) and the convex hull of the training distributions (source domains). We theoretically show that our set predictor is guaranteed to maintain the marginal coverage (\Cref{cor::correct-coverage-guarantee}). We then conduct simulation experiments to verify our theory.

The remainder of this article is structured as follows: \S \ref{sec::related-work} introduces some related works; \S \ref{sec::pre} presents the notation definitions and preliminaries; \S \ref{sec::motivating-experiment} conducts experiments that show the failure of SCP in the OOD generalization setting; \S \ref{sec::ood-scp} creates corrected confidence set predictor in the OOD generalization setting; \S \ref{sec::experiments} provides our experimental results. \S \ref{sec::discussion} make discussions with the most related work. Finally, the conclusions are presented in \S \ref{sec::conclusion}. All of our proofs are attached in Appendix A.

\section{Related Works} \label{sec::related-work}
\textbf{OOD generalization.} OOD generalization aims to train a model with data from the source domains so that it is capable of generalizing to an unseen target domain. A large number of algorithms have been developed to improve OOD generalization. One series of works focuses on minimizing the discrepancies between the source domains \citep{DBLP:conf/cvpr/MMD,DBLP:journals/jmlr/DANN,DBLP:conf/eccv/CDANN,DBLP:conf/eccv/CORAL}. Meta-learning domain generalization \citep[MLDG]{DBLP:conf/aaai/MLDG} leverages the meta-learning approach and simulates train/test distributional shift during training by synthesizing virtual testing domains within each mini-batch. Another line of works \citep{DAT, DBLP:journals/corr/MAT-LDAT} conducts adversarial training \citep{DBLP:conf/iclr/MadryMSTV18} to improve the OOD generalization performance. \citep{zou2023on} considers improving the adversarial robustness of the unseen target domain. Notably, the above works all focus on improving the average performance on the target domain; in contrast, we focus on designing valid confidence set predictors for data from the unseen target domains, as this is a crucial element of making high-stakes decisions in systems that require high security.

\textbf{Conformal prediction.} As introduced in \S 1, conformal prediction is a model-agnostic, non-parametric, and distribution-free framework that provides valid confidence set predictors. Generally speaking, examples are assumed to be exchangeable in a CP context. Most pertinent to our work, \cite{DBLP:conf/iclr/GendlerWDR22, DBLP:conf/nips/TibshiraniBCR19, DBLP:conf/icml/FischSJB21, DBLP:journals/corr/abs-2008-04267, DBLP:conf/nips/GibbsC21, oliveira2022split} all consider various situations in which the exchangeability of the examples is violated to some extent. \cite{DBLP:conf/iclr/GendlerWDR22} considers the case in which the test examples may be adversarially attacked \citep{DBLP:journals/corr/SzegedyZSBEGF13, DBLP:journals/corr/GoodfellowSS14, DBLP:conf/iclr/MadryMSTV18}; \cite{DBLP:conf/nips/TibshiraniBCR19} investigates the situation in which the density ratio between the target domain and the source domain is known; \cite{DBLP:conf/icml/FischSJB21} studies the few-shot learning setting and assumes that the source domains and the target domain are independent and identically distributed (i.i.d.) from some distribution on the domains; \cite{DBLP:conf/nips/GibbsC21} considers an online learning setting and \cite{oliveira2022split} provides results when the examples are mixing \citep{klenke2013probability, chen2010nonlinearity, yu1994rates}. Different from all the works discussed above, we consider the OOD generalization setting in which the $f$-divergence between the target domain and the convex hull of the source domains is constrained. The most related work among them is \citep{DBLP:journals/corr/abs-2008-04267}, which studies the worst-case coverage guarantee of a $f$-divergence ball centered at the single source domain. For the discussions about similarities and differences with \citep{DBLP:journals/corr/abs-2008-04267}, please refer to Section \ref{sec::discussion}.

\section{Preliminaries} \label{sec::pre}
We begin with the OOD setups and a review of conformal prediction.

\textbf{Notations.} We denote $\{ 1, 2, \dots, n \}$ by $[n]$ for $n \in \mathbb{N}_+$. For a distribution $P$ on $\mathbb{R}$, we define the quantile function of $P$ as $\mathcal{Q}(\beta;P) \coloneqq \inf \{ s \in \mathbb{R}| P(S\le s) \ge \beta \}$. Similarly, for a cumulative distribution function (c.d.f.) $F$ on $\mathbb{R}$, we define $\mathcal{Q}(\beta;F) \coloneqq \inf \{ s \in \mathbb{R}| F(s) \ge \beta \}$. For $n$ distributions $P_1, \dots, P_n$, we define $\mathcal{CH}\left( P_1, \dots, P_n \right) \coloneqq \left\{ \sum_{i=1}^n \lambda_i P_i | \lambda_1, \dots, \lambda_n \ge 0; \sum_{i=1}^n \lambda_i = 1 \right\}$ as the convex hull of the distributions $P_1, \dots, P_n$. We further define $\mathcal{N}(\mu, \Sigma)$ as the multi-variable Gaussian distribution with mean vector $\mu$ and covariance matrix $\Sigma$. For a set $A$, we define the indicator function as $\mathbb{I}_A(\cdot)$, where $\mathbb{I}_A(x) = 1$ if $x \in A$ and $\mathbb{I}_A(x) = 0$ otherwise.

\subsection{Out-of-Distribution Generalization}

We define the input space as $\mathcal{X}$ and the label space as $\mathcal{Y}$. We set $\mathcal{Y} = \{ \pm 1 \}$, $\mathcal{Y} = \{ 1,2, \dots, K \}$ (where $K$ is the number of classes), and $\mathcal{Y} = \mathbb{R}$ for the binary classification problem, the multi-class classification problem, and the regression problem, respectively. Let $\mathcal{S} \coloneqq \left\{S_1, \dots, S_d \right\}$ be the set of source domains, where $d$ is the number of source domains. $S_1, \dots, S_d$ are distributions on $\mathcal{Z} \coloneqq \mathcal{X} \times \mathcal{Y}$, and we use the terminologies "domain" and "distribution" interchangeably in this paper. Let $T$ denote the target domain. The goal of OOD generalization is to obtain good performance on all $T \in \mathcal{T}$, where $\mathcal{T}$ is the set of all possible target domains; we usually assume $\mathcal{S} \subseteq \mathcal{T}$.

In a standard OOD generalization setting, we learn a predictor $h \in \mathcal{H} \subseteq \{ h: \mathcal{X} \xrightarrow{} \mathcal{Y} \}$ from the source domains $\mathcal{S}$ and define a loss function $\ell: \mathcal{Y} \times \mathcal{Y} \xrightarrow{} \mathbb{R}_*$ where $\mathbb{R}_* = [0, + \infty)$. We aim to minimize the worst-case population risk of the predictor $h$ on the unseen target domain as follows:
\begin{equation*}
    \mathcal{R}_\mathcal{T}(h) = \underset{T \in \mathcal{T}}{\max} \underset{(X,Y)\sim T}{\mathbb{E}} \left[ \ell\left( h(X), Y \right) \right].
\end{equation*}

However, in some systems that require high security, a mistake may lead to serious disasters. In these cases, a good solution is to output a prediction set with a marginal coverage guarantee. For a predefined confidence level $1-\alpha \in (0,1)$, we wish to output a prediction set $\mathcal{C}(x) \subseteq \mathcal{Y}$ such that, for any $T \in \mathcal{T}$:
\begin{equation} \label{eq::coverage-guarantee}
    \underset{(X,Y) \sim T, \mathcal{C}}{\mathbb{P}}\left[ Y \in \mathcal{C}(X) \right] \ge 1-\alpha,
\end{equation}
where the probability is over the randomness of test examples $(X,Y) \sim T$ and the randomness of the prediction set $\mathcal{C}$. To achieve \eqref{eq::coverage-guarantee}, we follow the idea of SCP \cite{DBLP:journals/ml/Vovk13, 10.5555/1062391} to construct $\mathcal{C}(x)$. The next section introduces the main idea of SCP.

\subsection{Split Conformal Prediction} \label{subsec::scp}

\textbf{Nonconformity score.} In SCP, we consider a supervised learning problem that involves predicting the label $y \in \mathcal{Y}$ of the input $x \in \mathcal{X}$. We assume that we have a predictive model $s: \mathcal{X} \times \mathcal{Y} \xrightarrow{} \mathbb{R}$, which outputs the nonconformity score $s(x, y)$. The nonconformity score function $s(\cdot, \cdot)$ is usually trained with a set of training data. $s(x,y) < s(x,y^\prime)$ means that for the input $x$, $y$ is more likely than $y^\prime$ to be the label. Some examples of nonconformity scores are as follows: for a probabilistic model $p(y|x)$, we can take the negative log-likelihood as the score, $s(x,y) = - \text{log} (p(y|x))$; for a regression model $h: \mathcal{X} \xrightarrow{} \mathcal{Y}$, a typical choice is $s(x,y) = | h(x) - y |$; for a multi-class classifier $h: \mathcal{X} \xrightarrow{} \Delta^{K-1}$, where $\Delta^{K-1}$ is the $K-1$ dimensional simplex in $\mathbb{R}^K$, we can take $s(x,y) = 1-h(x)_y$.

In SCP, we assume that the examples $\{ (X_i, Y_i) \}_{i=1}^{n+1} \subseteq \mathcal{X} \times \mathcal{Y}$ are exchangeable (\Cref{def::exchangeability}). For a predefined significance level $\alpha \in (0,1)$, the goal is to provide a valid confidence set $\widehat{\mathcal{C}}(X_{n+1})$. CP methods \cite{DBLP:journals/jmlr/ShaferV08} take advantage of the exchangeability of the data and the properties of the quantile function to make such a construction possible.
\begin{definition}[{\citep[Exchangeability]{DBLP:journals/jmlr/ShaferV08}}] \label{def::exchangeability}
    The random variables $Z_1, \dots, Z_n$ are exchangeable if for every permutation $\tau$ for integers $1, \dots, n$, the variables $W_1, \dots, W_n$, where $W_i = Z_{\tau(i)}$, have the same joint probability distribution as $Z_1, \dots, Z_n$.
\end{definition}

Let $V_i = s(X_i,Y_i)$ for $i \in [n+1]$ be the nonconformity scores corresponding to the examples $\{ (X_i, Y_i) \}_{i=1}^{n+1}$, where $s(\cdot, \cdot)$ is independent of $\{ (X_i, Y_i) \}_{i=1}^{n+1}$. The independence between $s(\cdot, \cdot)$ and $\{ (X_i, Y_i) \}_{i=1}^{n+1}$ is useful since in this case we can prove that the scores $\{ V_i\}_{i=1}^{n+1}$ are exchangeable. The exchangeability of $\{ V_i\}_{i=1}^{n+1}$ comes from the exchangeability of $\{ (X_i, Y_i) \}_{i=1}^{n+1}$ and the independence between the $s(\cdot, \cdot)$ and $\{ (X_i, Y_i) \}_{i=1}^{n+1}$. Next, define $\text{rank}(V_i)$ as the rank of $V_i$ among $\{ V_i\}_{i=1}^{n+1}$ for any $i\in[n+1]$ (in ascending order; we assume that ties are broken randomly). By the exchangeability of $\{ V_i\}_{i=1}^{n+1}$, $\text{rank}(V_i)$ is uniform on $[n+1]$, which is used to prove the validity of SCP in \Cref{lma::scp-coverage}. We use $\widehat{P}\left( \{ V_i\}_{i=1}^n \right)$ to denote the empirical distribution determined by the examples $V_1, \dots, V_n$. Let
\begin{equation} \label{set::scp}
    \widehat{\mathcal{C}}_n(x)\! \coloneqq \!\!\left\{ y\! \in\! \mathcal{Y} \Big| s(x,y) \!\le\! \mathcal{Q}\!\left(\!\! \frac{n+1}{n} (1\!-\!\alpha); \!\widehat{P}\!\left(\{ V_i \}_{i=1}^n \right)\!\right)\! \right\},
\end{equation}

we then have the following marginal coverage guarantee.

\begin{lemma} [The validity of SCP] \label{lma::scp-coverage}
    Assume that examples $\{ (X_i, Y_i) \}_{i=1}^{n+1}$ are exchangeable. For any nonconformity score $s(\cdot, \cdot)$ and any $\alpha \in (0,1)$, the prediction set defined in \Cref{set::scp} satisfies:
    \begin{equation}  \label{ieq::scp-coverage}
        \mathbb{P}\left( Y_{n+1} \in \widehat{\mathcal{C}}_n(X_{n+1}) \right) \ge 1 - \alpha,
    \end{equation}
    where the probability is over the randomness of $\{ (X_i, Y_i) \}_{i=1}^{n+1}$.
\end{lemma}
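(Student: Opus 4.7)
The plan is to reduce the coverage event to a statement about the rank of $V_{n+1}$ among all $n+1$ scores, and then invoke exchangeability to conclude that this rank is uniformly distributed on $[n+1]$. This is the textbook SCP argument, which breaks cleanly into three short steps followed by a probability computation.

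First, by the definition of $\widehat{\mathcal{C}}_n(x)$ in \Cref{set::scp}, we have $Y_{n+1} \in \widehat{\mathcal{C}}_n(X_{n+1})$ if and only if $V_{n+1} \le q$, where $q := \mathcal{Q}\bigl( \tfrac{n+1}{n}(1-\alpha);\, \widehat{P}(\{V_i\}_{i=1}^n) \bigr)$. I would then identify $q$ explicitly as an order statistic: by the definition of the quantile of an empirical distribution, setting $k := \lceil (n+1)(1-\alpha) \rceil$, the threshold $q$ equals the $k$-th smallest value of $V_1, \ldots, V_n$ whenever $k \le n$, with the convention $q = +\infty$ when $k > n$ so that $\widehat{\mathcal{C}}_n(X_{n+1}) = \mathcal{Y}$ and the claim is trivial.

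Next, I would translate the threshold condition into a statement about ranks in the full sample. Breaking ties at random, the event $\{ V_{n+1} \le V_{(k)} \}$, with $V_{(k)}$ the $k$-th order statistic of $V_1,\ldots,V_n$, coincides with the event $\{ \text{rank}(V_{n+1}) \le k \}$ when the rank is taken among all of $V_1, \ldots, V_{n+1}$. This is an elementary order-statistic identity. For the probability step, since $\{ (X_i, Y_i) \}_{i=1}^{n+1}$ are exchangeable and $s(\cdot,\cdot)$ is independent of them, the scores $\{V_i\}_{i=1}^{n+1}$ are exchangeable as well. This forces $\text{rank}(V_{n+1})$ to be uniform on $[n+1]$, whence
$$
\mathbb{P}\bigl( Y_{n+1} \in \widehat{\mathcal{C}}_n(X_{n+1}) \bigr) = \mathbb{P}\bigl( \text{rank}(V_{n+1}) \le k \bigr) = \frac{k}{n+1} \ge \frac{(n+1)(1-\alpha)}{n+1} = 1-\alpha.
$$

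The main obstacle is bookkeeping rather than conceptual: the inflation factor $(n+1)/n$ inside the quantile has to be tracked carefully so that the empirical quantile of only $n$ scores corresponds to precisely the $\lceil (n+1)(1-\alpha)\rceil$-th order statistic of the full $(n+1)$-point sample, which is exactly what makes the final rank probability meet the $1-\alpha$ threshold. The two edge cases, ties among the scores and the degenerate regime $\lceil (n+1)(1-\alpha) \rceil > n$, must be handled explicitly: the former via the randomized tiebreaking convention already adopted in the preliminaries, and the latter by the $q = +\infty$ interpretation above.
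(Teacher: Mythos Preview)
Your proposal is correct and follows essentially the same route as the paper's proof: both reduce the coverage event to a rank condition on $V_{n+1}$ among $V_1,\ldots,V_{n+1}$, invoke exchangeability to make this rank uniform on $[n+1]$, and compare $\lceil (n+1)(1-\alpha)\rceil/(n+1)$ to $1-\alpha$. Your write-up is more explicit than the paper's---you spell out the identification of the empirical quantile with the $k$-th order statistic and treat the edge cases $k>n$ and ties directly---whereas the paper compresses these steps into a short chain of (in)equalities.
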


In the OOD generalization setting, we also want to obtain a valid set predictor that is valid for any $T \in \mathcal{T}$. In light of this, some natural questions arise:
\begin{quote}
    \emph{Does the set predictor defined in \Cref{set::scp} remain valid when the unseen target domain is different from the source domains? If not, can we construct a new set predictor that is valid in the OOD generalization setting?}
\end{quote}

Unfortunately, the answer to the first question is \textbf{negative}.  Theoretically, as shown in Appendix A.1, the proof of \Cref{lma::scp-coverage} is highly dependent on the exchangeability of the examples $\{ (X_i, Y_i) \}_{i=1}^{n+1}$, which is easily violated if there is any distributional shift between the distribution of $\{ (X_i, Y_i) \}_{i=1}^{n}$ and the distribution of $(X_{n+1},Y_{n+1})$. This means that in the OOD setting, the proof technique of \Cref{lma::scp-coverage} cannot be applied. Empirically, in Section \ref{sec::motivating-experiment}, we provide a toy example to show that the set predictor $\widehat{\mathcal{C}}_n(x)$ is no longer valid in the OOD setting.

In Section \ref{sec::ood-scp}, we give an \textbf{affirmative} answer to the second question. We first construct a new set predictor based on the $f$-divergence between the target domain and the convex hull of the source domains, then provide marginal coverage guarantees for the constructed predictor.

\section{SCP Fails in the OOD Setting} \label{sec::motivating-experiment}
In this section, we construct a toy example to show that for the OOD confidence set prediction problem, SCP is no longer valid, even under a slight distributional shift.

For simplicity, we consider a single-domain case. Specifically, we consider the regression problem and set $\mathcal{X} = \mathbb{R}^l$, $\mathcal{Y} = \mathbb{R}$. We define the source domain $S$ as follows: given a linear predictor $L(x) = \left< w^\star, x \right> + b^\star$ where $w^\star \in \mathbb{R}^l$ and $b^\star \in \mathbb{R}$. The marginal distribution of $X$ and the conditional distribution of $Y$ given $X$ are defined as:
\begin{equation*}
    X \sim \mathcal{N}(\mu_s, \sigma_{s,x}^2 I_l), \ \ \ \  Y|X=x \sim \mathcal{N}(L(x), \sigma_{s,y}^2),
\end{equation*}
where $\mu_s \in \mathbb{R}^l$ is the mean vector of $X$, $\sigma_{s,x}, \sigma_{s,y}$ are positive scalars, and $I_l \in \mathbb{R}^{l \times l}$ is an identity matrix. Similarly, for the target domain $T$, we define:
\begin{equation*}
    X \sim \mathcal{N}(\mu_t, \sigma_{t,x}^2 I_l), \ \ \ \  Y|X=x \sim \mathcal{N}(L(x), \sigma_{t,y}^2),
\end{equation*}
where $\mu_t \in \mathbb{R}^l$ is the mean vector of $X$ and $\sigma_{t,x}, \sigma_{t,y}$ are positive scalars. For simplicity, we set $\mu_s = \mu_t, \sigma_{s,x} = \sigma_{t,x}$ and $\sigma_{s,y} \ne \sigma_{t,y}$. We sample $m_\text{train}$ training examples from $S$ to train a linear predictor $\hat{L}(x) = \left< \hat{w}, x \right> + \hat{b}$, where $\hat{w} \in \mathbb{R}^l$ and $\hat{b} \in \mathbb{R}$. We then define the nonconformity score as $s(x,y) = | \hat{L}(x) - y |$. We sample $n$ examples from $S$ to construct the prediction set $\widehat{\mathcal{C}}_n(x)$ in \Cref{set::scp} and sample $m_\text{test}$ examples from $T$ to form the test data. We run $1000$ times with different random seeds. The results for the coverage (left) and length (right) of the prediction set are presented in box plot form in \Cref{fig::toy-example}. Here, the coverage is the ratio between the number of test examples such that $y_i \in \widehat{\mathcal{C}}_n(x_i)$ and the size of the test set. The red lines stand for the desired marginal coverages. Since the boxes are below the red coverage lines, we conclude that SCP fails to provide a prediction set with desired coverage when there exists a distributional shift between the source domain and the target domain.

\begin{figure}
  \centering
  \includegraphics[width=0.47\textwidth]{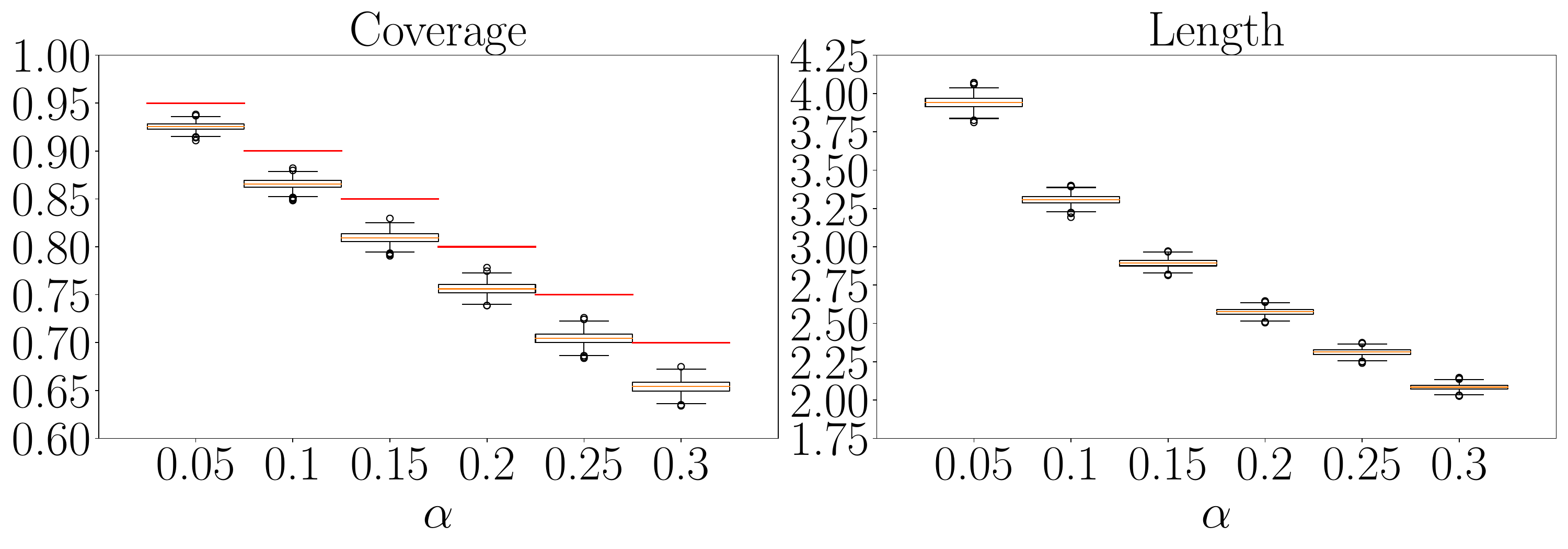}
  \caption{The box plots for the results of the $1000$ runs. We show the results for $\alpha=\{ 0.05, 0.1, 0.15, 0.2, 0.25, 0.3 \}$ and the horizontal axis represents the value of $\alpha$. The left plot shows the results for the coverage of the prediction sets. The red lines are the marginal coverage guarantees that we wish to achieve. The right plot shows the results for the length of the prediction sets.} \label{fig::toy-example}
\end{figure}

\section{Corrected SCP for OOD Data} \label{sec::ood-scp}
In this section, we consider correcting SCP for OOD data. We first consider the case in which we have access to the population distributions of the scores from the source domains. We then consider the case in which we only have access to the empirical distributions and correct the prediction set to obtain a marginal coverage guarantee in \Cref{ieq::scp-coverage}.

\subsection{Target Distribution Set and Confidence Sets}
It is obvious that obtaining a marginal coverage guarantee for an arbitrary target distribution is impossible unless we set $\widehat{\mathcal{C}}(x) = \mathcal{Y}$ for all $x \in \mathcal{X}$, which is a trivial confidence set predictor and does not provide any useful information. In this paper, we consider the case in which the $f$-divergence \cite{fdivergence} between the target domain and the convex hull of the source domains does not exceed a predefined value. The well-known KL divergence and TV distance are both special cases of $f$-divergence.

As Section \ref{sec::motivating-experiment} shows, when the target domain differs from the source domain, the marginal coverage does not hold for the predictor \eqref{set::scp}. Below, we construct new prediction sets where the marginal coverage \eqref{ieq::scp-coverage} holds.

We define a set of distributions $\mathcal{T} \subseteq \{ Q | Q \text{ is a distribution on } \mathcal{X} \times \mathcal{Y} \}$. For each $T \in \mathcal{T}$, the distribution of the score for the data from $T$ is defined as the push forward distribution $s\#T$, where $(s\#T)(A) = T(s^{-1}(A))$ for any measurable set $A \subseteq \mathbb{R}$. We define the distribution set of the scores as $\mathcal{P} \coloneqq \{ s\#T: T \in \mathcal{T} \}$. For a given $\alpha \in (0,1)$, our goal is to choose a threshold $t \in \mathbb{R}$ such that the confidence set $\widetilde{\mathcal{C}}(x) \coloneqq \{ y \in \mathcal{Y} | s(x,y) \le t \}$ satisfies \eqref{ieq::scp-coverage} when $(X_{n+1},Y_{n+1})$ is drawn from any target domain $T \in \mathcal{T}$. The following lemma provides a proper choice of $t$.

\begin{lemma} \label{lma::wc-scp-coverage}
    For any unknown target distribution $T \in \mathcal{T}$, assume that $(X_{n+1}, Y_{n+1})$ is drawn from $T$. If we set $t \ge \underset{P \in \mathcal{P}}{\max}\ \mathcal{Q}(1-\alpha;P)$, then:
    \begin{equation} \label{ieq-wc-scp-coverage}
        \mathbb{P}\left( Y_{n+1} \in \widetilde{\mathcal{C}}(X_{n+1}) \right) \ge 1 - \alpha.
    \end{equation}
\end{lemma}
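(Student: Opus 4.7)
The plan is to reduce the coverage event to a one-dimensional threshold-crossing event on the nonconformity score, and then apply the defining property of the quantile function uniformly over $\mathcal{P}$. Concretely, by the definition of $\widetilde{\mathcal{C}}$, the event $\{Y_{n+1} \in \widetilde{\mathcal{C}}(X_{n+1})\}$ coincides with $\{s(X_{n+1}, Y_{n+1}) \le t\}$. So I would first define $V_{n+1} \coloneqq s(X_{n+1}, Y_{n+1})$ and observe that, since $(X_{n+1}, Y_{n+1}) \sim T$ and $V_{n+1}$ is the image of this pair under the measurable map $s$, the law of $V_{n+1}$ is exactly the push-forward $s\#T$, which by assumption belongs to $\mathcal{P}$.

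Next, I would invoke the standard property of the quantile function: for any probability measure $P$ on $\mathbb{R}$ and any $\beta \in (0,1)$, if $V \sim P$, then $P(V \le \mathcal{Q}(\beta;P)) \ge \beta$. This follows from the right-continuity of the CDF together with the infimum definition $\mathcal{Q}(\beta;P) = \inf\{s : P(S \le s) \ge \beta\}$: take a decreasing sequence $s_k \downarrow \mathcal{Q}(\beta;P)$ with $P(S \le s_k) \ge \beta$, and pass to the limit using continuity from above of the measure on $(-\infty, s_k]$. Applied with $P = s\#T$ and $\beta = 1 - \alpha$, this gives $\mathbb{P}(V_{n+1} \le \mathcal{Q}(1-\alpha; s\#T)) \ge 1 - \alpha$.

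Finally, I would use monotonicity of the CDF and the choice of $t$ to extend this to $t$. Since $s\#T \in \mathcal{P}$, we have
\begin{equation*}
    t \;\ge\; \max_{P \in \mathcal{P}} \mathcal{Q}(1-\alpha; P) \;\ge\; \mathcal{Q}(1-\alpha; s\#T),
\end{equation*}
so $\{V_{n+1} \le \mathcal{Q}(1-\alpha; s\#T)\} \subseteq \{V_{n+1} \le t\}$, and therefore
\begin{equation*}
    \mathbb{P}\bigl(Y_{n+1} \in \widetilde{\mathcal{C}}(X_{n+1})\bigr)
    = \mathbb{P}(V_{n+1} \le t)
    \ge \mathbb{P}\bigl(V_{n+1} \le \mathcal{Q}(1-\alpha; s\#T)\bigr)
    \ge 1 - \alpha,
\end{equation*}
which is the desired bound.

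There is essentially no deep obstacle here; the lemma is a clean worst-case reduction. The only mild subtlety to be careful about is the quantile inequality $P(V \le \mathcal{Q}(\beta;P)) \ge \beta$ at the infimum itself, which needs right-continuity of the CDF rather than only the defining inequality at points strictly above the infimum. Once that is recorded, the rest is monotonicity and the definition of $t$, and the argument makes no use of exchangeability or of the training/calibration structure — consistent with the fact that this lemma is stated for the population scenario where the full distributions in $\mathcal{P}$ are accessible.
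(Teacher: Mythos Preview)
Your proposal is correct and follows essentially the same approach as the paper's proof: rewrite the coverage event as $\{V_{n+1}\le t\}$, use $t \ge \max_{P\in\mathcal{P}}\mathcal{Q}(1-\alpha;P) \ge \mathcal{Q}(1-\alpha;s\#T)$ since $s\#T\in\mathcal{P}$, and conclude via the quantile property $\mathbb{P}(V_{n+1}\le \mathcal{Q}(1-\alpha;s\#T))\ge 1-\alpha$. Your explicit justification of the quantile inequality via right-continuity of the CDF is slightly more detailed than the paper, which simply cites the definition of the quantile function.
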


For a given set $\mathcal{P}$ of distributions for the score, \Cref{lma::wc-scp-coverage} reduces the problem of finding a valid confidence set predictor to the following optimization problem:
\begin{equation} \label{problem::find-max-quantile}
    \max \ \mathcal{Q}(1-\alpha;P) \ \text{s.t.} \ P \in \mathcal{P}.
\end{equation}

Next, we formulate the set $\mathcal{T}$ through the lens of $f$-divergence.

\begin{definition}[$f$-divergence] \label{def::f-divergence}
Let $f: \mathbb{R} \xrightarrow{} \mathbb{R}$ be a closed convex function satisfying $f(1) = 0$ and $f(t) = + \infty$ for $t < 0$. Let $P,Q$ be two probability distributions such that $P \ll Q$ ($P$ is absolutely continuous with respect to $Q$). The $f$-divergence between $P$ and $Q$ can then be defined as follows:
\begin{equation*}
    D_f(P\Vert Q) \coloneqq \int f\left( \frac{dP}{dQ} \right) d Q,
\end{equation*}
where $\frac{dP}{dQ}$ is the \textbf{Radon-Nikodym derivative} \cite{billingsley2008probability}.
\end{definition}

\begin{remark} \label{rmk::def-f-divergence}
For a given function $f$ that satisfies the conditions in \Cref{def::f-divergence}, define $f_0(t) \coloneqq f(t) - f^\prime(1)(t-1)$. We then obtain that, for any $P\ll Q$:
\begin{equation*}
    \begin{aligned}
        &D_{f_0}(P\Vert Q) = \int f_0\left( \frac{dP}{dQ} \right) d Q = D_f(P\Vert Q).
    \end{aligned}
\end{equation*}
By the convexity of $f$, it can be easily observed that $f_0(t) \ge 0$ for all $t \in \mathbb{R}$. Moreover, $\inf_t f_0(t) = f_0(1)=0$ and $f_0^\prime(1) = 0$. Since $f_0$ produces the same $f$-divergence as $f$, without loss of generality, we can assume that $f^\prime(1)=f(1)=0$ and $f\ge 0$.
\end{remark}

Equipped with the $f$-divergence, we can now define our target distribution set $\mathcal{T}$ for a given threshold $\rho > 0$:
\begin{equation*} 
    \mathcal{T}_{f,\rho}\!(S_1\!, \!\cdots\!,\! S_d)\!\! \coloneqq\!\! \left\{ \!T | \exists Q \!\in\! \mathcal{CH}\!(S_1\!,\! \cdots\!,\! S_d)\ \text{s.t.} \ D_f(T\Vert Q) \!\!\le\!\! \rho\!\right\}.
\end{equation*}
We omit $S_1, \dots, S_d$ and use $\mathcal{T}_{f,\rho}$ for simplicity. The corresponding distribution set for the scores is then:
\begin{equation} \label{eq::score-set-corr}
    \mathcal{P} \coloneqq \{ s\# T | T \in \mathcal{T}_{f,\rho} \}.
\end{equation}

However, it is hard to obtain the precise relationship between $\mathcal{P}$ and the distributions $s\# S_1, \dots, s\# S_d$, which makes it difficult to analyze $\mathcal{P}$. We instead consider the following distribution set of scores:
\begin{equation} \label{eq::score-set}
    \begin{aligned}
        \mathcal{P}_{f,\rho} &\coloneqq \left\{ S \text{ is a distribution on } \mathbb{R} |  \right.\\
        &\exists S_0 \in \mathcal{CH}(s\# S_1, \cdots, s\# S_d)\ \left.\text{s.t.} \ D_f(S\Vert S_0) \le \rho\right\}.
    \end{aligned}
\end{equation}
The following lemma reveals the relationship between $\mathcal{P}$ and $\mathcal{P}_{f,\rho}$.

\begin{lemma} \label{lma::relation-2Ps}
    Let $\mathcal{P}$, $\mathcal{P}_{f,\rho}$ be defined as in \eqref{eq::score-set-corr}, \eqref{eq::score-set} respectively. Then, $\mathcal{P} \subseteq \mathcal{P}_{f,\rho}$.
\end{lemma}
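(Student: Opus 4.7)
The plan is to unpack the definitions and exhibit, for any $P \in \mathcal{P}$, an explicit $S_0 \in \mathcal{CH}(s\#S_1,\dots,s\#S_d)$ witnessing membership in $\mathcal{P}_{f,\rho}$. Take an arbitrary $P \in \mathcal{P}$. By definition of $\mathcal{P}$, there is $T \in \mathcal{T}_{f,\rho}$ with $P = s\#T$, and by definition of $\mathcal{T}_{f,\rho}$ there exist nonnegative weights $\lambda_1,\dots,\lambda_d$ summing to one and $Q = \sum_{i=1}^d \lambda_i S_i$ with $D_f(T \Vert Q) \le \rho$. The candidate I would propose is $S_0 \coloneqq s\#Q$.

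The first step is to verify $S_0 \in \mathcal{CH}(s\#S_1,\dots,s\#S_d)$. This is a one-line check using linearity of the pushforward: for any measurable $A \subseteq \mathbb{R}$,
\begin{equation*}
    (s\#Q)(A) = Q(s^{-1}(A)) = \sum_{i=1}^d \lambda_i S_i(s^{-1}(A)) = \sum_{i=1}^d \lambda_i (s\#S_i)(A),
\end{equation*}
so $s\#Q = \sum_i \lambda_i (s\#S_i)$ lies in the required convex hull.

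The second and main step is to show $D_f(s\#T \Vert s\#Q) \le D_f(T\Vert Q) \le \rho$. This is precisely the data-processing inequality for $f$-divergences applied to the measurable map $s$. I would prove it from scratch in a short paragraph rather than merely citing it, since it is the only nontrivial ingredient: let $\mu = dT/dQ$ be the Radon-Nikodym derivative (well-defined because $D_f(T\Vert Q) < \infty$ forces $T \ll Q$). One checks that the conditional expectation $\tilde\mu \coloneqq \mathbb{E}_Q[\mu \mid s]$ is a version of $d(s\#T)/d(s\#Q)$ on $\mathbb{R}$. Then by Jensen's inequality applied pointwise (using convexity of $f$ and the tower property),
\begin{equation*}
    f\!\left( \tilde\mu(s(z)) \right) = f\!\left( \mathbb{E}_Q[\mu \mid s](s(z)) \right) \le \mathbb{E}_Q\!\left[ f(\mu) \mid s \right](s(z)),
\end{equation*}
and integrating against $s\#Q$ (equivalently against $Q$ after pulling back) gives $D_f(s\#T\Vert s\#Q) \le D_f(T\Vert Q)$.

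Combining the two steps yields $S_0 \in \mathcal{CH}(s\#S_1,\dots,s\#S_d)$ with $D_f(P \Vert S_0) \le \rho$, so $P \in \mathcal{P}_{f,\rho}$, completing the inclusion. The main obstacle is really the data-processing inequality; everything else is bookkeeping with definitions of pushforward and convex combinations. A minor care point is to handle the case $D_f(T\Vert Q) = \infty$ (vacuous for our hypothesis $\le \rho$) and to ensure $s\#T \ll s\#Q$, which follows immediately from $T \ll Q$.
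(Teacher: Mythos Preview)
Your proposal is correct and follows essentially the same route as the paper: pick $Q\in\mathcal{CH}(S_1,\dots,S_d)$ witnessing $T\in\mathcal{T}_{f,\rho}$, set $S_0=s\#Q$, observe that pushforward commutes with convex combinations, and apply the data processing inequality for $f$-divergences to conclude $D_f(s\#T\Vert s\#Q)\le D_f(T\Vert Q)\le\rho$. The only cosmetic difference is that the paper states the DPI for general transition kernels and proves it via the joint-versus-marginal identity plus conditional Jensen, whereas you invoke it for the deterministic map $s$ and sketch the conditional-expectation/Jensen argument directly; both are standard and equivalent here.
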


\begin{remark} \label{rmk::lma-relation-2Ps}
    According to \Cref{lma::relation-2Ps}, $\underset{P \in \mathcal{P}_{f,\rho}}{\sup} \mathcal{Q}(1-\alpha;P) \ge \underset{P \in \mathcal{P}}{\sup} \mathcal{Q}(1-\alpha;P)$. \Cref{lma::wc-scp-coverage} accordingly tells us that if we set $t = \underset{P \in \mathcal{P}_{f,\rho}}{\sup} \mathcal{Q}(1-\alpha;P)$, then for $(X_{n+1},Y_{n+1})$ drawn from any target distribution $T \in \mathcal{T}_{f,\rho}$, we have $\mathbb{P}\left( Y_{n+1} \in \widetilde{\mathcal{C}}(X_{n+1}) \right) \ge 1 - \alpha$. Our goal is now to solve Problem \eqref{problem::find-max-quantile} for the set $\mathcal{P}_{f,\rho}$.
\end{remark}

According to \Cref{rmk::lma-relation-2Ps}, we define the worst-case quantile function for the distribution set $\mathcal{P}_{f,\rho}$ as $\widetilde{\mathcal{Q}}(\alpha; \mathcal{P}_{f,\rho}) \coloneqq \underset{P \in \mathcal{P}_{f,\rho}}{\sup} \mathcal{Q}(\alpha; P)$. \Cref{rmk::lma-relation-2Ps} tells us that taking $t = \widetilde{\mathcal{Q}}(1-\alpha; \mathcal{P}_{f,\rho})$ produces a valid confidence set $\widetilde{\mathcal{C}}$. The next theorem allows us to express the worst-case quantile function in terms of the standard quantile function, which helps us to calculate the worst-case quantile efficiently.

\begin{theorem} \label{thm::wc-quantile-to-standard-quantile}
Let $F_1, \dots, F_d$ be the c.d.f.'s of the distributions $s\# S_1, \dots, s\# S_d$. Define the function $g_{f,\rho}: [0,1] \xrightarrow{} [0,1]$ as
\begin{equation*}
    g_{f,\rho}(\beta)\! \coloneqq\! \inf\! \left\{\! z \!\in\! [0,1] \bigg| \beta f\!\left( \frac{z}{\beta} \right) \!+\! (1\!-\!\beta) f\!\left( \frac{1-z}{1-\beta} \right) \!\le\! \rho \right\}
\end{equation*}
and define the inverse of $g_{f,\rho}$ as $g_{f,\rho}^{-1}(\tau) = \sup \{ \beta \in [0,1] \big| g_{f,\rho}(\beta) \le \tau \}$. Let $F_{\min}(x) \coloneqq \underset{1 \le i \le d}{\min} F_i(x)$ be a c.d.f., the following holds for all $\alpha \in (0,1)$:
\begin{equation*}
    \widetilde{\mathcal{Q}}(\alpha; \mathcal{P}_{f,\rho}) = \mathcal{Q}(g_{f,\rho}^{-1}(\alpha); F_{\min}).
\end{equation*}
\end{theorem}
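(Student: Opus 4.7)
The plan is to reduce the worst-case quantile over $\mathcal{P}_{f,\rho}$ to the quantile of a single \emph{pointwise-infimum c.d.f.}, and then to evaluate that infimum in closed form using the data-processing inequality for $f$-divergences applied to the binary partition $\{(-\infty,x],(x,\infty)\}$.

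First, I would define $\tilde F(x)\coloneqq\inf_{P\in\mathcal{P}_{f,\rho}} F_P(x)$ and show
\[
\widetilde{\mathcal{Q}}(\alpha;\mathcal{P}_{f,\rho})\;=\;\inf\{x:\tilde F(x)\ge\alpha\}.
\]
The $\ge$ direction follows from $F_P(x)\ge\tilde F(x)$ for every $P\in\mathcal{P}_{f,\rho}$ and the definition of the quantile; for the $\le$ direction, whenever $\tilde F(x)<\alpha$ some $P^{\star}\in\mathcal{P}_{f,\rho}$ witnesses $F_{P^{\star}}(x)<\alpha$, so $\mathcal{Q}(\alpha;P^{\star})>x$ and hence the sup exceeds $x$.

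Next comes the core calculation: for each $x$, I claim $\tilde F(x)=g_{f,\rho}(F_{\min}(x))$. Unpacking \eqref{eq::score-set},
\[
\tilde F(x)=\inf_{S_0\in\mathcal{CH}(s\#S_1,\dots,s\#S_d)}\ \inf_{S\,:\,D_f(S\|S_0)\le\rho}S((-\infty,x]).
\]
For the inner infimum, set $\beta=S_0((-\infty,x])$ and $z=S((-\infty,x])$. By the data-processing inequality for $f$-divergences applied to the partition $\{(-\infty,x],(x,\infty)\}$,
\[
D_f(S\|S_0)\;\ge\;\beta f\!\left(\tfrac{z}{\beta}\right)+(1-\beta)f\!\left(\tfrac{1-z}{1-\beta}\right),
\]
which forces $z\ge g_{f,\rho}(\beta)$ by the very definition of $g_{f,\rho}$. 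For achievability, I would use the two-piece mixture $S=z\cdot S_0(\,\cdot\,\mid(-\infty,x])+(1-z)\cdot S_0(\,\cdot\,\mid(x,\infty))$: its Radon--Nikodym derivative with respect to $S_0$ is the piecewise constant $z/\beta$ on $(-\infty,x]$ and $(1-z)/(1-\beta)$ on $(x,\infty)$, so $D_f(S\|S_0)$ equals exactly the right-hand side above, and saturates the bound at $z=g_{f,\rho}(\beta)$ (the infimum in the definition of $g_{f,\rho}$ is attained because $h_\beta(z):=\beta f(z/\beta)+(1-\beta)f((1-z)/(1-\beta))$ is continuous in $z$). For the outer infimum, $F_{S_0}(x)=\sum_i \lambda_i F_i(x)\ge F_{\min}(x)$ with equality at any vertex of the simplex supported on $\arg\min_i F_i(x)$. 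Commuting $g_{f,\rho}$ past $\inf_{S_0}$ requires $g_{f,\rho}$ to be non-decreasing, which I would verify from the joint convexity of $f$-divergence (writing $\mathrm{Bern}(\beta_1)$ as a convex combination involving $\mathrm{Bern}(\beta_2)$ and $\mathrm{Bern}(0)$ and transporting any feasible $z_2$ to a feasible $z_1\le z_2$).

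Finally, I would translate back to a quantile via generalized-inverse calculus:
\[
\mathcal{Q}(\alpha;\tilde F)=\inf\{x:g_{f,\rho}(F_{\min}(x))\ge\alpha\}=\inf\{x:F_{\min}(x)\ge g_{f,\rho}^{-1}(\alpha)\}=\mathcal{Q}(g_{f,\rho}^{-1}(\alpha);F_{\min}),
\]
where the middle equality uses the monotonicity of $g_{f,\rho}$ together with $g_{f,\rho}^{-1}(\alpha)=\sup\{\beta:g_{f,\rho}(\beta)\le\alpha\}$, with right-continuity of $g_{f,\rho}$ handling the single boundary value. The main obstacle will be Step~2: precisely, (i) exhibiting the two-piece $S$ and verifying it lies in $\mathcal{P}_{f,\rho}$ while saturating the data-processing lower bound, and (ii) proving monotonicity of $g_{f,\rho}$, without which the inner and outer infima cannot be combined. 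The quantile manipulation in the final step is routine once these two ingredients are in hand.
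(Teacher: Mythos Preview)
Your plan is correct and mirrors the paper's argument closely: the paper also first identifies the worst-case c.d.f.\ $\tilde F(x)=\inf_{P\in\mathcal{P}_{f,\rho}}F_P(x)$ (its Lemma~15), then shows $\tilde F(x)=g_{f,\rho}(F_{\min}(x))$ (its Lemma~18), and finally converts to quantiles via the equivalence $g_{f,\rho}(\beta)\ge\alpha\iff\beta\ge g_{f,\rho}^{-1}(\alpha)$ (its Lemma~14(5)). The one organizational difference is that the paper introduces a multi-argument function $g_{f,\rho}(\beta_1,\dots,\beta_d)$ with an inner infimum over the simplex and proves a separate reduction lemma (Lemma~17) showing it equals $g_{f,\rho}(\min_i\beta_i)$; your route, which fixes $S_0$, computes the inner infimum as $g_{f,\rho}(\beta)$, and then minimizes over $\beta\in[F_{\min}(x),\max_i F_i(x)]$ using monotonicity of $g_{f,\rho}$, is more direct and avoids that machinery. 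One point to watch in your achievability construction: the two-piece mixture $S=z\,S_0(\cdot\mid(-\infty,x])+(1-z)\,S_0(\cdot\mid(x,\infty))$ is undefined when $\beta=S_0((-\infty,x])\in\{0,1\}$; the paper handles this with a four-case split in Lemma~18, in particular replacing the ill-defined conditional by a Dirac mass $\delta_{x+1}$ when $\beta=1$ (so $S=(1-z)\delta_{x+1}+z\,S_0$), and you should do the same.
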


\subsection{Marginal Coverage Guarantee for Empirical Source Distributions} \label{subsec::coverage-for-empirical-distributions}
In the previous section, we presented marginal coverage guarantees when we have access to the population distributions of the scores for source domains. However, in practice, it is difficult or even impossible to access these population distributions. In this section, we provide marginal coverage guarantees even when we only have access to the empirical distributions, which is useful in practice.

For any $i \in [d]$, assume we have $m_i$ i.i.d. examples $\left\{ V_{ij} = s(X_{ij}, Y_{ij}) \right\}_{j=1}^{m_i}$ from the source distribution $S_i$. Further, suppose that $\hat{F}_i$ is the empirical c.d.f. corresponding to $F_i$, which is defined as $\hat{F}_i(x) = \frac{1}{m_i} \sum_{j=1}^{m_i} \mathbb{I}_{ (-\infty, x]}(V_{ij})$. Define $\hat{F}_{\min}(x) = \underset{1 \le i \le d}{\min} \hat{F}_i(x)$. We first provide an error bound when we estimate $F_{\min}$ with $\hat{F}_{\min}$.

\begin{proposition} \label{prop::Fmin-empirical-approx}
    Let $F_1, \dots, F_d$ be c.d.f.'s on $\mathbb{R}$, define $F_{\min}(x) = \underset{1 \le i \le d}{\min} F_i(x)$. Suppose $\hat{F}_1, \dots, \hat{F}_d$ are the empirical c.d.f.'s corresponding to $F_1, \dots, F_d$, defined with $m_1, \dots, m_d$ examples, respectively. Define $\hat{F}_{\min}(x) = \underset{1 \le i \le d}{\min} \hat{F}_i(x)$. Then, for any $\epsilon > 0$,
    \begin{equation*}
        \mathbb{P}\left( \underset{x \in \mathbb{R}}{\sup} \left| F_{\min}(x) - \hat{F}_{\min}(x) \right| > \epsilon \right) \le 2 \sum_{i=1}^d e^{-2m_i \epsilon^2},
    \end{equation*}
    where the probability is over the randomness of the examples that define the empirical c.d.f.'s.
\end{proposition}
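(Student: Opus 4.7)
The plan is to reduce the uniform deviation of $\hat{F}_{\min}$ from $F_{\min}$ to the uniform deviations of the individual empirical c.d.f.'s $\hat{F}_i$ from $F_i$, after which we can invoke a classical concentration inequality (Dvoretzky--Kiefer--Wolfowitz) on each coordinate and finish by a union bound.

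The first step is a pointwise ``stability of the minimum'' inequality: for any real numbers $a_1,\dots,a_d$ and $b_1,\dots,b_d$,
\begin{equation*}
\bigl| \min_{i} a_i - \min_{i} b_i \bigr| \le \max_{i} |a_i - b_i|.
\end{equation*}
This follows by noting that if $i^\star$ attains $\min_i a_i$ then $\min_i b_i \le b_{i^\star} \le a_{i^\star} + |a_{i^\star}-b_{i^\star}| \le \min_i a_i + \max_i |a_i - b_i|$, and symmetrically the other way. Applied pointwise at every $x$ with $a_i = F_i(x)$ and $b_i = \hat{F}_i(x)$, and then taking the supremum in $x$, this yields
\begin{equation*}
\sup_{x \in \mathbb{R}} \bigl| F_{\min}(x) - \hat{F}_{\min}(x) \bigr| \le \max_{1 \le i \le d} \sup_{x \in \mathbb{R}} \bigl| F_i(x) - \hat{F}_i(x) \bigr|.
\end{equation*}

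The second step converts this deterministic bound into a probability bound. The event $\{\sup_x |F_{\min}(x) - \hat{F}_{\min}(x)| > \epsilon\}$ is contained in the union $\bigcup_{i=1}^d \{\sup_x |F_i(x) - \hat{F}_i(x)| > \epsilon\}$, so a union bound gives
\begin{equation*}
\mathbb{P}\!\left( \sup_{x} \bigl| F_{\min}(x) - \hat{F}_{\min}(x) \bigr| > \epsilon \right) \le \sum_{i=1}^d \mathbb{P}\!\left( \sup_{x} \bigl| F_i(x) - \hat{F}_i(x) \bigr| > \epsilon \right).
\end{equation*}
Finally, since $\hat{F}_i$ is the empirical c.d.f. of $m_i$ i.i.d.\ samples from the distribution with c.d.f.\ $F_i$, the Dvoretzky--Kiefer--Wolfowitz inequality (with Massart's sharp constant) bounds each summand by $2 e^{-2 m_i \epsilon^2}$, and summing gives the claimed bound.

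There is no real obstacle here: the only place that requires care is the stability inequality for the minimum, which is elementary; the rest is a union bound plus an off-the-shelf DKW application, and independence across $i$ is not needed since the union bound suffices.
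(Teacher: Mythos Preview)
Your proof is correct and follows essentially the same approach as the paper: reduce the deviation of the pointwise minimum to the maximum of the individual deviations (the paper does this by an explicit $\arg\min$ argument on each side, which is exactly your ``stability of the minimum'' inequality), then apply a union bound and DKW coordinatewise. Your presentation is slightly more streamlined in isolating the stability inequality first, but the content is identical.
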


The above \Cref{prop::Fmin-empirical-approx} allows us to quantify the error caused by replacing the population distributions with the empirical distributions, which leads to the following marginal coverage guarantee for the prediction set $\widetilde{\mathcal{C}}$ that we have defined before.

\begin{theorem} [Marginal coverage guarantee for the empirical estimations] \label{thm::coverage-for-finite-approx}
    Assume $V_{n+1} = s(X_{n+1}, Y_{n+1})\sim P \in \mathcal{P}_{f,\rho}$ is independent of $\{ V_{ij}\}_{i,j=1}^{d, m_i}$ where $\{ V_{ij}\}_{j=1}^{m_i} \overset{i.i.d.}{\sim} s\#S_i$ for $i \in [d]$. Suppose $\rho^\star = \underset{P_0 \in \mathcal{CH}_s}{\inf} D_f(P\Vert P_0) \le \rho$ where $\mathcal{CH}_s = \mathcal{CH}(s\# S_1, \cdots, s\# S_d)$. Let $\hat{F}_{\min}$ be defined as in \Cref{prop::Fmin-empirical-approx} and let $\hat{S}_1, \dots, \hat{S}_d$ be the empirical distributions of $S_1, \dots, S_d$ respectively. If we set $t = \widetilde{\mathcal{Q}}(1-\alpha; \hat{\mathcal{P}}_{f,\rho}) = \mathcal{Q}( g_{f,\rho}^{-1}(1-\alpha) ; \hat{F}_{\min} )$, then for any $\epsilon > 0$, we obtain the following marginal coverage guarantee for $\widetilde{\mathcal{C}}$:
    \begin{equation*}
        \begin{aligned}
            \mathbb{P}\!\left(\!\! Y_{n+1} \!\!\in\! \widetilde{\mathcal{C}} (X_{n+1}) \!\right) &\!\!\ge\!\! \left( \!\!1 \!\!-\!\! 2 \!\sum_{i=1}^d \!e^{-2 m_i \epsilon^2} \!\!\right)\!\! g_{f,\rho^\star}\!\!\left( \!g_{f,\rho}^{-1}(1\!\!-\!\!\alpha) \!-\! \epsilon\! \right),
        \end{aligned}
    \end{equation*}    
    where the randomness is over the choice of the source examples and $(X_{n+1}, Y_{n+1})$ and
    \begin{equation*}
        \hat{\mathcal{P}}_{f,\rho} \!\coloneqq\!\!\left\{ \!S \Big| \exists S_0 \!\!\in\! \mathcal{CH}(s\# \hat{S}_1, \cdots, s\# \hat{S}_d)\ \text{s.t.} \ D_f\!(S\Vert S_0) \!\!\le\!\! \rho\!\right\}.
    \end{equation*}
\end{theorem}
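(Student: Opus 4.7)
The plan is to combine the uniform deviation bound of \Cref{prop::Fmin-empirical-approx} with the machinery underlying \Cref{thm::wc-quantile-to-standard-quantile} to lift the population-level worst-case coverage statement to the empirical setting. I first introduce the ``good event''
\[
E_\epsilon \coloneqq \Bigl\{ \sup_{x \in \mathbb{R}} \bigl| F_{\min}(x) - \hat{F}_{\min}(x) \bigr| \le \epsilon \Bigr\},
\]
so that $\mathbb{P}(E_\epsilon) \ge 1 - 2\sum_{i=1}^d e^{-2 m_i \epsilon^2}$ by \Cref{prop::Fmin-empirical-approx}. On $E_\epsilon$, the choice $t = \mathcal{Q}(g_{f,\rho}^{-1}(1-\alpha); \hat{F}_{\min})$ together with right-continuity of the empirical c.d.f.\ yields $\hat{F}_{\min}(t) \ge g_{f,\rho}^{-1}(1-\alpha)$, and hence $F_{\min}(t) \ge g_{f,\rho}^{-1}(1-\alpha) - \epsilon$.

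Next I would invoke the two-point reduction that underlies \Cref{thm::wc-quantile-to-standard-quantile}: for any distributions $P, P_0$ on $\mathbb{R}$ with $D_f(P\Vert P_0) \le \rho^\star$ and any $t \in \mathbb{R}$, the data-processing inequality applied to the binary partition $\{(-\infty,t], (t,\infty)\}$ forces the pair $\bigl(P((-\infty,t]), P_0((-\infty,t])\bigr)$ to satisfy the two-point $f$-divergence constraint appearing in the definition of $g_{f,\rho^\star}$, giving $P((-\infty,t]) \ge g_{f,\rho^\star}\bigl(P_0((-\infty,t])\bigr)$. I then connect $P_0$ to $F_{\min}$: by the infimum definition of $\rho^\star$, for every $\delta>0$ there exists $P_0^\delta \in \mathcal{CH}_s$ with $D_f(P\Vert P_0^\delta) \le \rho^\star + \delta$; writing $P_0^\delta = \sum_i \lambda_i (s\#S_i)$ gives $P_0^\delta((-\infty,t]) = \sum_i \lambda_i F_i(t) \ge F_{\min}(t)$. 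Letting $\delta \downarrow 0$ and using continuity of $g_{f,\rho}$ in $\rho$ together with monotonicity of $g_{f,\rho^\star}$ in its first argument yields the key inequality $P((-\infty,t]) \ge g_{f,\rho^\star}(F_{\min}(t))$.

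Chaining these bounds on $E_\epsilon$ and conditioning on the source sample gives
\[
\mathbb{P}\bigl(V_{n+1} \le t \,\big|\, \{V_{ij}\}\bigr) \ge g_{f,\rho^\star}(F_{\min}(t)) \ge g_{f,\rho^\star}\!\bigl( g_{f,\rho}^{-1}(1-\alpha) - \epsilon \bigr),
\]
where the right-hand side is deterministic. Taking full expectations, discarding the (nonnegative) contribution from the complement of $E_\epsilon$, substituting the lower bound on $\mathbb{P}(E_\epsilon)$, and noting that $V_{n+1} \le t$ is equivalent to $Y_{n+1} \in \widetilde{\mathcal{C}}(X_{n+1})$ yields the claimed inequality.

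The main obstacle is the middle step: cleanly extracting and justifying the inequality $P(A) \ge g_{f,\rho}(P_0(A))$ whenever $D_f(P\Vert P_0)\le \rho$, together with verifying monotonicity of $g_{f,\rho}$ in both $\beta$ and $\rho$ and handling the fact that $\rho^\star$ is defined as an infimum that need not be attained (so a continuity/limit argument in $\rho$ is required). Everything else is a routine composition of monotone inequalities restricted to the good event produced by \Cref{prop::Fmin-empirical-approx}.
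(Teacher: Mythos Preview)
Your proposal is correct and follows essentially the same route as the paper: condition on the calibration sample, lower bound the conditional coverage on the good event $E_\epsilon$ produced by \Cref{prop::Fmin-empirical-approx}, and then take the expectation while discarding the complement. The paper obtains the key inequality $F(t)\ge g_{f,\rho^\star}(F_{\min}(t))$ by invoking \Cref{lma::worst-case-F-multi-g-relation} (the identity $\widetilde{F}(t;\mathcal{P}_{f,\rho})=g_{f,\rho}(F_{\min}(t))$) rather than redoing the two-point data-processing reduction inline, and it silently treats the infimum defining $\rho^\star$ as attained; your explicit $\delta\downarrow 0$ continuity argument is a welcome tightening of that step but not a different idea.
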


By Lemma 14 in the Appendix, $g_{f,\rho}(\beta)$ is non-increasing in $\rho$ and non-decreasing in $\beta$, so $g_{f,\rho^\star}(g_{f,\rho}^{-1}(1\!-\!\alpha) \!-\! \epsilon) \!\ge\! g_{f,\rho}(g_{f,\rho}^{-1}(1\!-\!\alpha) \!-\! \epsilon)$. In practice, we do not know $\rho^\star$, so we use $g_{f,\rho}(g_{f,\rho}^{-1}(1\!-\!\alpha) \!-\! \epsilon)$ instead. Since $g_{f,\rho}(g_{f,\rho}^{-1}(1\!-\!\alpha) \!-\! \epsilon) \!\le\! g_{f,\rho}(g_{f,\rho}^{-1}(1\!-\!\alpha)) \!=\! 1-\alpha$, we get guaranteed coverage $\left(1\!-\!2\sum_{i=1}^d e^{-2m_i \epsilon^2}\right) g_{f,\rho}(g_{f,\rho}^{-1}(1\!-\!\alpha) \!-\! \epsilon) \!\le\! 1\!-\!\alpha$. To achieve a marginal coverage with the level of at least $1-\alpha$, we need to correct the output set by replacing $\alpha$ with some $\alpha^\prime < \alpha$ when running our confidence set predictor. The following corollary tells us how to choose $\alpha^\prime$ to correct the prediction set.

\begin{corollary} [Correct the prediction set to get a $(1-\alpha)$ marginal coverage] \label{cor::correct-coverage-guarantee}
    Let $(X_{n+1}, Y_{n+1})$, $\hat{F}_{\min}$, $\hat{\mathcal{P}}_{f,\rho}$ be defined as in \Cref{thm::coverage-for-finite-approx}. For arbitrary $\epsilon > 0$, if we set $t = \widetilde{\mathcal{Q}}(1-\alpha^\prime; \hat{\mathcal{P}}_{f,\rho}) = \mathcal{Q}\left( g_{f,\rho}^{-1}(1-\alpha^\prime) ; \hat{F}_{\min} \right)$, where
    \begin{equation*}
        \alpha^\prime = 1 - g_{f,\rho}\left( \epsilon + g_{f,\rho}^{-1}\left( \frac{1-\alpha}{1 - 2 \sum_{i=1}^d e^{-2 m_i \epsilon^2}} \right) \right),
    \end{equation*}
    then we obtain the following marginal coverage guarantee:
    \begin{equation*}
         \mathbb{P}\left( Y_{n+1} \in \widetilde{\mathcal{C}} (X_{n+1}) \right) \ge 1 - \alpha.
    \end{equation*}
\end{corollary}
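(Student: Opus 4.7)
The plan is to instantiate the marginal coverage bound of \Cref{thm::coverage-for-finite-approx} at the level $\alpha^\prime$ in place of $\alpha$, then algebraically invert the resulting inequality to find the largest $\alpha^\prime$ for which the bound reaches $1-\alpha$. Concretely, applying \Cref{thm::coverage-for-finite-approx} with significance level $\alpha^\prime$ gives
\begin{equation*}
    \mathbb{P}\!\left( Y_{n+1} \in \widetilde{\mathcal{C}}(X_{n+1}) \right) \ge \left( 1 - 2 \sum_{i=1}^d e^{-2 m_i \epsilon^2} \right) g_{f,\rho^\star}\!\left( g_{f,\rho}^{-1}(1-\alpha^\prime) - \epsilon \right),
\end{equation*}
so it suffices to choose $\alpha^\prime$ making the right-hand side at least $1-\alpha$.

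The first step is to eliminate the unknown $\rho^\star$. Since $\rho^\star \le \rho$ and, by Lemma~14 (cited in the paper just before the corollary), $g_{f,\rho}(\beta)$ is non-increasing in $\rho$, we have $g_{f,\rho^\star}(\beta) \ge g_{f,\rho}(\beta)$ for every admissible $\beta$. It is therefore enough to force
\begin{equation*}
    \left( 1 - 2 \sum_{i=1}^d e^{-2 m_i \epsilon^2} \right) g_{f,\rho}\!\left( g_{f,\rho}^{-1}(1-\alpha^\prime) - \epsilon \right) \ge 1 - \alpha.
\end{equation*}

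The second step inverts this inequality. Dividing by the Hoeffding factor yields
\begin{equation*}
    g_{f,\rho}\!\left( g_{f,\rho}^{-1}(1-\alpha^\prime) - \epsilon \right) \ge \frac{1-\alpha}{1 - 2 \sum_{i=1}^d e^{-2 m_i \epsilon^2}}.
\end{equation*}
Using that $g_{f,\rho}$ is non-decreasing in $\beta$ (again by Lemma~14), I apply $g_{f,\rho}^{-1}$ to both sides, then add $\epsilon$, then apply $g_{f,\rho}$ again, yielding the requirement
\begin{equation*}
    1-\alpha^\prime \ge g_{f,\rho}\!\left( \epsilon + g_{f,\rho}^{-1}\!\left( \frac{1-\alpha}{1 - 2 \sum_{i=1}^d e^{-2 m_i \epsilon^2}} \right) \right).
\end{equation*}
Taking $\alpha^\prime$ equal to the exact value in the statement turns this into an equality, delivering the desired coverage bound.

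The main subtlety, rather than any real obstacle, is bookkeeping around the generalized inverses: one must check that $(1-\alpha)/(1-2\sum_i e^{-2 m_i \epsilon^2})$ lies in the effective range $[0,1]$ of $g_{f,\rho}^{-1}$ so that the composition $g_{f,\rho}\circ g_{f,\rho}^{-1}$ actually returns its argument (or at least dominates it in the monotone-envelope sense used above). This is implicit in the statement through the tacit assumption that the Hoeffding correction does not exceed $1-\alpha$, i.e.\ $2\sum_i e^{-2 m_i \epsilon^2} < \alpha$; otherwise $\alpha^\prime$ would be negative and the trivial predictor $\widetilde{\mathcal{C}}(x) = \mathcal{Y}$ recovers the bound. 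Everything else is a direct consequence of \Cref{thm::coverage-for-finite-approx} combined with the monotonicity of $g_{f,\rho}$ in its two arguments.
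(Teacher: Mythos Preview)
Your proposal is correct and follows essentially the same approach as the paper: apply \Cref{thm::coverage-for-finite-approx} at level $\alpha'$, replace $g_{f,\rho^\star}$ by $g_{f,\rho}$ via monotonicity in $\rho$, and then use the monotonicity of $g_{f,\rho}$ in $\beta$ to unwind the compositions. The only cosmetic difference is that the paper plugs in the given $\alpha'$ and simplifies forward (using $g_{f,\rho}^{-1}\!\circ g_{f,\rho}=\mathrm{id}$ and $g_{f,\rho}\!\circ g_{f,\rho}^{-1}=\mathrm{id}$ without comment), whereas you derive $\alpha'$ by inverting the bound; your explicit remark on the generalized-inverse bookkeeping and the implicit constraint $2\sum_i e^{-2m_i\epsilon^2}<\alpha$ is a welcome addition that the paper leaves tacit.
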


\begin{remark} \label{rmk::solve-g-inverse}
    \Cref{cor::correct-coverage-guarantee} tells us that we can take $t = \mathcal{Q}\left( g_{f,\rho}^{-1}(1-\alpha^\prime) ; \hat{F}_{\min} \right) = \mathcal{Q}\left( \epsilon + g_{f,\rho}^{-1}\left( \frac{1-\alpha}{1 - 2 \sum_{i=1}^d e^{-2 m_i \epsilon^2}} \right); \hat{F}_{\min} \right)$ to get a marginal coverage guarantee with confidence level $1-\alpha$. When $f(\cdot), s(\cdot, \cdot)$ are chosen and the numbers of examples that are used to estimate the source distributions, i.e., $m_1, \dots, m_d$, are given, we solve the following optimization problem to find a desired $t$.
    \begin{equation*} 
        \begin{aligned}
            \underset{0 < \epsilon \le 1}{\min}&\ \  \mathcal{Q}\left( \epsilon + g_{f,\rho}^{-1}\left( \frac{1-\alpha}{1 - 2 \sum_{i=1}^d e^{-2 m_i \epsilon^2}} \right); \hat{F}_{\min}\right),\\
            \text{  s.t.  } & \epsilon + g_{f,\rho}^{-1}\left( \frac{1-\alpha}{1 - 2 \sum_{i=1}^d e^{-2 m_i \epsilon^2}} \right) \le 1.
        \end{aligned}
    \end{equation*}
    Since the quantile function $\mathcal{Q}\left(\cdot; \hat{F}_{\min}\right)$ is non-decreasing, let $h(\epsilon) = \epsilon + g_{f,\rho}^{-1}\left( \frac{1-\alpha}{1 - 2 \sum_{i=1}^d e^{-2 m_i \epsilon^2}} \right)$, we solve the following problem instead:
    \begin{equation*} 
        \min\ \  h(\epsilon)  \text{    s.t.  } 0 < \epsilon \le 1, h(\epsilon) \le 1.
    \end{equation*}
    For some choices of $f$, the functions $g_{f,\rho}$ and $g_{f,\rho}^{-1}$ have closed forms (please refer to the examples in Section \ref{subsec::examples}). For general $f$ that we do not have a closed form of $g_{f,\rho}^{-1}$, the following lemma tells us that we can use a binary search algorithm to efficiently compute the value of $g_{f,\rho}^{-1}(\tau)$ for a given $\tau$.
\end{remark}
\begin{lemma} [\citep{DBLP:journals/corr/abs-2008-04267}, The form of $g_{f,\rho}^{-1}$ that can be efficiently solved] \label{lma::g-inverse-form}
    Let $g_{f,\rho}, g_{f,\rho}^{-1}$ be defined as in \Cref{thm::wc-quantile-to-standard-quantile}. Then, for any $\tau \in [0,1]$, we have:
    \begin{equation*}
        g_{f,\rho}^{-1}(\tau) \!=\! \sup \!\left\{ \beta \!\in\! [\tau, 1]\left| \beta f\!\left( \frac{\tau}{\beta} \right) \!+\! (1\!-\!\beta) f\!\left( \frac{1\!-\!\tau}{1\!-\!\beta} \right) \!\le\! \rho \right. \!\right\}.
    \end{equation*}
\end{lemma}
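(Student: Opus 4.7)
The plan is to analyze the convex sublevel sets of the inner function
\begin{equation*}
\phi_\beta(z) \coloneqq \beta f\!\left( z/\beta \right) + (1-\beta) f\!\left( (1-z)/(1-\beta) \right),
\end{equation*}
which is the quantity compared to $\rho$ in the definition of $g_{f,\rho}$. By \Cref{rmk::def-f-divergence} I may assume without loss of generality that $f \ge 0$, $f(1) = 0$, and $f'(1) = 0$. Under this normalization $\phi_\beta$ is convex on $[0,1]$ (a sum of convex compositions with affine maps in $z$), everywhere non-negative, and satisfies $\phi_\beta(\beta) = \beta f(1) + (1-\beta) f(1) = 0$. Hence the $\rho$-sublevel set $\{ z \in [0,1] : \phi_\beta(z) \le \rho \}$ is a closed interval $[z_-(\beta), z_+(\beta)]$ containing $\beta$, and by definition $g_{f,\rho}(\beta) = z_-(\beta) \le \beta$.

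The consequence $g_{f,\rho}(\beta) \le \beta$ then does two jobs at once. First, it justifies restricting the outer supremum to $\beta \in [\tau, 1]$: every $\beta \in [0, \tau]$ automatically satisfies $g_{f,\rho}(\beta) \le \beta \le \tau$, so such $\beta$'s only contribute values $\le \tau$ to the supremum, whereas $\beta = \tau$ is itself feasible, making the restricted supremum at least $\tau$ and hence equal to the unrestricted one. Second, for any $\beta \in [\tau, 1]$ the constraint $g_{f,\rho}(\beta) \le \tau$ is equivalent to $\phi_\beta(\tau) \le \rho$: indeed $z_-(\beta) \le \tau$ together with $\tau \le \beta \le z_+(\beta)$ is exactly the statement $\tau \in [z_-(\beta), z_+(\beta)]$, i.e., membership in the $\rho$-sublevel set. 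Substituting the explicit form of $\phi_\beta(\tau)$ then yields the claimed identity.

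I expect the main obstacle to be the structural observation that $\phi_\beta$ is unimodal with unique minimum value $0$ attained at $z = \beta$, since both reductions above hinge on it. This observation requires the normalization of $f$ from \Cref{rmk::def-f-divergence}; without $f \ge 0$ one cannot conclude that $\phi_\beta(\beta) = 0$ is a global minimum, and the $\rho$-sublevel set could fail to be an interval containing $\beta$, breaking the equivalence between $z_-(\beta) \le \tau$ and $\phi_\beta(\tau) \le \rho$. Boundary cases $\beta \in \{0, 1\}$ (where a term of $\phi_\beta$ takes the indeterminate form $0 \cdot f(\cdot/0)$) are handled by the standard $f$-divergence convention $0 \cdot f(a/0) = a \lim_{t \to \infty} f(t)/t$, which preserves lower semicontinuity and hence closedness of the sublevel set.
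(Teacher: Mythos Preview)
Your proposal is correct and follows essentially the same approach as the paper: both arguments hinge on the observation that $\phi_\beta$ (the paper's $h(\beta,\cdot)$) is convex with global minimum $0$ at $z=\beta$, which yields $g_{f,\rho}(\beta)\le\beta$ (justifying the restriction to $\beta\in[\tau,1]$) and, for $\beta\ge\tau$, the equivalence $g_{f,\rho}(\beta)\le\tau\iff\phi_\beta(\tau)\le\rho$. Your sublevel-set interval phrasing is a clean repackaging of the paper's step $\inf_{0\le z\le\tau}h(\beta,z)=h(\beta,\tau)$, but the content is the same.
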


\subsection{Examples} \label{subsec::examples}
In this section, we present some examples of calculating $g_{f,\rho}$ and $g_{f,\rho}^{-1}$ for some important $f$-divergences.

\begin{example} [$\chi^2$-divergence] \label{example::chi-square-div}
    Let $f(t) = (t-1)^2$; then, $D_f(P\Vert Q) = \mathbb{E}_Q \left[ \left(\frac{dP}{dQ} -1 \right)^2 \right] = \mathbb{E}_Q \left[ \left(\frac{dP}{dQ} \right)^2 -1 \right]$ is the $\chi^2$-divergence. In this case, we have $g_{f,\rho}(\beta) = \left( \beta - \sqrt{\rho \beta(1-\beta)} \right)_+$, where $(x)_+ = \max\{0,x \}$. $g_{f,\rho}^{-1}(\tau)$ is the solution of the following optimization problem:
    \begin{equation*}
        \max \beta \text{   s.t. } \left\{
                                        \begin{array}{l}
                                        \frac{\rho}{\rho + 1} \le \beta \le 1 \\
                                        \beta - \sqrt{\rho \beta(1-\beta)} \le \tau
                                        \end{array} \right..
    \end{equation*}
\end{example}

\begin{example} [Total variation distance, \citep{DBLP:journals/corr/abs-2008-04267}] \label{example::TV}
    Let $f(t) = \frac{1}{2} |t-1|$; then, $D_f(P\Vert Q) = \mathbb{E}_Q \left[ \frac{1}{2} \left|\frac{dP}{dQ} -1 \right| \right]$ is the total variation distance. In this case, we can provide analytic forms for $g_{f, \rho}$ and $g_{f, \rho}^{-1}$:
    \begin{equation*}
        g_{f, \rho}(\beta) = (\beta - \rho)_+, \ \ \ \ g_{f, \rho}^{-1}(\tau) = \min \{ \tau + \rho, 1 \}.
    \end{equation*}
\end{example}

\begin{example} [Kullback-Leibler divergence] \label{example::KL}
    Let $f(t) = t \log t$; then, $D_f(P\Vert Q) = \mathbb{E}_Q \left[ \frac{dP}{dQ} \log\left(\frac{dP}{dQ}\right) \right]$ is the Kullback-Leibler (KL) divergence \cite{KL-divergence}. Unfortunately, we cannot provide the analytic forms of $g_{f, \rho}$ and $g_{f, \rho}^{-1}$ for KL-divergence. Fortunately, according to \Cref{thm::wc-quantile-to-standard-quantile} and \Cref{rmk::solve-g-inverse}, we can compute the values $g_{f, \rho}(\beta)$ and $g_{f, \rho}^{-1}(\tau)$ by solving a one-dimensional convex optimization problem, which can be solved efficiently using binary search.
\end{example}

\begin{figure}[!ht]
  \centering
  \includegraphics[width=0.47\textwidth]{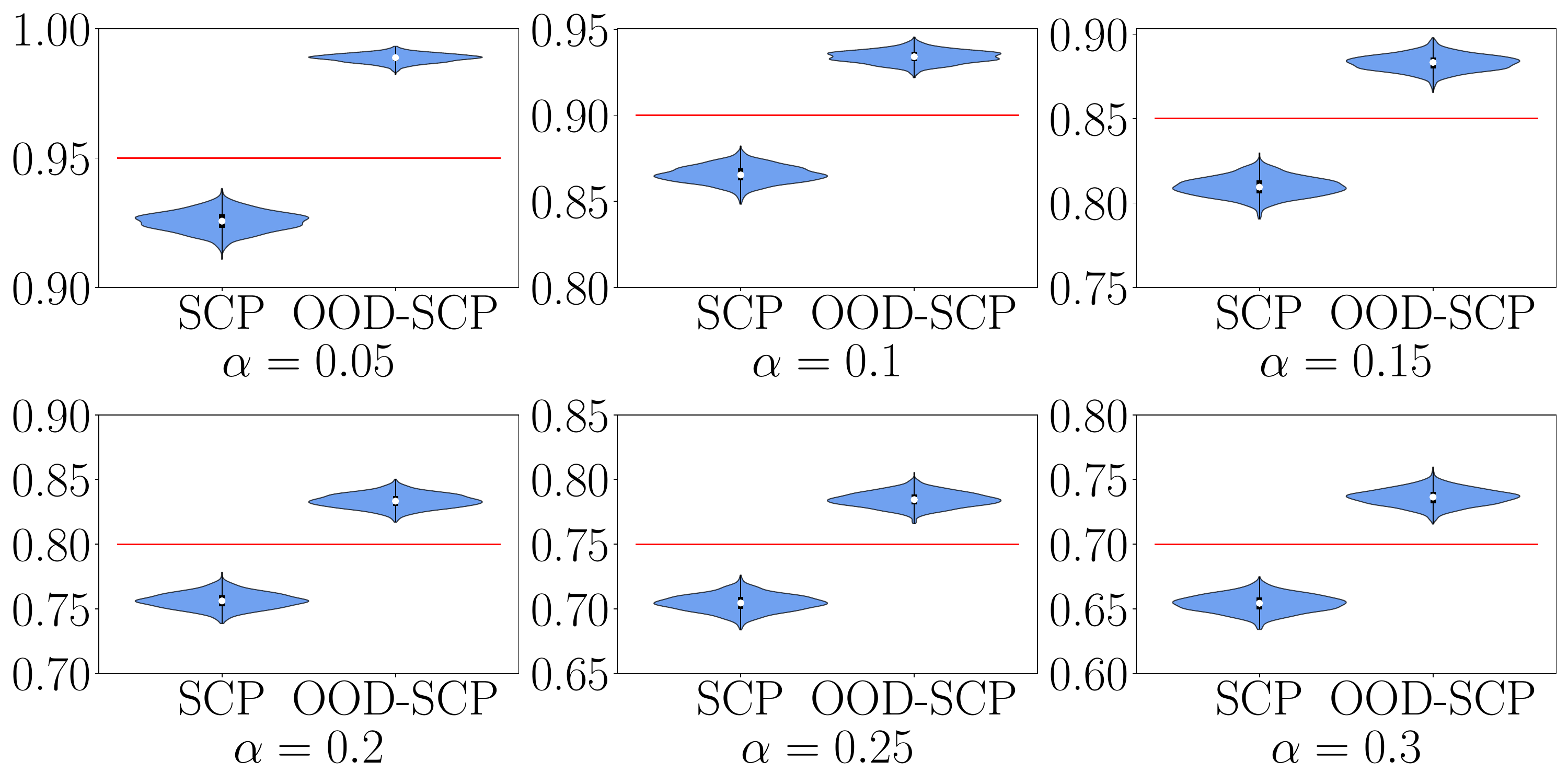}
  \caption{The violin plots for the coverage of the $1000$ runs under the same data generation settings as in Section \ref{sec::motivating-experiment}. We show results for $\alpha=\{0.05, 0.1, 0.15, 0.2, 0.25, 0.3\}$. Here, the red lines are the marginal coverage guarantees that we wish to achieve. The white point represents the median, while the two endpoints of the thick line are the $0.25$ quantile and the $0.75$ quantile.} \label{fig::exp-source1-coverage}
\end{figure}

\begin{figure}[!ht]
  \centering
  \includegraphics[width=0.47\textwidth]{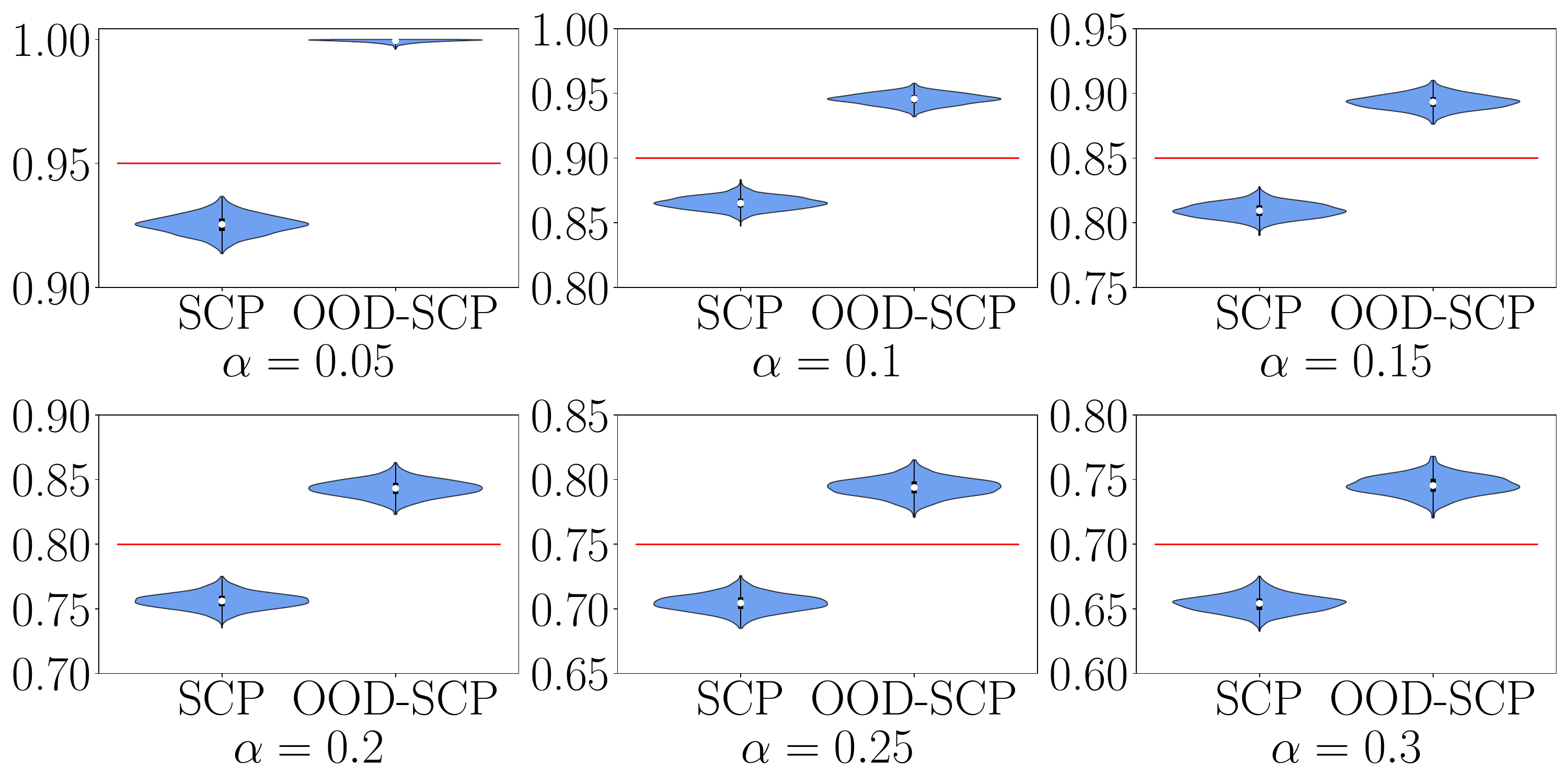}
  \caption{The violin plots for the coverage of the $1000$ runs for the multi-source OOD confidence set prediction task. We show results for $\alpha=\{0.05, 0.1, 0.15, 0.2, 0.25, 0.3\}$. Here, the red lines are the marginal coverage guarantees that we wish to achieve. The white point represents the median, while the two endpoints of the thick line are the $0.25$ quantile and the $0.75$ quantile.} \label{fig::exp-source2-coverage}
\end{figure}

\section{Experiments} \label{sec::experiments}
In this section, we use simulated data to verify our theory and the validity of our constructed confidence set predictor (referred to as \textbf{OOD-SCP} in the remainder of this paper). We consider two cases: first, we verify the validity of OOD-SCP using the same settings as in Section \ref{sec::motivating-experiment}; then, we construct a multi-source OOD confidence set prediction task and show that OOD-SCP is valid for this task.

According to \Cref{fig::exp-source1-coverage}, unlike standard SCP, for all values of $\alpha$, the violin for OOD-SCP is above the desired coverage line, which shows that OOD-SCP is empirically valid.

We next consider a multi-source OOD confidence set prediction task. Similar to Section \ref{sec::motivating-experiment}, we consider the regression problem and set $\mathcal{X} = \mathbb{R}^l, \mathcal{Y} = \mathbb{R}$. Define the oracle linear predictor $L: \mathcal{X} \xrightarrow{} \mathcal{Y}$ as $L(x) = \left< w^\star, x \right> + b^\star$, where $w^\star \in \mathbb{R}^l$ and $b^\star \in \mathbb{R}$. We define the marginal distribution of $X$ for the source domains $S_1$ and $S_2$ as 
\begin{equation*}
    S_{1X} = \mathcal{N}(\mu_1, \sigma_{s,x}^2 I_l), \ \ \ \ S_{2X} = \mathcal{N}(\mu_2, \sigma_{s,x}^2 I_l)
\end{equation*}
respectively, where $\mu_1, \mu_2 \in \mathbb{R}^l$ are the mean vectors, $\sigma_{s,x} > 0$ is a scalar, and $I_l \in \mathbb{R}^{l \times l}$ is the identity matrix with dimension $l \times l$. We define $Y|X=x \sim \mathcal{N}(L(x), \sigma_{s,y}^2)$ for both $S_1$ and $S_2$. For the target domain $T$, we define the marginal distribution of $X$ as $T_X = \frac{S_{1X} + S_{2X}}{2}$ and the conditional distribution of $Y$ given $X$ as $Y|X=x \sim \mathcal{N}(L(x), \sigma_{t,y}^2)$. Here, $\sigma_{s,y}, \sigma_{t,y} > 0$ are the standard deviations and $\sigma_{s,y} \ne \sigma_{t,y}$.

Similar to Section \ref{sec::motivating-experiment}, we sample $\frac{m_\text{train}}{2}$ examples from $S_1$ and $\frac{m_\text{train}}{2}$ examples from $S_2$ to train a linear predictor $\hat{L}(x) = \left< \hat{w}, x \right> + \hat{b}$, where $\hat{w} \in \mathbb{R}^l$ and $\hat{b} \in \mathbb{R}$. We then define the nonconformity score as $s(x,y) = | \hat{L}(x) - y |$. We sample $\frac{n}{2}$ examples from $S_1$, and $\frac{n}{2}$ examples from $S_2$ to construct the prediction set $\widetilde{\mathcal{C}}(x)$ and sample $m_\text{test}$ examples from $T$ to form the test data.

\Cref{fig::exp-source2-coverage} shows the results for the multi-source OOD confidence set prediction task. From the figure, we can see that the violins for the standard SCP are under the desired coverage lines, which means that the standard SCP is invalid in this case. By contrast, the violins for OOD-SCP are above the desired coverage lines, indicating that OOD-SCP is valid, which validates \Cref{cor::correct-coverage-guarantee}.

The reason we do not do experiments on real datasets is that do not know how to set the value of $\rho$ for the existing OOD datasets. Our main claim is that \textbf{when the target domain satisfies $T\in\mathcal{T} _{f,\rho}$, the coverage of our method is guaranteed}. However, we claim \textbf{it is acceptable}. In many fields, we face the same problem.

In adversarial robustness \citep{DBLP:journals/corr/SzegedyZSBEGF13}, the theories (for example \citep{DBLP:conf/colt/MontasserHS19}) provide an upper bound of the test adversarial robustness $\underset{(x,y)\sim D}{\mathbb{P}}[\exists \Vert\delta\Vert\le\epsilon: h(x+\delta) \ne y ]$, where $h$ is a classifier. The results just tell us that we have the guarantee for the test accuracy \textbf{if the test perturbation $\delta$ satisfies $\Vert\delta\Vert\le\epsilon$}. However, what if $\Vert\delta\Vert > \epsilon$? It is out of the scope of their theories.

For distributional robustness optimization (DRO), the theories \citep{DBLP:conf/nips/LeeR18} prove that \textbf{if the test distribution is in a Wasserstein ball with radius $r$}, then the test risk can be upper bounded. Formally, $\underset{D\in W(r)}{\max} \underset{(x,y)\sim D}{\mathbb{P}}[h(x) \ne y ]$ is upper bounded, where $W(r)$ is a Wasserstein ball with radius. They do not know how to set $r$ to make $W(r)$ contain the test distribution either, however, this does not overshadow their contribution to the DRO community. In other words, the issues of $\rho$ do not overshadow our contribution to the OOD community.

\section{Discussions} \label{sec::discussion}
Our work is an extension of \citep{DBLP:journals/corr/abs-2008-04267} to the multi-domain case. In this section, we discuss the differences between our work and \citep{DBLP:journals/corr/abs-2008-04267}.

\subsection{The Necessity of Our Extension}
In the multi-source setting, to make use of all the source domains, a trivial method is to regard the mixture of the source domains,
as a domain $S = \sum_{i=1}^d \lambda_i S_i$ and use the method in \citep{DBLP:journals/corr/abs-2008-04267}. However, there are two issues:
\begin{itemize}
    \item Given the empirical data from $S_1, \dots, S_d$, we don’t know the exact values of $\lambda_1, \!\dots, \!\lambda_d$ for the mixed domain, so we don't know the set $\bar{\mathcal{P}} = \left\{ s\#T | D_f(T\Vert S) \le \rho \right\}$ for a given $\rho$. So we don't know the set that we are giving a coverage guarantee for.
    \item We may be not able to provide a coverage guarantee for data from one of the source domains. Take KL-divergence as an example, then drawing from $S$ can be regarded as first drawing an index $I$ from $\lambda$ and then drawing an example from $S_I$. $S_i$ can be seen as drawn from the same process with $\lambda=e_i$, where $e_i = (0,\dots, 0, 1, 0, \dots, 0) \in \mathbb{R}^d$ with only the $i$-th element being $1$. By the chain rule, $KL(S_i\Vert S) = KL(e_i\Vert \pmb{\lambda}) + \mathbb{E}_{j \sim e_i} KL(S_j \Vert S_j) = \log (1/\lambda_i)$. If $\rho < \max_i \log (1/\lambda_i)$, then there exists a $S_i$ s.t. $KL(S_i\Vert S) > \rho$, i.e., \textbf{we can not even get a coverage guarantee for the source domain $S_i$, which is unacceptable!} The problem gets worse if the source domain number $d$ becomes larger since $\max_i \log (1/\lambda_i) \ge \log d$. However, in our generalization, there is no such problem even if we choose $\rho=0$, which means \textbf{the method in \citep{DBLP:journals/corr/abs-2008-04267} is not compatible with the multi-source setting, so our extension is necessary}.
\end{itemize}

\subsection{The Difference in Proof Skills}
In fact, our Theorem 6 is an extension of \citep{DBLP:journals/corr/abs-2008-04267} to the OOD setting and mainly depends on Lemmas 17 and 18. Lemma 17 helps us reduce multi-input $g_{f,\rho}$ to a single-input case. The main idea of the proof of Lemma 18 comes from the argument in \citep{DBLP:journals/corr/abs-2008-04267}, however, \textbf{the extension is non-trivial}. In Lemma 18, let $h(z,\beta) \!=\! \beta f(z/\beta) + (1\!-\!\beta) f((1\!-\!z)/(1\!-\!\beta))$ we use the multi-input $g_{f,\rho}(\beta_1, \!\cdots\!, \beta_d)\!=\!\inf\{z\!\in\![0,1] | \inf_{\lambda \in \Delta^{d-1}} h(z,\sum_{i=1}^d \!\lambda_i \beta_i) \!\le\! \rho \}$, which involves taking infimum w.r.t. $\lambda$ and is much more complicated than the single-input case in \citep{DBLP:journals/corr/abs-2008-04267}. We construct a set $\mathcal{P}_{f,\rho}^*$ that is more complicated than that in \citep{DBLP:journals/corr/abs-2008-04267} and the proof is more difficult. Moreover, due to multiple inputs and the $\inf_{\lambda \in \Delta^{d-1}}$ operator, we need to consider 4 cases according to whether each $F_i(t)$ is $0$ or $1$.

Our Theorem 8 and its corresponding Corollary 9 are novel and quite different from Corollaries 2.1 and 2.2 in \citep{DBLP:journals/corr/abs-2008-04267}. The common point is that they all consider finite sample approximation. The proof of Corollary 2.1 in \citep{DBLP:journals/corr/abs-2008-04267} relies on the exchangeability of the source examples, however, in the OOD setting, examples are drawn from different source domains and are not exchangeable. So the analysis techniques in \citep{DBLP:journals/corr/abs-2008-04267} can not be applied in our case. To fill this gap, we use the decomposition technique and concentration inequalities.

\section{Conclusion} \label{sec::conclusion}
We study the confidence set prediction problem in the OOD generalization setting. We first empirically show that SCP is not valid in the OOD generalization setting. We then develop a method for forming valid confident prediction sets in the OOD setting and theoretically prove the validity of our proposed method. Finally, we conduct experiments on simulated data to empirically verify both the correctness of our theory and the validity of our proposed method.

\section*{Acknowledgements}
This work is supported by the National Key R\&D Program of China under Grant 2023YFC3604702, the National Natural Science Foundation of China under Grant 61976161, the Fundamental Research Funds for the Central Universities under Grant 2042022rc0016.

\bibliography{Formatting-Instructions-LaTeX-2024}

\appendix
\setcounter{table}{0}
\setcounter{figure}{0}
\setcounter{equation}{0}

\renewcommand{\thefigure}{B.\arabic{figure}}
\renewcommand{\thetable}{B.\arabic{table}}
\renewcommand{\theequation}{A.\arabic{equation}}

\onecolumn

\section{Proofs} \label{sec::proofs}
\subsection{Proof of \Cref{lma::scp-coverage}} \label{prf::lma-scp-coverage}

\begin{customlemma} {\ref{lma::scp-coverage}}
    Assume that examples $\{ (X_i, Y_i) \}_{i=1}^{n+1}$ are exchangeable. For any nonconformity score $s$ and any $\alpha \in (0,1)$, the prediction set defined in \Cref{set::scp} satisfies:
    \begin{equation*}
        \mathbb{P}\left( Y_{n+1} \in \widehat{\mathcal{C}}_n(X_{n+1}) \right) \ge 1 - \alpha,
    \end{equation*}
    where the probability is over the randomness of $\{ (X_i, Y_i) \}_{i=1}^{n+1}$.
\end{customlemma}

\begin{proof} [Proof of \Cref{lma::scp-coverage}]
    Let $V_i = s(X_i,Y_i)$, then the following holds:
    \begin{equation*}
        \begin{aligned}
            \mathbb{P}\left( Y_{n+1} \in \widehat{\mathcal{C}}_n(X_{n+1}) \right) &\overset{\sroman{1}}{=} \mathbb{P}\left( V_{n+1} \le \mathcal{Q}\left( \frac{n+1}{n} (1-\alpha); \widehat{P}\left( \{ V_i \}_{i=1}^n \right) \right) \right)\\
            &\overset{\sroman{2}}{=} \mathbb{P}\left( \mathbb{P}(S\le V_{n+1}) \le \frac{n+1}{n}(1-\alpha) \right) \\
            &\ge \mathbb{P}\left( \text{rank}(V_{n+1}) \le \Ceil{(n+1)(1-\alpha)} \right) \\
            &\ge 1-\alpha,
        \end{aligned}
    \end{equation*}
    where $\sroman{1}$ is from the definition of $\widehat{\mathcal{C}}_n$ and $\sroman{2}$ is a result of the definition of the quantile function $\mathcal{Q}$, the inner probability is over the randomness of $S$ and the outer probability is over the randomness of $\{ (X_i, Y_i) \}_{i=1}^{n+1}$.
\end{proof}

\subsection{Proof of \Cref{lma::wc-scp-coverage}} \label{prf::lma-wc-scp-coverage}
\begin{customlemma} {\ref{lma::wc-scp-coverage}}
    For any unknown target distribution $T \in \mathcal{T}$, assume that $(X_{n+1}, Y_{n+1})$ is drawn from $T$. If we set $t \ge \underset{P \in \mathcal{P}}{\max}\ \mathcal{Q}(1-\alpha;P)$, then:
    \begin{equation*}
        \mathbb{P}\left( Y_{n+1} \in \widetilde{\mathcal{C}}(X_{n+1}) \right) \ge 1 - \alpha.
    \end{equation*}
\end{customlemma}

\begin{proof} [Proof of \Cref{lma::wc-scp-coverage}]
    \begin{equation*}
        \begin{aligned}
            \mathbb{P}\left( Y_{n+1} \in \widetilde{\mathcal{C}}(X_{n+1}) \right) &= \mathbb{P}\left( s(X_{n+1},Y_{n+1}) \le t \right) = \mathbb{P}\left( V_{n+1} \le t \right) \\
            &\overset{\sroman{1}}{\ge} \mathbb{P}\left( V_{n+1} \le \underset{P \in \mathcal{P}_0}{\max}\ \mathcal{Q}(1-\alpha;P) \right) \\
            &\overset{\sroman{2}}{\ge} \mathbb{P}\left( V_{n+1} \le \mathcal{Q}(1-\alpha; s\#T) \right) \\
            &\overset{\sroman{3}}{\ge} 1 - \alpha,
        \end{aligned}
    \end{equation*}
    where $\sroman{1}$ is from the fact $t \ge \underset{P \in \mathcal{P}_0}{\max}\ \mathcal{Q}(1-\alpha;P)$; $\sroman{2}$ is because $\underset{P \in \mathcal{P}_0}{\max}\ \mathcal{Q}(1-\alpha; P) \ge \mathcal{Q}(1-\alpha; s\#T)$ and $\sroman{3}$ is a result of the definition of the quantile function.
\end{proof}

\subsection{Proof of \Cref{lma::relation-2Ps}} \label{prf::lma-relation-2Ps}
\begin{customlemma}{\ref{lma::relation-2Ps}}
    Let $\mathcal{P}$, $\mathcal{P}_{f,\rho}$ be defined as in \eqref{eq::score-set-corr}, \eqref{eq::score-set} respectively. Then, $\mathcal{P} \subseteq \mathcal{P}_{f,\rho}$.
\end{customlemma}

The proof of \Cref{lma::relation-2Ps} relies on the data processing inequality for $f$-divergence.

\begin{lemma} [Data Processing Inequality for $f$-divergence]\label{lma::f-divergence-data-processing-ieq}
    Let $P_X, Q_X$ be distributions on measurable space $(\mathcal{X}, \mathcal{F})$ and $P_{Y|X}$ be a transition kernel from $(\mathcal{X}, \mathcal{F})$ to $(\mathcal{Y}, \mathcal{G})$. Let $P_Y, Q_Y$ be distributions on measurable space $(\mathcal{Y}, \mathcal{G})$ and be the transformation of $P_X, Q_X$ pushed through $P_{Y|X}$, i.e., $P_X(B) = \int_\mathcal{X} P_{Y|X}(B|x) dP_X(x)$. Then, for any f-divergence, we have that:
    \begin{equation*}
        D_f(P_X\Vert Q_X) \ge D_f(P_Y \Vert Q_Y).
    \end{equation*}
\end{lemma}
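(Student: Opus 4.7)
The plan is the standard ``joint distribution plus Jensen'' argument. First, I would construct the joint distributions $P_{XY}$ and $Q_{XY}$ on $(\mathcal{X}\times\mathcal{Y},\mathcal{F}\otimes\mathcal{G})$ by coupling $P_X$ (respectively $Q_X$) with the same transition kernel $P_{Y|X}$, so that the $Y$-marginals are exactly $P_Y$ and $Q_Y$. The key observation is that because both joint distributions share the same conditional $P_{Y|X}$, their Radon--Nikodym derivative depends only on $x$, namely $\frac{dP_{XY}}{dQ_{XY}}(x,y) = \frac{dP_X}{dQ_X}(x)$ (assuming $P_X\ll Q_X$; otherwise both sides of the inequality are $+\infty$ or trivially handled). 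Consequently, integrating against $Q_{XY}$ yields $D_f(P_{XY}\Vert Q_{XY}) = D_f(P_X\Vert Q_X)$.

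Next, I would disintegrate $Q_{XY}$ with respect to its $Y$-marginal $Q_Y$, producing a conditional kernel $Q_{X|Y}$. This gives
\begin{equation*}
D_f(P_X\Vert Q_X) = \int_\mathcal{Y} \left( \int_\mathcal{X} f\!\left(\frac{dP_{XY}}{dQ_{XY}}(x,y)\right)\, dQ_{X|Y}(x\mid y) \right) dQ_Y(y).
\end{equation*}
Applying Jensen's inequality to the convex function $f$ on the inner conditional expectation, the inner integral is bounded below by $f\!\left(\int_\mathcal{X} \frac{dP_{XY}}{dQ_{XY}}(x,y)\, dQ_{X|Y}(x\mid y)\right)$.

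The last step is to identify this inner integral as $\frac{dP_Y}{dQ_Y}(y)$ almost surely. This follows from the definition of the joint density and of conditional expectation: for any $B\in\mathcal{G}$,
\begin{equation*}
\int_B \left(\int_\mathcal{X} \frac{dP_{XY}}{dQ_{XY}}(x,y)\, dQ_{X|Y}(x\mid y)\right) dQ_Y(y) = P_{XY}(\mathcal{X}\times B) = P_Y(B),
\end{equation*}
so the bracketed quantity is a version of $\frac{dP_Y}{dQ_Y}$. Plugging this back in yields $D_f(P_X\Vert Q_X) \ge \int_\mathcal{Y} f(\tfrac{dP_Y}{dQ_Y})\, dQ_Y = D_f(P_Y\Vert Q_Y)$, which is the desired inequality.

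The main obstacle is the measure-theoretic bookkeeping: verifying that the Radon--Nikodym derivative of $P_{XY}$ with respect to $Q_{XY}$ really depends only on $x$, and that the disintegration $Q_{X|Y}$ exists (true under standard regularity on $(\mathcal{Y},\mathcal{G})$, e.g.\ a standard Borel space, which is the usual implicit assumption for $f$-divergences). Once those are in place, the rest is just Jensen and Fubini. If $P_X\not\ll Q_X$ but $P_Y\ll Q_Y$, the inequality is immediate since the left-hand side is $+\infty$; if even $P_Y\not\ll Q_Y$, one checks that $P_X\not\ll Q_X$ as well, so again both sides are $+\infty$.
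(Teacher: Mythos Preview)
Your proposal is correct and follows essentially the same ``joint distribution plus Jensen'' route as the paper: lift to $P_{XY},Q_{XY}$ sharing the kernel $P_{Y|X}$ so that $D_f(P_{XY}\Vert Q_{XY})=D_f(P_X\Vert Q_X)$, condition on $Y$, apply Jensen, and identify the inner conditional expectation with $dP_Y/dQ_Y$. Your verification of that last identification via test sets is in fact cleaner than the paper's, which invokes the (not generally valid) claim $Q_{X|Y}=P_{X|Y}$ at that step.
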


To prove \Cref{lma::f-divergence-data-processing-ieq}, we need some properties about the $f$-divergences.
\begin{lemma} [Properties of $f$-divergences] \label{lma::f-divergence-properties}
For any distributions $P,Q$ on $\mathcal{Z}$ dominated by a common measure $\lambda$, i.e. $P,Q \ll \lambda$ (such a $\lambda$ always exists since we can take $\lambda = \frac{1}{2} P + \frac{1}{2} Q$), we have:

\begin{itemize}
    \item \textbf{Non-Negativity:} $D_f(P\Vert Q) \ge 0$. Furthermore, if $f$ is strictly convex around $1$, then the equality is obtained if and only if $P=Q$.
    \item \textbf{Convexity:} The mapping $(P,Q) \xrightarrow{} D_f(P\Vert Q)$ is jointly convex. Consequently, $P \xrightarrow{} D_f(P\Vert Q)$ is convex for fixed $Q$, and $Q \xrightarrow{} D_f(P\Vert Q)$ is convex for fixed $P$.
    \item \textbf{Joint vs. Marginal:} If $P_{XY} = P_X P_{Y|X}$ and $Q_{XY} = Q_X P_{Y|X}$, then $D_f\left( P_{XY} \Vert Q_{XY} \right) = D_f\left( P_X \Vert Q_X \right)$.
\end{itemize}
\end{lemma}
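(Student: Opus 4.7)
The plan is to handle the three properties in order, in each case reducing to a standard convex-analytic fact. Throughout, I would write $p = dP/d\lambda$ and $q = dQ/d\lambda$ for densities against the dominating measure $\lambda$, and work with the perspective function $\phi(x,y) \coloneqq y f(x/y)$ (with the usual conventions $\phi(0,0) = 0$ and $\phi(x,0) = x \lim_{t\to\infty} f(t)/t$), which lets me write $D_f(P \Vert Q) = \int \phi(p,q)\, d\lambda$ without having to assume $P \ll Q$ everywhere. It is a classical fact that $\phi$ is jointly convex on $\mathbb{R}_{\ge 0} \times \mathbb{R}_{\ge 0}$ whenever $f$ is convex; I would cite this rather than re-derive it.

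For \textbf{non-negativity}, when $P \ll Q$ I would apply Jensen's inequality to the convex $f$ against the probability measure $Q$:
\begin{equation*}
    D_f(P\Vert Q) = \int f\!\left(\tfrac{dP}{dQ}\right) dQ \;\ge\; f\!\left(\int \tfrac{dP}{dQ}\, dQ\right) = f(1) = 0.
\end{equation*}
The equality clause follows because strict convexity of $f$ around $1$ forces equality in Jensen's to occur only when $dP/dQ$ is $Q$-a.s.\ constant, and normalization then pins that constant to $1$. For the general case $P,Q \ll \lambda$, I would argue the same way using $\phi$: $\int \phi(p,q)\, d\lambda \ge \phi\!\left(\int p\, d\lambda, \int q\, d\lambda\right) = \phi(1,1) = f(1) = 0$ by Jensen applied to the convex $\phi$.

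For \textbf{joint convexity}, let $(P,Q) = (\lambda P_1 + (1-\lambda)P_2,\, \lambda Q_1 + (1-\lambda)Q_2)$, pick a common dominator, and write $p = \lambda p_1 + (1-\lambda) p_2$, $q = \lambda q_1 + (1-\lambda) q_2$. Joint convexity of $\phi$ applied pointwise gives $\phi(p,q) \le \lambda \phi(p_1,q_1) + (1-\lambda)\phi(p_2,q_2)$, and integrating both sides yields $D_f(P\Vert Q) \le \lambda D_f(P_1\Vert Q_1) + (1-\lambda) D_f(P_2\Vert Q_2)$. The marginal convexity statements are immediate specializations ($P_1 = P_2$ or $Q_1 = Q_2$).

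For the \textbf{joint-versus-marginal} identity, the key calculation is that the Radon--Nikodym derivative factors through the shared conditional: since $P_{XY}$ and $Q_{XY}$ share the same kernel $P_{Y\mid X}$, one has $dP_{XY}/dQ_{XY}(x,y) = dP_X/dQ_X(x)$ almost everywhere. Substituting this into the definition and using Fubini (the inner integral over $y$ against $P_{Y\mid X}(\cdot\mid x)$ is $1$) collapses the expression to $\int f(dP_X/dQ_X)\, dQ_X = D_f(P_X\Vert Q_X)$. The only subtlety is justifying the Radon--Nikodym factorization on the null sets where $dQ_X/d\lambda = 0$, which I would handle using the $\phi$-formulation and the convention $\phi(x,0) = x f'(\infty)$; this is also the step I expect to be the most technical, since it requires a careful disintegration argument. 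The rest is routine bookkeeping.
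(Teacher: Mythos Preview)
Your proposal is correct and follows essentially the same route as the paper: Jensen's inequality for non-negativity, the perspective function $\phi(x,y)=yf(x/y)$ applied pointwise and integrated for joint convexity, and cancellation of the shared conditional kernel in the Radon--Nikodym derivative for the joint-versus-marginal identity. The only cosmetic difference is in the equality case of non-negativity, where the paper invokes the normalization $f\ge 0$ (so $D_f=0$ forces $f(dP/dQ)=0$ a.e.) rather than the Jensen equality condition you use; both are standard, and your extra care about the $\phi(x,0)$ convention and null sets is more than the paper itself provides.
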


\begin{proof} [Proof of \Cref{lma::f-divergence-properties}]
    To prove the \textbf{non-negativity}, by the definition of $f$-divergence we have:
    \begin{equation*}
        D_f(P\Vert Q) = \mathbb{E}_Q\left[ f\left( \frac{dP}{dQ} \right) \right] \overset{\sroman{1}}{\ge} f\left( \mathbb{E}_Q\left[\frac{dP}{dQ} \right] \right) = f(1) = 0,
    \end{equation*}
    where $\sroman{1}$ follows from Jensen's inequality. Next, we prove that if $f$ is strictly convex around $1$, then the equality is obtained if and only if $P=Q$. It is obvious that if $P = Q$, then $f\left( \frac{dP}{dQ} \right) = f(1) = 0$, so $D_f(P\Vert Q) = 0$. For the other direction, as stated in \Cref{rmk::def-f-divergence}, we assume $f \ge 0$, so $D_f(P\Vert Q) = 0$ implies that $f\left( \frac{dP}{dQ} \right) = 0$ almost everywhere. Since $f$ in strongly convex around $1$ and $f(1)=0$, then $f\left( \frac{dP}{dQ} \right) = 0$ implies $\frac{dP}{dQ} = 1$, which means that $P = Q$.

    To prove the \textbf{convexity}, let $\mathbb{R}_* = [0, +\infty)$, for $f: \mathbb{R}_* \xrightarrow{} \mathbb{R}$, we define $f_\text{perspective}: \mathbb{R}_* \times \mathbb{R}_* \xrightarrow{} \mathbb{R}$ as:
    \begin{equation*}
        f_\text{perspective}(x,y) = y f\left( \frac{x}{y} \right), \ \ \ \text{dom}(f_\text{perspective}) = \left\{ (x,y) \in \mathbb{R}_* \times \mathbb{R}_*| \frac{x}{y} \in \text{dom}(f)\right\}.
    \end{equation*}
    Since $f$ is convex, according to \citep[Section 3.2.6, Page 89]{boyd2004convex}, $f_\text{perspective}$ is also convex. Denote $D_f^*(P, Q) = D_f(P \Vert Q)$, let $\mathcal{M}(\mathcal{X})$ be the domain of $D_f(\cdot\Vert \cdot)$, for $(P_1,Q_1),(P_2,Q_2) \in \mathcal{M}(\mathcal{X})$, suppose that $P_1,Q_1,P_2,Q_2 \ll \lambda$ (such $\lambda$ exists, e.g., we can set $\lambda = \frac{P_1 + P_2 + Q_1 + Q_2}{4}$). Then, for any $\alpha \in [0,1]$ we have:
    \begin{equation*}
        \begin{aligned}
            D_f^*( \alpha(P_1, Q_1) &+ (1-\alpha) (P_2, Q_2) ) \!=\!\! \int_\mathcal{X} f\!\left( \!\frac{\alpha\frac{dP_1}{d\lambda} + (1-\alpha)\frac{dP_2}{d\lambda}}{\alpha\frac{dQ_1}{d\lambda} + (1-\alpha)\frac{dQ_2}{d\lambda}} \!\right) \left( \alpha\frac{dQ_1}{d\lambda} + (1-\alpha)\frac{dQ_2}{d\lambda}\right) d\lambda \\
            &=\int_\mathcal{X} f_\text{perspective}\left(\alpha\frac{dP_1}{d\lambda} + (1-\alpha)\frac{dP_2}{d\lambda}, \alpha\frac{dQ_1}{d\lambda} + (1-\alpha)\frac{dQ_2}{d\lambda}\right) d\lambda \\
            &\overset{\sroman{1}}{\le} \int_\mathcal{X} \alpha f_\text{perspective}\left(\frac{dP_1}{d\lambda}, \frac{dQ_1}{d\lambda}\right) + (1-\alpha) f_\text{perspective}\left(\frac{dP_2}{d\lambda}, \frac{dQ_2}{d\lambda}\right) d\lambda \\
            &= \alpha \int_\mathcal{X} \frac{dQ_1}{d\lambda} f\left( \frac{\frac{dP_1}{d\lambda}}{\frac{dQ_1}{d\lambda}} \right) d\lambda + (1-\alpha) \int_\mathcal{X} \frac{dQ_2}{d\lambda} f\left( \frac{\frac{dP_2}{d\lambda}}{\frac{dQ_2}{d\lambda}} \right) d\lambda \\
            &= \alpha D_f^*(P_1, Q_1) + (1-\alpha) D_f^*(P_2, Q_2),
        \end{aligned}
    \end{equation*}
    where $\sroman{1}$ is from the convexity of $f_\text{perspective}$.

    To prove \textbf{the relationship between $f$-divergence of joint distributions and marginal distributions}, we have:
    \begin{equation*}
        \begin{aligned}
            D_f(P_{XY} \Vert Q_{XY}) &= \underset{Q_{XY}}{\mathbb{E}} \left[ f\left( \frac{dP_{XY}}{dQ_{XY}} \right) \right] = \underset{Q_{XY}}{\mathbb{E}} \left[ f\left( \frac{dP_X P_{X|Y}}{dQ_X P_{X|Y}} \right) \right] \\
            &= \underset{Q_{XY}}{\mathbb{E}} \left[ f\left( \frac{dP_X}{dQ_X} \right) \right] = \underset{Q_X}{\mathbb{E}} \left[ f\left( \frac{dP_X}{dQ_X} \right) \right] = D_f(P_X \Vert Q_X).
        \end{aligned}
    \end{equation*}
\end{proof}

\begin{proof} [Proof of \Cref{lma::f-divergence-data-processing-ieq}]
    By \Cref{lma::f-divergence-properties}, if $P_{XY} = P_X P_{Y|X}$ and $Q_{XY} = Q_X P_{Y|X}$, then we have:
    \begin{equation*}
        D_f(P_X \Vert Q_X) = D_f( P_{XY} \Vert Q_{XY}) = \underset{Q_{XY}}{\mathbb{E}} \left[ f\left( \frac{dP_{XY}}{dQ_{XY}} \right) \right].
    \end{equation*}
    Using the law of total expectation, we get:
    \begin{equation*}
        \underset{Q_{XY}}{\mathbb{E}} \left[ f\left( \frac{dP_{XY}}{dQ_{XY}} \right) \right] = \underset{Q_Y}{\mathbb{E}} \left[ \underset{Q_{X|Y}}{\mathbb{E}} \left[ f\left( \frac{dP_{XY}}{dQ_{XY}} \right) \bigg| Y \right] \right].
    \end{equation*}
    Since $f$ is convex, Jensen's inequality tells us that:
    \begin{equation*}
        \underset{Q_Y}{\mathbb{E}} \left[ \underset{Q_{X|Y}}{\mathbb{E}} \left[ f\left( \frac{dP_{XY}}{dQ_{XY}} \right) \bigg| Y \right] \right] \ge \underset{Q_Y}{\mathbb{E}} \left[ f\left( \underset{Q_{X|Y}}{\mathbb{E}}\left[ \frac{dP_{XY}}{dQ_{XY}} \bigg| Y \right] \right) \right].
    \end{equation*}
    Then we consider the $\underset{Q_{X|Y}}{\mathbb{E}}\left[ \frac{dP_{XY}}{dQ_{XY}} \bigg| Y \right]$ term, we have:
    \begin{equation*}
        \underset{Q_{X|Y}}{\mathbb{E}}\left[ \frac{dP_{XY}}{dQ_{XY}} \bigg| Y \right] = \int_\mathcal{X} \frac{dP_{XY}}{dQ_{XY}} dQ_{X|Y} = \int_\mathcal{X} \frac{dP_Y dP_{X|Y}}{dQ_Y dQ_{X|Y}} dQ_{X|Y} \overset{\sroman{1}}{=} \int_\mathcal{X} \frac{dP_Y}{dQ_Y} dP_{X|Y} = \frac{dP_Y}{dQ_Y},
    \end{equation*}
    where $\sroman{1}$ is from the fact that $Q_{X|Y} = P_{X|Y}$. Then we have:
    \begin{equation*}
        D_f(P_X\Vert Q_X) = \underset{Q_{XY}}{\mathbb{E}} \left[ f\left( \frac{dP_{XY}}{dQ_{XY}} \right) \right] \ge \underset{Q_Y}{\mathbb{E}} \left[ f\left( \frac{dP_Y}{dQ_Y} \right) \right] = D_f(P_Y \Vert Q_Y).
    \end{equation*}
\end{proof}

\begin{proof} [Proof of \Cref{lma::relation-2Ps}]
    For any $S \in \mathcal{P}$, by the definition of $\mathcal{P}$, we know that: there exists $T \in \mathcal{T}_{f,\rho}$ such that $S = s\# T$.
    
    By the definition of $\mathcal{T}_{f,\rho}$, we know that there exists $Q \in \mathcal{CH}(S_1, \cdots, S_d)\ \text{s.t.} \ D_f(T\Vert Q) \le \rho$. By \Cref{lma::f-divergence-data-processing-ieq}, we have: $D_f(s\# T \Vert s\# Q) \le D_f(T \Vert Q) \le \rho$. Moreover, $Q \in \mathcal{CH}(S_1, \cdots, S_d)$ implies that $s\# Q \in \mathcal{CH}(s\# S_1, \cdots, s\# S_d)$, by the definition of $\mathcal{P}_{f,\rho}$, we know that $S \in \mathcal{P}_{f,\rho}$. So $\mathcal{P} \subseteq \mathcal{P}_{f,\rho}$.
\end{proof}

\subsection{Proof of \Cref{thm::wc-quantile-to-standard-quantile}} \label{prf::thm-wc-quantile-to-standard-quantile}
\begin{customthm}{\ref{thm::wc-quantile-to-standard-quantile}}
Let $F_1, \dots, F_d$ be the c.d.f.'s of the distributions $s\# S_1, \dots, s\# S_d$. Define the function $g_{f,\rho}: [0,1] \xrightarrow{} [0,1]$ as:
\begin{equation*}
    g_{f,\rho}(\beta) \coloneqq \inf \left\{ z \in [0,1] \bigg| \beta f\left( \frac{z}{\beta} \right) + (1-\beta) f\left( \frac{1-z}{1-\beta} \right) \le \rho \right\},
\end{equation*}
then for inverse of $g_{f,\rho}$:
\begin{equation*}
    g_{f,\rho}^{-1}(\tau) = \sup \{ \beta \in [0,1] \big| g_{f,\rho}(\beta) \le \tau \},
\end{equation*}
the following holds for all $\alpha \in (0,1)$:
\begin{equation*}
    \widetilde{\mathcal{Q}}(\alpha; \mathcal{P}_{f,\rho}) = \mathcal{Q}(g_{f,\rho}^{-1}(\alpha); F_{\min}),
\end{equation*}
where $F_{\min}(x) \coloneqq \underset{1 \le i \le d}{\min} F_i(x)$ is a c.d.f.
\end{customthm}

Firstly, we prove that if $F_1, \dots, F_d$ are c.d.f.'s, then $F_{\min}(x) = \underset{1 \le i \le d}{\min} F_i(x)$ is a c.d.f.
\begin{lemma} \label{lma::Fmin-cdf}
    Suppose that $F_1, \dots, F_d$ are c.d.f.'s, let $F_{\min}(x) = \underset{1 \le i \le d}{\min} F_i(x)$, then $F_{\min}$ is a c.d.f.
\end{lemma}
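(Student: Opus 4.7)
The plan is to verify directly the four defining properties of a cumulative distribution function for $F_{\min}$, exploiting only the fact that we are taking a pointwise minimum over a \emph{finite} index set $\{1,\dots,d\}$. Specifically, I would show that $F_{\min}$ is (a) non-decreasing, (b) right-continuous, (c) has limit $0$ at $-\infty$, and (d) has limit $1$ at $+\infty$.

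For monotonicity, fix $x \le y$. For every $j \in [d]$ we have $F_{\min}(x) \le F_j(x) \le F_j(y)$, and then taking the minimum over $j$ on the right gives $F_{\min}(x) \le F_{\min}(y)$. For the tail limits, use $0 \le F_{\min}(x) \le F_1(x)$ to get the limit at $-\infty$, and for $+\infty$ pick any $\epsilon > 0$, choose $M_i$ with $F_i(x) \ge 1-\epsilon$ for $x \ge M_i$, and set $M = \max_i M_i$ (possible since $d$ is finite) so that $F_{\min}(x) \ge 1-\epsilon$ for $x \ge M$.

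The only step requiring a moment's thought is right-continuity. I would argue as follows: fix $x$ and a sequence $x_n \downarrow x$. Monotonicity gives $\liminf_n F_{\min}(x_n) \ge F_{\min}(x)$. For the reverse bound, right-continuity of each $F_i$ yields $\limsup_n F_{\min}(x_n) \le \limsup_n F_i(x_n) = F_i(x)$ for every $i \in [d]$, and minimizing over $i$ produces $\limsup_n F_{\min}(x_n) \le F_{\min}(x)$. Combining the two bounds gives convergence to $F_{\min}(x)$.

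I do not expect any real obstacle; the key observation throughout is that $d$ is finite, which is what lets me commute $\lim$ with $\min$ in the right-continuity and $+\infty$-limit steps (this would fail in general for an infinite family, where the minimum of c.d.f.'s need not be right-continuous). Once the four properties are checked, the conclusion that $F_{\min}$ is a c.d.f.\ follows from the standard characterization.
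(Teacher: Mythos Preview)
Your proposal is correct and follows essentially the same route as the paper: both verify directly that $F_{\min}$ is non-decreasing, right-continuous, and has the correct limits at $\pm\infty$, using throughout that $d$ is finite. The only cosmetic differences are that the paper phrases right-continuity via an $\epsilon$--$\delta$ argument with explicit $\arg\min$ indices rather than your $\liminf/\limsup$ sequential argument, and it handles the $-\infty$ limit by intersecting $\epsilon$-neighborhoods for all $F_i$ rather than bounding by a single $F_1$; neither difference is substantive.
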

\begin{proof} [Proof of \Cref{lma::Fmin-cdf}]
    We need to prove that $F_{\min}$ satisfies the following properties:\
    \begin{itemize}
        \item[\textbf{P1}] $F_{\min}$ is non-decreasing.
        \item[\textbf{P2}] $0 \le F_{\min}(x) \le 1$, $\underset{x \to -\infty}{\lim} F_{\min}(x) = 0$ and $\underset{x \to +\infty}{\lim} F_{\min}(x) = 1$.
        \item[\textbf{P3}] $F_{\min}$ is right continuous.
    \end{itemize}
    \textbf{Proof of P1:} Since $F_1, \dots, F_d$ are all c.d.f.'s, $\forall x \le y$, we have $F_i(x) \le F_i(y)$ for all $i \in [d]$.

    Suppose $j = \underset{1 \le i \le d}{\arg \min} F_i(y)$, then $F_j(y) = \underset{1 \le i \le d}{\min} F_i(y)$. Since $F_j(x) \le F_j(y)$, we have that:
    \begin{equation*}
        F_{\min}(x) = \underset{1 \le i \le d}{\min} F_i(x) \le F_j(x) \le F_j(y) = \underset{1 \le i \le d}{\min} F_i(y) = F_{\min}(y),
    \end{equation*}
    which means that the function $F_{\min}$ is non-decreasing.

    \textbf{Proof of P2:} Since $F_1, \dots, F_d$ are all c.d.f.'s, then $0 \le F_i(x) \le 1$, $\underset{x \to -\infty}{\lim} F_i(x) = 0$ for all $i \in [d]$. It is obvious that $0 \le F_{\min}(x) \le 1$. 
    
    Since $\underset{x \to -\infty}{\lim} F_i(x) = 0$, $\forall \epsilon > 0, \forall i \in [d]$, there exists $\delta_i$ such that $\forall x \le \delta_i$, we have $| F_i(x) | < \epsilon$, i.e., $0 \le F_i(x) < \epsilon$. Let $\delta = \underset{1 \le i \le d}{\min} \delta_i$, then, if $x \le \delta$, $0 \le F_i(x) < \epsilon$ holds for all $i\in [d]$, so $F_{\min}(x) = \underset{1 \le i \le d}{\min} F_i(x) < \epsilon$, which means that $\underset{x \to -\infty}{\lim} F_{\min}(x) = 0$. We can prove that $\underset{x \to +\infty}{\lim} F_{\min}(x) = 1$ similarly.

    \textbf{Proof of P3:} Since $F_1, \dots, F_d$ are all c.d.f.'s, they are all right continuous. Fix any $x_0$, then by the definition of right continuity, we know that: $\forall \epsilon > 0$, $\forall i \in [d]$, there exists $\delta_i > 0$ such that, for any $x$ satisfying $0 < x - x_0 < \delta_i$, we have:
    \begin{equation*}
        |F_i(x) - F_i(x_0)| < \epsilon,\  \text{i.e.}, F_i(x_0) - \epsilon < F_i(x) < F_i(x_0) + \epsilon.
    \end{equation*}
    Let $\delta = \underset{1 \le i \le d}{\min} \delta_i$, set $j = \underset{1 \le i \le d}{\arg \min} F_i(x)$, $k = \underset{1 \le i \le d}{\arg \min} F_i(x_0)$, then:
    \begin{equation*}
        F_{\min}(x) =  F_j(x) = \underset{1 \le i \le d}{\min} F_i(x), \ F_{\min}(x_0) =  F_k(x_0) = \underset{1 \le i \le d}{\min} F_i(x_0).
    \end{equation*}
    So, for all $0 < x-x_0 < \delta$, we have:
    \begin{equation*}
        F_{\min}(x_0) - \epsilon = \underset{1 \le i \le d}{\min} F_i(x_0) -\epsilon \le F_j(x_0) - \epsilon < F_j(x) = F_{\min}(x).
    \end{equation*}
    Similarly, we have:
    \begin{equation*}
        F_{\min}(x) = \underset{1 \le i \le d}{\min} F_i(x) \le F_k(x) < F_k(x_0) + \epsilon = F_{\min}(x_0) + \epsilon.
    \end{equation*}
    So we have that $|F_{\min}(x) - F_{\min}(x_0)| < \epsilon$
\end{proof}

We now show some useful properties for $g_{f,\rho}$, which are useful in our proof.
\begin{lemma} [Lemma A.1, page 33 of \citep{DBLP:journals/corr/abs-2008-04267}] \label{lma::properties-of-g}
    Let $f$ be a function that satisfies the conditions in \Cref{def::f-divergence}, then the function $g_{f,\rho}$ defined in \Cref{thm::wc-quantile-to-standard-quantile} satisfies the following properties:
    \begin{enumerate}[(1)]
        \item The function $(\beta, \rho) \xrightarrow{} g_{f,\rho}(\beta)$ is a convex function.
        \item The function $(\beta, \rho) \xrightarrow{} g_{f,\rho}(\beta)$ is continuous for $\beta \in [0,1]$ and $\rho \in (0,\infty)$.
        \item $g_{f,\rho}$ is non-increasing in $\rho$ and non-decreasing in $\beta$. Moreover, for all $\rho > 0$, there exists $\beta_0(\rho) \coloneqq \sup \{ \beta \in (0,1) | g_{f,\rho}(\beta) = 0\} $, $g_{f,\rho}$ is strictly increasing for $\beta > \beta_0(\rho)$.
        \item For $\beta \in [0,1]$ and $\rho >0$, $g_{f,\rho}(\beta) \le \beta$. For $\rho > 0$, equality holds for $\beta = 0$, strict inequality holds for $\beta \in (0,1)$ and $\rho > 0$, and $g_{f, \rho}(1)=1$ if and only if $f^\prime(\infty) = \infty$.
        \item Let $g_{f,\rho}^{-1}(t) = \sup \{\beta| g_{f,\rho}(\beta) \le t \}$ as in the statement of \Cref{thm::wc-quantile-to-standard-quantile}. Then for $\tau \in (0,1)$, $g_{f,\rho}(\beta) \ge \tau$ if and only if $g_{f,\rho}^{-1}(\tau) \le \beta$.
    \end{enumerate}
\end{lemma}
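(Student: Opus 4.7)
The plan is to work from the joint convexity of the function $h(z,\beta) \coloneqq \beta f(z/\beta) + (1-\beta) f((1-z)/(1-\beta))$ in the pair $(z,\beta)$, which is the engine driving every claim. First I would observe that each summand is the \emph{perspective} of the convex function $f$ evaluated at $(z,\beta)$ and $(1-z,1-\beta)$ respectively. Since the perspective of a convex function is jointly convex on its domain and the affine map $(z,\beta)\mapsto(1-z,1-\beta)$ preserves convexity, $h$ is jointly convex on $[0,1]\times[0,1]$. Consequently, the sublevel set $\mathcal{E} \coloneqq \{(z,\beta,\rho) \in [0,1]^2\times(0,\infty) : h(z,\beta) \le \rho\}$ is convex.

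For property (1), I would rewrite $g_{f,\rho}(\beta) = \inf\{z : (z,\beta,\rho)\in\mathcal{E}\}$, i.e., $g_{f,\rho}(\beta)$ is the value function obtained by partially minimizing the coordinate $z$ over the convex set $\mathcal{E}$; a standard fact from convex analysis then gives that $(\beta,\rho)\mapsto g_{f,\rho}(\beta)$ is convex. Property (2) is immediate since a convex function is continuous on the interior of its effective domain, so I would only need to check that $g_{f,\rho}(\beta)$ is finite on $[0,1]\times(0,\infty)$, which is handled by the feasibility witness in property (4). For property (3), monotonicity in $\rho$ follows because the feasible set $\{z : h(z,\beta)\le\rho\}$ is non-decreasing in $\rho$; monotonicity in $\beta$ I would establish by showing that the map $\beta\mapsto h(z,\beta)$ has its minimizer at $z=\beta$ (from $f(1)=0$ and strict convexity of $f$ at $1$), so as $\beta$ grows the feasible interval in $z$ slides to the right. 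The threshold $\beta_0(\rho)$ and the strict monotonicity past it come from combining this with the fact that on the set $\{z : g_{f,\rho}(\beta)>0\}$ the constraint is binding and $h(\cdot,\beta)$ is strictly convex.

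For property (4), I would plug $z=\beta$ into $h$ to get $h(\beta,\beta)=\beta f(1)+(1-\beta)f(1)=0\le\rho$, which exhibits $\beta$ as feasible, so $g_{f,\rho}(\beta)\le\beta$. Strict inequality on $(0,1)$ follows from a perturbation argument: the feasibility of $z=\beta$ with slack $\rho>0$, together with continuity of $h(\cdot,\beta)$, allows a slightly smaller $z$ to remain feasible. The characterization $g_{f,\rho}(1)=1$ iff $f'(\infty)=\infty$ I would read off from the behavior of $\beta f(z/\beta)$ as $\beta\to 0^+$ (which is $z\cdot f'(\infty)$ by the recession function identity): if $f'(\infty)<\infty$ then arbitrarily small $z$ satisfies the bound even at $\beta=1$, while if $f'(\infty)=\infty$ only $z=1$ works. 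Property (5) is the standard duality between a non-decreasing function and its right-continuous generalized inverse; combining the monotonicity and continuity from (2)–(3) with the definition of $g_{f,\rho}^{-1}$ gives $g_{f,\rho}(\beta)\ge\tau \Leftrightarrow g_{f,\rho}^{-1}(\tau)\le\beta$.

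The main obstacle I anticipate is property (3): cleanly separating the strict and non-strict monotonicity regimes and identifying $\beta_0(\rho)$. The crux is handling the two regimes $\{g_{f,\rho}=0\}$ and $\{g_{f,\rho}>0\}$ simultaneously and using the right notion of strict convexity of $f$ around $1$ and at $\infty$; the $f'(\infty)$ edge case for $\beta=1$ is the most delicate piece since the perspective degenerates. Once this is dealt with, everything else reduces to convex-analytic bookkeeping that I would compress into a single paragraph relying on property (1).
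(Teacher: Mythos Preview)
The paper does not prove this lemma; it is quoted verbatim from Lemma~A.1 of \cite{DBLP:journals/corr/abs-2008-04267} and used as a black box. Your proposal is therefore not competing with anything in the present paper, and the perspective-function route you outline is essentially the standard argument one would expect in the cited source: joint convexity of $h(z,\beta)$ via the perspective construction, then reading off (1)--(5) from the geometry of the sublevel sets.

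Two small points to tighten. For (2), convexity gives continuity only on the \emph{interior} of the domain; at $\beta\in\{0,1\}$ a finite convex function on $[0,1]$ can still jump, so you will need a direct check there (e.g.\ via the explicit limits $h(z,0)=zf'(\infty)+f(1-z)$ and $h(z,1)=f(z)+(1-z)f'(\infty)$, or by exhibiting upper and lower barriers). For (5), continuity and mere monotonicity of $g_{f,\rho}$ are not enough for the biconditional when $g_{f,\rho}(\beta)=\tau$ on a nondegenerate interval; you need to invoke the \emph{strict} monotonicity past $\beta_0(\rho)$ established in (3), together with $\tau>0$, to rule out plateaus at level $\tau$. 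Once those two details are in place your sketch goes through.
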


We then find the corresponding c.d.f. of the worst-case quantile function $\widetilde{\mathcal{Q}}$ by the following Lemma.
\begin{lemma} \label{lma::worst-case-F}
    The c.d.f. that corresponds to the worst-case quantile function $\widetilde{\mathcal{Q}}$ (we call it worst-case c.d.f.) has the formulation that:
    \begin{equation} \label{eq::worst-case-F}
        \widetilde{F}(s; \mathcal{P}_{f,\rho}) = \underset{P \in \mathcal{P}_{f,\rho}}{\inf} P(S\le s)
    \end{equation}
\end{lemma}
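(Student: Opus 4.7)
The plan is to establish the quantile identity $\widetilde{\mathcal{Q}}(\alpha;\mathcal{P}_{f,\rho})=\mathcal{Q}(\alpha;\widetilde{F})$ for every $\alpha\in(0,1)$ with $\widetilde{F}(s)\coloneqq\inf_{P\in\mathcal{P}_{f,\rho}}P(S\le s)$, which is the precise content of ``$\widetilde{F}$ corresponds to $\widetilde{\mathcal{Q}}$''. Unpacking $\mathcal{Q}$ as the generalized inverse of a c.d.f., the desired identity rewrites as
\begin{equation*}
    \sup_{P\in\mathcal{P}_{f,\rho}}\inf\{s:P(S\le s)\ge\alpha\}\;=\;\inf\{s:\widetilde{F}(s)\ge\alpha\}.
\end{equation*}
I will prove the two inequalities separately; both rely only on the definition of $\mathcal{Q}$ and the interaction between the pointwise infimum $\widetilde{F}$ and the threshold event $\{F(s)\ge\alpha\}$. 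Verifying in addition that $\widetilde{F}$ is genuinely a c.d.f.\ (right-continuity and the correct limits at $\pm\infty$) is not strictly needed for the identity itself; this would separately require a tightness argument for $\mathcal{P}_{f,\rho}$ at $+\infty$ and some care with preservation of right-continuity under a pointwise infimum.

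For the $\le$ direction I will fix any $s$ with $\widetilde{F}(s)\ge\alpha$. Because $\widetilde{F}$ is the pointwise infimum over $P\in\mathcal{P}_{f,\rho}$, this immediately forces $P(S\le s)\ge\alpha$ for \emph{every} $P\in\mathcal{P}_{f,\rho}$, so $s$ lies in $\{t:P(S\le t)\ge\alpha\}$ and hence $\mathcal{Q}(\alpha;P)\le s$ for every such $P$. Taking the supremum over $P$ and then the infimum over such $s$ yields $\widetilde{\mathcal{Q}}(\alpha;\mathcal{P}_{f,\rho})\le\mathcal{Q}(\alpha;\widetilde{F})$.

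For the reverse direction I will set $s^{\star}=\mathcal{Q}(\alpha;\widetilde{F})$ and pick an arbitrary $s<s^{\star}$; by the defining infimum we must have $\widetilde{F}(s)<\alpha$. Since $\widetilde{F}(s)$ is itself an infimum over $P$, there exists $P^{\star}\in\mathcal{P}_{f,\rho}$ with $P^{\star}(S\le s)<\alpha$, which forces $\mathcal{Q}(\alpha;P^{\star})>s$ and therefore $\widetilde{\mathcal{Q}}(\alpha;\mathcal{P}_{f,\rho})>s$. Taking the supremum over all $s<s^{\star}$ yields $\widetilde{\mathcal{Q}}(\alpha;\mathcal{P}_{f,\rho})\ge s^{\star}$, completing the proof. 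The only real point that requires attention is the bookkeeping of strict versus non-strict inequalities across the two directions: the $\le$ direction uses that ``$\inf_{P}P(S\le s)\ge\alpha$'' propagates to every $P$, while the $\ge$ direction uses the dual fact that ``$\inf_{P}P(S\le s)<\alpha$'' is witnessed by some particular $P$. Beyond this, the argument is a direct manipulation of suprema, infima, and the definition of the quantile functional.
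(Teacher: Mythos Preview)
Your proof is correct and complete; the two-inequality argument you give is a clean and standard way to establish the identity $\widetilde{\mathcal{Q}}(\alpha;\mathcal{P}_{f,\rho})=\mathcal{Q}(\alpha;\widetilde{F})$.

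The paper takes a somewhat different route: instead of splitting into $\le$ and $\ge$, it writes a single chain of equalities. The key trick is to rewrite each $\mathcal{Q}(\beta;P)=\inf\{t:P(S\le t)\ge\beta\}$ as $\sup\{t:P(S>t)\ge 1-\beta\}$ via the equivalence ``$t<\mathcal{Q}(\beta;P)$ iff $P(S\le t)<\beta$''. This turns the outer $\sup_P$ and the inner set-defining condition into two suprema that can be merged, after which converting back from $P(S>t)$ to $P(S\le t)$ yields $\inf\{t:\inf_P P(S\le t)\ge\beta\}$. So the paper's proof is a compact one-liner that hinges on the complement representation of quantiles and on swapping $\sup_P$ with a $\sup_t$, whereas your proof avoids this detour entirely and instead exploits directly that ``$\inf_P P(S\le s)\ge\alpha$'' propagates to every $P$ (for $\le$) and that ``$\inf_P P(S\le s)<\alpha$'' is witnessed by some $P$ (for $\ge$). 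Your approach is arguably more elementary and makes the strict/non-strict bookkeeping more transparent---you rely only on the standard equivalence $\mathcal{Q}(\alpha;P)\le s\Leftrightarrow P(S\le s)\ge\alpha$ (which holds by right-continuity of each individual c.d.f.). The paper's chain is shorter to write but hides similar care inside the passage between $\sup\{t:\cdots\}$ and $\inf\{t:\cdots\}$.
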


\begin{proof} [Proof of \Cref{lma::worst-case-F}]
    By the definition of the quantile function, suppose the corresponding c.d.f. of $\widetilde{\mathcal{Q}}(\cdot; \mathcal{P}_{f,\rho})$ is $\widetilde{F}$, then we have the following:
    \begin{equation} \label{eq::worst-case-F-quantile-relation}
        \widetilde{\mathcal{Q}}(\beta; \mathcal{P}_{f,\rho}) = \inf \{ s \in \mathbb{R} | \widetilde{F}(s; \mathcal{P}_{f,\rho}) \ge \beta \}.
    \end{equation}
    According to the definition of $\widetilde{\mathcal{Q}}(\beta; \mathcal{P}_{f,\rho})$, for any $\beta \in [0,1]$, we have:
    \begin{equation} \label{eq::worst-case-quantile-infer}
        \begin{aligned}
            \widetilde{\mathcal{Q}}(\beta; \mathcal{P}_{f,\rho}) &= \underset{P \in \mathcal{P}_{f,\rho}}{\sup} \mathcal{Q}(\beta; P) = \underset{P \in \mathcal{P}_{f,\rho}}{\sup} \inf \{ t \in \mathbb{R} | P(S\le t)\ge \beta \}\\
            &\overset{\sroman{1}}{=} \underset{P \in \mathcal{P}_{f,\rho}}{\sup} \{ t \in \mathbb{R} | P(S>t) \ge 1-\beta \} = \sup \left\{ t \in \mathbb{R} \Bigg| \underset{P \in \mathcal{P}_{f,\rho}}{\sup} P(S>t) \ge 1-\beta \right\} \\
            &= \inf \left\{ t \in \mathbb{R} \Bigg| \underset{P \in \mathcal{P}_{f,\rho}}{\sup} P(S>t) \le 1-\beta \right\} = \inf \left\{ t \in \mathbb{R} \Bigg| 1 - \underset{P \in \mathcal{P}_{f,\rho}}{\sup} P(S>t) \ge \beta \right\} \\
            &= \inf \left\{ t \in \mathbb{R} \Bigg| \underset{P \in \mathcal{P}_{f,\rho}}{\inf} \left(1 - P(S>t)\right) \ge \beta \right\} = \inf \left\{ t \in \mathbb{R} \Bigg| \underset{P \in \mathcal{P}_{f,\rho}}{\inf} P(S\le t) \ge \beta \right\}
        \end{aligned}
    \end{equation}
    where $\sroman{1}$ follows from the fact that for all $t \in \mathbb{R}$, $t < \mathcal{Q}(\beta; P)$ if and only if $P(S\le t)<\beta$, i.e., if and only if $P(S>t) \ge 1-\beta$. Compare \Cref{eq::worst-case-F-quantile-relation} with \Cref{eq::worst-case-quantile-infer}, we know that \Cref{eq::worst-case-F} holds.
\end{proof}

Since the distribution set $\mathcal{P}_{f,\rho}$ involves $d$ distributions $s\# S_1, s\# S_2, \dots, s\# S_d$, with a slight abuse of notation, we need to extend $g_{f,\rho}$ to be a function such that $g_{f,\rho}: [0,1]^+ \xrightarrow{} [0,1]$, where $[0,1]^+$ means that the input number is adaptive and can be $1,2,\dots$.
\begin{definition} \label{def::adatptive-g}
    We define the adaptive version of $g_{f,\rho}: [0,1]^+ \xrightarrow{} [0,1]$ as:
    \begin{equation*}
        g_{f,\rho}(\beta_1, \cdots, \beta_d) \coloneqq \inf \left\{ z \in [0,1] \Bigg| \underset{\sum_{i=1}^d \lambda_i = 1}{\underset{\lambda_1, \dots, \lambda_d \ge 0}{\inf}} h\left( z, \sum_{i=1}^d \lambda_i \beta_i \right) \le \rho \right\},
    \end{equation*}
    where
    \begin{equation*}
        h\left( z, \beta \right) = \beta f\left( \frac{z}{\beta} \right) + (1-\beta) f\left( \frac{1-z}{1-\beta} \right).
    \end{equation*}
\end{definition}

It's obvious that when the input number is $1$, the function $g_{f,\rho}$ in \Cref{def::adatptive-g} reduces to the $g_{f,\rho}$ in \Cref{thm::wc-quantile-to-standard-quantile}. The following Lemma reduces $g_{f,\rho}(\beta_1, \dots, \beta_d)$ to $g_{f,\rho}\left(\underset{1\le i \le d}{\min}\beta_i \right)$.
\begin{lemma} \label{lma::multi-g-reduce-to-single-g}
    For any $d \in \mathbb{N}_+$ and $\beta_1, \dots, \beta_d \in [0,1]$, we have that:
    \begin{equation*}
        g_{f,\rho}(\beta_1, \dots, \beta_d) = g_{f,\rho}\left(\underset{1\le i \le d}{\min}\beta_i \right).
    \end{equation*}
\end{lemma}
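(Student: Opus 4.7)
The plan is to reduce the multi-input $g_{f,\rho}$ to its single-input form by showing that both infima are driven by the smallest of the $\beta_i$'s. First, since $\{\sum_{i=1}^d \lambda_i \beta_i : \lambda \in \Delta^{d-1}\} = [\beta_{\min}, \beta_{\max}]$ where $\beta_{\min} = \min_i \beta_i$ and $\beta_{\max} = \max_i \beta_i$, the inner infimum collapses to a one-dimensional one:
\begin{equation*}
\inf_{\lambda \in \Delta^{d-1}} h\!\left(z, \sum_{i=1}^d \lambda_i \beta_i\right) = \inf_{\beta \in [\beta_{\min}, \beta_{\max}]} h(z,\beta).
\end{equation*}
So the claim reduces to $\inf A = \inf B$, where $A \coloneqq \{z \in [0,1] : \inf_{\beta \in [\beta_{\min}, \beta_{\max}]} h(z,\beta) \le \rho\}$ and $B \coloneqq \{z \in [0,1] : h(z, \beta_{\min}) \le \rho\}$. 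The inclusion $B \subseteq A$ is immediate, giving $\inf A \le \inf B$.

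The key technical ingredient for the reverse direction is the following monotonicity fact: for every $z \in [0,1]$, the map $\beta \mapsto h(z, \beta)$ is non-decreasing on $[z, 1]$. I would verify this by direct differentiation to obtain
\begin{equation*}
\frac{\partial h}{\partial \beta}(z, \beta) = G\!\left(\frac{z}{\beta}\right) - G\!\left(\frac{1-z}{1-\beta}\right), \qquad G(t) \coloneqq f(t) - t f'(t),
\end{equation*}
and then observe $G'(t) = -t f''(t) \le 0$ by convexity of $f$, so $G$ is non-increasing on $[0,\infty)$. Since $z \le \beta$ is equivalent to $z/\beta \le (1-z)/(1-\beta)$, the non-increasingness of $G$ yields $G(z/\beta) \ge G((1-z)/(1-\beta))$, hence $\partial h/\partial \beta \ge 0$ whenever $\beta \ge z$.

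To close the argument, note $h(\beta_{\min}, \beta_{\min}) = 0 \le \rho$, so $\beta_{\min} \in B$ and hence $\inf B \le \beta_{\min}$. If some $z_0 \in A$ satisfied $z_0 < \inf B$, then $z_0 < \beta_{\min}$, and the monotonicity of $h(z_0, \cdot)$ on $[z_0, 1] \supseteq [\beta_{\min}, \beta_{\max}]$ gives $\inf_{\beta \in [\beta_{\min}, \beta_{\max}]} h(z_0, \beta) = h(z_0, \beta_{\min})$. Membership $z_0 \in A$ then forces $h(z_0, \beta_{\min}) \le \rho$, i.e.\ $z_0 \in B$, contradicting $z_0 < \inf B$. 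Hence $\inf A \ge \inf B$, which together with the easy direction gives the claim.

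The main obstacle is the derivative calculation and the clean identification of the auxiliary $G(t) = f(t) - tf'(t)$ whose monotonicity (coming from $G' = -tf''$) delivers $\partial h/\partial \beta \ge 0$ on $\{z \le \beta\}$; once this is in hand the rest is a short case analysis. Boundary situations ($\beta \in \{0,1\}$ or $z \in \{0,1\}$, where some arguments of $f$ may be at $0$ or $\infty$) require separate handling via the standard $f$-divergence conventions such as $0 \cdot f(a/0) = a f'(\infty)$, but these are routine continuity/limit arguments and do not affect the structure of the proof.
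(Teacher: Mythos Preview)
Your proposal is correct and follows essentially the same route as the paper: both reduce the inner infimum to $\beta \in [\beta_{\min},\beta_{\max}]$, compute $\partial h/\partial\beta$, and use that $h(z,\cdot)$ is minimized at $\beta=z$ to conclude that for $z\le\beta_{\min}$ the relevant infimum is $h(z,\beta_{\min})$. The only cosmetic difference is that the paper establishes joint convexity of $h$ via the perspective construction and then locates the critical point $\beta=z$, whereas you package the derivative as $G(z/\beta)-G((1-z)/(1-\beta))$ with $G(t)=f(t)-tf'(t)$ and read off monotonicity from $G'=-tf''\le 0$; this is arguably cleaner but amounts to the same computation, and your contradiction argument replaces the paper's short three-case discussion to the same effect.
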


\begin{proof} [Proof of \Cref{lma::multi-g-reduce-to-single-g}]
    For all $0 \le a_1 \le a_2 \le 1$, we define:
    \begin{equation*}
        g_{f,\rho,*}(a_1,a_2) = \inf\left\{ z \in [0,1] \bigg| \underset{\beta \in [a_1,a_2]}{\inf} \beta f\left( \frac{z}{\beta} \right) + (1-\beta) f\left( \frac{1-z}{1-\beta} \right) \le \rho \right\}.
    \end{equation*}
    For fixed $d$ and $\beta_1, \dots, \beta_d \in [0,1]$, it is obvious that:
    \begin{equation*}
        \left\{ \sum_i^d \lambda_i \beta_i \bigg| \lambda_1, \dots, \lambda_d \ge 0, \sum_i^d \lambda_i = 1 \right\} = \left[ \underset{1\le i \le d}{\min} \beta_i, \underset{1\le i \le d}{\max} \beta_i \right],
    \end{equation*}
    which implies that $g_{f,\rho}(\beta_1, \dots, \beta_d) = g_{f,\rho,*}\left(\underset{1\le i \le d}{\min} \beta_i, \underset{1\le i \le d}{\max} \beta_i\right)$. Now, it is sufficient to prove that for all $0 \le a_1 \le a_2 \le 1$, we have:
    \begin{equation*}
        g_{f,\rho,*}(a_1,a_2) = g_{f,\rho}(a_1).
    \end{equation*}
    Recall that $h\left( z, \beta \right) = \beta f\left( \frac{z}{\beta} \right) + (1-\beta) f\left( \frac{1-z}{1-\beta} \right)$, we now analyze the properties of $h\left( z, \beta \right)$.

    Let $f_\text{perspective}(z,\beta) = \beta f \left( \frac{z}{\beta} \right)$, according to \citep[Section 3.2.6, Page 89]{boyd2004convex}, $f_\text{perspective}(z,\beta)$ is also convex. Let $A = \begin{pmatrix} -1 & 0 \\ 0 & -1 \end{pmatrix}$ and $b = (1, 1)^T$, then we know that $(1-z, 1-\beta)^T = A (z, \beta)^T + b$. According to \citep[Section 3.2.2, Page 79]{boyd2004convex}, the convexity is preserved when the input vector is composited with an affine mapping. So the function $(z,\beta) \xrightarrow{} f_\text{perspective}(1-z,1-\beta)$ is also convex. Then we know that $h(z,\beta) = f_\text{perspective}(z,\beta) + f_\text{perspective}(1-z,1-\beta)$ is convex.

    Recall that in \Cref{rmk::def-f-divergence}, we show that we can assume, without loss of generality, that $f^\prime(1)=0$ and $f\ge 0$. Now we take the partial derivative of $h$ with respect to $\beta$ and get:
    \begin{equation*}
        \begin{aligned}
            \frac{\partial h}{\partial \beta}\bigg|_{(z,\beta)} &= f\left( \frac{z}{\beta} \right) + \beta f^\prime \left( \frac{z}{\beta} \right) \left( -\frac{z}{\beta^2} \right) - f\left( \frac{1-z}{1-\beta} \right) + (1-\beta) f^\prime \left( \frac{1-z}{1-\beta} \right)\left( \frac{1-z}{(1-\beta)^2} \right) \\
            &= f\left( \frac{z}{\beta} \right) - \frac{z}{\beta} f^\prime \left( \frac{z}{\beta} \right) - f\left( \frac{1-z}{1-\beta} \right) + \frac{1-z}{1-\beta} f^\prime \left( \frac{1-z}{1-\beta} \right).
        \end{aligned}
    \end{equation*}
    Taking the partial derivative of $h$ with respect to $z$ shows:
    \begin{equation*}
        \frac{\partial h}{\partial z}\bigg|_{(z,\beta)} = \beta f^\prime\left( \frac{z}{\beta} \right) \frac{1}{\beta} + (1-\beta) f^\prime\left( \frac{1-z}{1-\beta} \right) \frac{-1}{1-\beta} = f^\prime\left( \frac{z}{\beta} \right) - f^\prime\left( \frac{1-z}{1-\beta} \right)
    \end{equation*}
    Fix $z \in [0,1]$, solving the equation $\frac{\partial h}{\partial \beta}\big|_{(z,\beta)} = 0$ gives $\beta = z$. Similarly, fix $\beta \in [0,1]$, solving the equation $\frac{\partial h}{\partial z}\big|_{(z,\beta)} = 0$ gives $z = \beta$. So we know that:
    \begin{equation*}
        \underset{\beta \in [a_1, a_2]}{\inf} h(z,\beta) = \left\{
                                                                \begin{array}{lcl}
                                                                h(z,z)=0 & & {z\in [a_1,a_2]}\\
                                                                h(z,a_1) & & {z < a_1}\\
                                                                h(z,a_2) & & {z>a_2}
                                                                \end{array} \right..
    \end{equation*}
    Then we discuss three situations:
    \begin{enumerate}[(1)]
        \item If $g_{f,\rho,*}(a_1,a_2)$ is attained when $z<a_1$, then $g_{f,\rho,*}(a_1,a_2) = \inf \{ t\in [0,a_1) | h(t,a_1) \le \rho \}$.
        \item If $g_{f,\rho,*}(a_1,a_2)$ is attained when $z\in [a_1,a_2]$, then $g_{f,\rho,*}(a_1,a_2) = \inf \{ t\in [a_1,a_2] | h(t,t) \le \rho \} = a_1$.
        \item If $g_{f,\rho,*}(a_1,a_2)$ is attained when $z>a_2$, then $g_{f,\rho,*}(a_1,a_2) = \inf \{ t\in (a_2,1) | h(t,a_2) \le \rho \} \ge a_1$.
    \end{enumerate}
    In conclusion, we know that $g_{f,\rho,*}(a_1,a_2) \le a_1$, so we have:
    \begin{equation*}
        g_{f,\rho,*}(a_1,a_2) = \inf \{ z \in [0,a_1] | h(z,a_1) \le \rho \} = \inf \{ z \in [0,1] | h(z,a_1) \le \rho \} = g_{f,\rho}(a_1),
    \end{equation*}
    which finishes the proof.
\end{proof}

\begin{lemma} \label{lma::worst-case-F-multi-g-relation}
    Let $F_1, \dots, F_d$ be the c.d.f.'s of the distributions $s\# S_1, \dots, s\# S_d$ and define $F_{\min}(x) = \underset{1\le i \le d}{\min} F_i(x)$. Then we have:
    \begin{equation*}
        \widetilde{F}(t; \mathcal{P}_{f,\rho}) = g_{f,\rho}\left( F_1(t), \cdots, F_d(t) \right) = g_{f,\rho}\left( F_{\min}(t)\right)
    \end{equation*}
\end{lemma}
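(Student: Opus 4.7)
By Lemma~\ref{lma::worst-case-F} one has $\widetilde{F}(t;\mathcal{P}_{f,\rho}) = \inf_{P \in \mathcal{P}_{f,\rho}} P(S \le t)$, so the target decomposes into two pieces: (i) show this infimum equals $g_{f,\rho}(F_1(t),\dots,F_d(t))$, and (ii) apply Lemma~\ref{lma::multi-g-reduce-to-single-g} to collapse the multi-input $g_{f,\rho}$ down to $g_{f,\rho}(F_{\min}(t))$. Piece (ii) is immediate, so the real work lies in establishing the infimum identity by matching lower and upper bounds, with the function $h(z,\beta) = \beta f(z/\beta) + (1-\beta) f((1-z)/(1-\beta))$ from Definition~\ref{def::adatptive-g} playing the central role.

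\textbf{Lower bound ($\ge$).} Fix any $P \in \mathcal{P}_{f,\rho}$, which by definition admits a witness $S_0 = \sum_{i=1}^d \lambda_i\, s\#S_i$ with $D_f(P\Vert S_0) \le \rho$. I apply the data processing inequality (Lemma~\ref{lma::f-divergence-data-processing-ieq}) to the deterministic binary kernel $x \mapsto \mathbb{I}(x \le t)$: this pushes $P$ forward to a Bernoulli with parameter $z := P(S\le t)$ and $S_0$ to a Bernoulli with parameter $\beta := \sum_i \lambda_i F_i(t)$. A direct computation identifies the $f$-divergence between these two Bernoullis as exactly $h(z,\beta)$, so $h(z,\beta) \le \rho$. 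Since this particular $\lambda \in \Delta^{d-1}$ is admissible in the infimum appearing in Definition~\ref{def::adatptive-g}, $z$ belongs to the feasible set defining $g_{f,\rho}(F_1(t),\dots,F_d(t))$, which yields $P(S\le t) \ge g_{f,\rho}(F_1(t),\dots,F_d(t))$. Taking infimum over $P$ gives the lower bound.

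\textbf{Upper bound ($\le$).} For any $z$ satisfying $\inf_{\lambda\in\Delta^{d-1}} h(z,\sum_i\lambda_i F_i(t)) \le \rho$, I pick (within $\epsilon$) a $\lambda^{\star}\in\Delta^{d-1}$ achieving this infimum and set $\beta^{\star} = \sum_i \lambda^{\star}_i F_i(t)$ together with $S_0 = \sum_i \lambda^{\star}_i\, s\#S_i$. In the generic regime $\beta^{\star} \in (0,1)$, define $P$ to be absolutely continuous with respect to $S_0$ with density
\[
\frac{dP}{dS_0}(s) = \frac{z}{\beta^{\star}}\,\mathbb{I}_{(-\infty,t]}(s) + \frac{1-z}{1-\beta^{\star}}\,\mathbb{I}_{(t,\infty)}(s).
\]
Direct integration verifies $P(S\le t) = z$ and $D_f(P\Vert S_0) = h(z,\beta^{\star}) \le \rho + \epsilon$, so $P$ lies in $\mathcal{P}_{f,\rho}$ after letting $\epsilon\to 0$. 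Sweeping $z$ over the feasible set gives $\widetilde{F}(t;\mathcal{P}_{f,\rho}) \le g_{f,\rho}(F_1(t),\dots,F_d(t))$.

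\textbf{Main obstacle.} The genuine technical difficulty arises in the degenerate regime $\beta^{\star} \in \{0,1\}$, which happens exactly when $F_i(t) \in \{0,1\}$ for every $i$ in the support of $\lambda^{\star}$; here the density construction becomes ill-defined and one must invoke the convention $0\cdot f(a/0) = a\,f^{\prime}(\infty)$ in the definition of the $f$-divergence. Following the remark in the paper's discussion section, I would enumerate the four boundary situations (some vs.\ all $F_i(t)=0$, some vs.\ all $F_i(t)=1$) and verify the identity in each by direct inspection of the degenerate Bernoulli $f$-divergence. Once these cases are dispatched, the two bounds assemble into the first equality, and Lemma~\ref{lma::multi-g-reduce-to-single-g} finishes the second.
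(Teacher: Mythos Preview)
Your proposal is correct and mirrors the paper's proof: your two-valued density construction for the upper bound is exactly the paper's auxiliary set $\mathcal{P}_{f,\rho}^*$ of distributions with $\frac{dP}{dS_0}$ constant on $\{S\le t\}$ and $\{S>t\}$, and your DPI-to-Bernoulli lower bound is a slightly cleaner packaging of the paper's Markov-kernel reduction (both are instances of Lemma~\ref{lma::f-divergence-data-processing-ieq}), while the boundary cases you anticipate are indeed handled one by one in the paper. One small correction: the $\epsilon$-slack in your upper bound is unnecessary and, as written, does not literally place $P$ in $\mathcal{P}_{f,\rho}$ (only in $\mathcal{P}_{f,\rho+\epsilon}$); since $\{\sum_i\lambda_iF_i(t):\lambda\in\Delta^{d-1}\}$ is a compact interval on which $h(z,\cdot)$ is continuous, the infimum over $\lambda$ is attained and you may take $\epsilon=0$ directly.
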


\begin{proof} [Proof of \Cref{lma::worst-case-F-multi-g-relation}]
    The second equality is a direct result from \Cref{lma::multi-g-reduce-to-single-g} We prove the first equality in four cases.
    \begin{enumerate}[(1)]
        \item \textbf{When there exists $j \in [d]$ such that $F_j(t) = 0$}. Recall that $\widetilde{F}(t; \mathcal{P}_{f,\rho}) = \underset{P \in \mathcal{P}_{f,\rho}}{\inf} P(t\le t)$, we know that $0\le \widetilde{F}(t; \mathcal{P}_{f,\rho}) \le \underset{1 \le i \le d}{\min} F_i(t) = 0$, so $\widetilde{F}(t; \mathcal{P}_{f,\rho}) = 0$.
        
        Setting $z = 0$, since $F_j(t) = 0$, take $\lambda_j=1$ and $\lambda_i = 0$ for all $i\ne j$, then we get that:
        \begin{equation*}
            \begin{aligned}
                &\underset{\sum_{i=1}^d \lambda_i = 1}{\underset{\lambda_1, \dots, \lambda_d \ge 0}{\inf}} \left( \sum_{i=1}^d \lambda_i \beta_i \right)f\left( \frac{z}{\sum_{i=1}^d \lambda_i \beta_i} \right) + \left(1-\sum_{i=1}^d \lambda_i \beta_i\right) f\left( \frac{1-z}{1-\sum_{i=1}^d \lambda_i \beta_i} \right) \\
                &\le \beta_j f\left(\frac{z}{\beta_j} \right) + (1-\beta_j)f\left(\frac{1-z}{1-\beta_j} \right) = 0f\left( \frac{0}{0} \right) + f(1) = 0,
            \end{aligned}
        \end{equation*}
        which means that $g_{f,\rho}\left( F_1(t), \cdots, F_d(t) \right) = 0$. So, in this case, we have: $\widetilde{F}(t; \mathcal{P}_{f,\rho}) = g_{f,\rho}\left( F_1(t), \cdots, F_d(t) \right)$. Here, we can define $0f\left( \frac{0}{0} \right)$ by taking $\beta_j = z$ and let $\beta_j, z \to 0$, which means that $0f\left( \frac{0}{0} \right)=0$ here.
        \item \textbf{When $0 < F_i(t) < 1$ for all $i \in [n]$}, in this proof, we denote $\mathcal{CH}(s\# S_1, \cdots, s\# S_d)$ by $\mathcal{CH}_s$ for simplicity, note that the set
        \begin{equation*}
            \mathcal{P}_{f,\rho}^* \coloneqq \left\{ P \bigg| \exists P_0 \in \mathcal{CH}_s \text{ s.t. } \ D_f(P\Vert P_0) \le \rho, \frac{dP}{dP_0} \text{ is constant on } \{ S \le t \} \text{ and } \{ S > t \} \right\}
        \end{equation*}
        is a subset of $\mathcal{P}_{f,\rho}$, i.e., $\mathcal{P}_{f,\rho}^* \subseteq \mathcal{P}_{f,\rho}$. Now we consider what is meant by $D_f(P\Vert P_0) \le \rho, \frac{dP}{dP_0} \text{ is constant on } \{ S \le t \} \text{ and } \{ S > t \}$. Suppose that $\frac{dP}{dP_0} = p_1$ on $\{ S \le t \}$ and $\frac{dP}{dP_0} = p_2$ on $\{ S > t \}$, on the one hand:
        \begin{equation*}
            \int_{\{S\le t\}} \frac{dP}{dP_0} dP_0 = \int_{\{S\le t\}} dP = P(S \le t);
        \end{equation*}
        on the other hand:
        \begin{equation*}
            \int_{\{S\le t\}} \frac{dP}{dP_0} dP_0 = \int_{\{S\le t\}} p_1 dP_0 = p_1 \cdot P_0(S \le t).
        \end{equation*}
        The above two equations imply that $P(S \le t) = p_1 \cdot P_0(S \le t)$, i.e., $p_1 = \frac{P(S \le t)}{P_0(S \le t)}$. Similarly we can prove that $p_2 = \frac{P(S > t)}{P_0(S > t)}$. Then we can formulate $D_f(P\Vert P_0)$ as:
        \begin{equation*}
            \begin{aligned}
                D_f(P\Vert P_0) &= \int f\left( \frac{dP}{dP_0} \right) dP_0 = \int_{\{S\le t\}} f\left( \frac{P(S \le t)}{P_0(S \le t)} \right) dP_0 + \int_{\{S > t\}} f\left( \frac{P(S > t)}{P_0(S > t)} \right) dP_0 \\
                &= P_0(S\le t) f\left( \frac{P(S \le t)}{P_0(S \le t)} \right) + P_0(S > t) f\left( \frac{P(S > t)}{P_0(S > t)} \right).
            \end{aligned}
        \end{equation*}
        Let $z = P(S \le t)$ and $\beta = P_0(S \le t)$, then:
        \begin{equation*}
            D_f(P\Vert P_0) \le \rho \Longleftrightarrow \beta f\left( \frac{z}{\beta} \right) + (1-\beta) f\left( \frac{1-z}{1-\beta} \right) \le \rho \Longleftrightarrow h(z,\beta) \le \rho,
        \end{equation*}
        which matches the expression in the definition of $g_{f,\rho}$. Let $\Delta^{d-1} = \{ \pmb{\lambda} = (\lambda_, \cdots, \lambda_d) | \lambda_i \ge 0, \forall i \in [d]; \sum_{i=1}^d \lambda_i = 1 \}$ be the $(d-1)$-dimension simplex, we can analogously define:
        \begin{equation*}
            \mathcal{P}_{f,\rho}^* \!=\!\! \left\{ \!P \bigg| \exists \pmb{\lambda} \!\in\! \Delta^{d-1} \text{ s.t. } D_f\!\left(\!\!P\bigg\Vert \!\sum_{i=1}^d \!\lambda_i S_i\!\!\right) \!\!\le\! \rho, \frac{dP}{d\sum_{i=1}^d \!\lambda_i S_i} \text{ is constant on } \{ S \!\!\le\!\! t \} \text{ and } \{ S\!\! >\!\! t \} \!\!\right\}
        \end{equation*}
        and
        \begin{equation} \label{eq::worst-case-Fsub-g-relation}
            \begin{aligned}
                \widetilde{F}(t; \mathcal{P}_{f,\rho}^*)&=\inf\left\{ P(S\le t) | P \in \mathcal{P}_{f,\rho}^* \right\} \\
                &= \inf \left\{ P(S\le t) \bigg| \exists \pmb{\lambda} \in \Delta^{d-1} \text{ s.t. } D_f\left(P\bigg\Vert \sum_{i=1}^d \lambda_i S_i\right) \le \rho,  \right. \\
                &\ \ \ \ \ \ \ \ \ \ \ \ \ \ \ \ \ \ \ \ \ \ \ \ \ \ \ \ \ \ \ \ \ \ \ \ \ \left. \frac{dP}{d\sum_{i=1}^d \lambda_i S_i} \text{ is constant on } \{ S \le t \} \text{ and } \{ S > t \} \right\}\\
                &= \inf \left\{ \!P(S\le t) \bigg| \exists \pmb{\lambda} \in \Delta^{d-1} \text{ s.t. } \frac{dP}{d\sum_{i=1}^d \lambda_i S_i} \text{ is constant on } \{ S \!\le\! t \} \text{ and } \{ S \!>\! t \},  \right. \\
                &\left. \left( \sum_{i=1}^d \lambda_i F_i(t) \!\!\right)f\left(\! \frac{P(S\le t)}{\sum_{i=1}^d \lambda_i F_i(t)} \!\right) + \left(\!1\!-\!\sum_{i=1}^d \lambda_i F_i(t) \!\!\right) f\left(\! \frac{1-  P(S\le t)}{1 \!-\! \sum_{i=1}^d \lambda_i F_i(t)} \!\!\right) \!\le\! \rho \!\right\} \\
                &= \inf \left\{ P(S\le t) \bigg| \underset{\pmb{\lambda} \in \Delta^{d-1}}{\inf} h\left( P(S\le t), \sum_{i=1}^d \lambda_i F_i(t) \right) \le \rho \right\} \\
                &= g_{f,\rho}\left(F_1(t), \dots, F_d(t) \right).
            \end{aligned}
        \end{equation}
        So we have that $\widetilde{F}(t; \mathcal{P}_{f,\rho}) \le \widetilde{F}(t; \mathcal{P}_{f,\rho}^*) = g_{f,\rho}\left(F_1(t), \dots, F_d(t) \right)$. For the inverse direction, fix $t \in \mathbb{R}$, for any $P \in \mathcal{P}_{f,\rho}$, according to the definition of $\mathcal{P}_{f,\rho}$, there exists $P_0 \in \mathcal{CH}_s$ such that $D_f(P \Vert P_0) \le \rho$. Consider the following Markov kernel $K$:
        \begin{equation*}
            K(ds^\prime | s) = \left\{
                                    \begin{array}{lcl}
                                    \frac{dP_0(s^\prime) \mathbb{I}\{ s^\prime \le t \}}{P_0(S\le t)} & & {s \le t}\\
                                    \frac{dP_0(s^\prime) \mathbb{I}\{ s^\prime > t \}}{P_0(S > t)} & & {s > t}
                                    \end{array} \right..
        \end{equation*}
        Let $P_1 = K \cdot P$, then we have:
        \begin{equation*}
            \begin{aligned}
                dP_1(s) &= P_1(ds) \overset{\sroman{1}}{=} \int_{-\infty}^{+\infty} K(ds|y) dP(y) \\
                &= \int_{-\infty}^t \frac{dP_0(s) \mathbb{I}\{ s \le t \}}{P_0(S\le t)} dP(y) + \int_t^{+\infty} \frac{dP_0(s) \mathbb{I}\{ s > t \}}{P_0(S > t)} dP(y)\\
                &= \frac{dP_0(s) \mathbb{I}\{ s \le t \}}{P_0(S\le t)} P(S\le t) + \frac{dP_0(s) \mathbb{I}\{ s > t \}}{P_0(S > t)} P(S > t) \\
                &= \left( \frac{P(S\le t)}{P_0(S\le t)} \mathbb{I}\{ s \le t \} + \frac{P(S > t)}{P_0(S > t)} \mathbb{I}\{ s > t \} \right) dP_0(s),
            \end{aligned}
        \end{equation*}
        where $\sroman{1}$ comes from the definition of transition kernels. So $\frac{dP_1}{dP_0} = \frac{P(S\le t)}{P_0(S\le t)} \mathbb{I}\{ s \le t \} + \frac{P(S > t)}{P_0(S > t)} \mathbb{I}\{ s > t \}$. Similarly, let $P_2 = K\cdot P_0$, we have: $\frac{dP_2}{dP_0} = \frac{P_0(S\le t)}{P_0(S\le t)} \mathbb{I}\{ s \le t \} + \frac{P_0(S > t)}{P_0(S > t)} \mathbb{I}\{ s > t \} = 1$, so $P_2 = P_0$. Furthermore,
        \begin{equation*}
            \begin{aligned}
                P_1(S\le t) &= \int_{\{S \le t\}} dP_1 = \int_{\{S \le t\}} \left( \frac{P(S\le t)}{P_0(S\le t)} \mathbb{I}\{ s \le t \} + \frac{P(S > t)}{P_0(S > t)} \mathbb{I}\{ s > t \} \right) dP_0 \\
                &= \frac{P(S\le t)}{P_0(S\le t)} P_0(S\le t) = P(S\le t).
            \end{aligned}
        \end{equation*}
        By \Cref{lma::f-divergence-data-processing-ieq}, we have: $D_f(P_1 \Vert P_0) = D_f(P_1 \Vert P_2) \le D_f(P \Vert P_0) \le \rho$. In conclusion, for any $P \in \mathcal{P}_{f,\rho}$, we can find $P_1$ such that: there exists $P_0 \in \mathcal{CH}_s$ such that $D_f(P_1 \Vert P_0) \le \rho$ and $\frac{dP_1}{dP_0}$ is constant on $\{S \le t\}$, $P_1(S\le t) = P(S\le t)$ and $\{S > t\}$. In other words, for any $P \in \mathcal{P}_{f,\rho}$, we can find $P_1 \in \mathcal{P}_{f,\rho}^*$ such that $P_1(S\le t) = P(S\le t)$, which means that:
        \begin{equation*}
            \inf\left\{ P(S\le t) | P \in \mathcal{P}_{f,\rho}^* \right\} \le \inf\left\{ P(S\le t) | P \in \mathcal{P}_{f,\rho} \right\}.
        \end{equation*}
        On the other hand, since $\mathcal{P}_{f,\rho}^* \subseteq \mathcal{P}_{f,\rho}$, we have:
        \begin{equation} \label{eq::worst-case-Fsub-F-relation}
            \inf\left\{ P(S\le t) | P \in \mathcal{P}_{f,\rho} \right\} \le \inf\left\{ P(S\le t) | P \in \mathcal{P}_{f,\rho}^* \right\}.
        \end{equation}
        So we have $\widetilde{F}(t; \mathcal{P}_{f,\rho}) = \widetilde{F}(t; \mathcal{P}_{f,\rho}^*)$. Combining \Cref{eq::worst-case-Fsub-g-relation} with \Cref{eq::worst-case-Fsub-F-relation} implies that: $\widetilde{F}(t; \mathcal{P}_{f,\rho}) = g_{f,\rho}\left(F_1(t), \dots, F_d(t) \right)$.
        \item \textbf{When $F_i(t) = 1$ for all $i \in [d]$}. Then for any $z \in \left( g_{f,\rho}(1, \dots, 1), 1 \right]$, let $S_{z,j} \coloneqq (1-z) \delta_{t+1} + z S_j$, since. According to the proof of \Cref{lma::multi-g-reduce-to-single-g}, $h(z,1) = f(z)$ is non-increasing in $[0,1]$, so $z > g_{f,\rho}(1)$ implies that $f(z) \le \rho$. So we have:
        \begin{equation*}
            \begin{aligned}
                D_f(S_{z,j} \Vert S_j) &= \int f\left( \frac{d(1-z)\delta_{t+1}+zS_j}{dS_j} \right) dS_j \\
                &= \int_{-\infty}^{+\infty} f\left( \frac{d(1-z)\delta_{t+1}}{dS_j} + z \right) dS_j(y) \\
                &= \int_{-\infty}^t f\left( \frac{d(1-z)\delta_{t+1}}{dS_j} + z \right) dS_j(y) \\
                &= \int_{-\infty}^t f(z) dS_j(y) = f(z) \le \rho.
            \end{aligned}
        \end{equation*}
        Moreover, $S_{z,j}(S\le t) = (1-z)\delta_{t+1}(S\le t) + z S_j(S\le t) = z S_j(S\le t) = z$. So for any $z > g_{f,\rho}\left(F_1(t), \dots, F_d(t) \right)$, there exists $S_{z,j} \in \mathcal{P}_{f,\rho}$ such that $S_{z,j}(S \le t) = z$, which means that $\widetilde{F}(t; \mathcal{P}_{f,\rho}) \le g_{f,\rho}\left(F_1(t), \dots, F_d(t) \right)$. For another direction, we follow the same argument except that we use the following Markov kernel to account for the fact that $\sum_{i=1}^d \lambda_i S_i(S > t) = 0$ for any $\pmb{\lambda} \in \Delta^{d-1}$:
        \begin{equation*}
            K(ds^\prime | s) = \left\{
                                    \begin{array}{lcl}
                                    \frac{dP_0(s^\prime) \mathbb{I}\{ s^\prime \le t \}}{P_0(S\le t)} & & {s \le t}\\
                                    \delta_{s^\prime = t+1} & & {s > t}
                                    \end{array} \right..
        \end{equation*}
        \item \textbf{When $F_i(t) > 0$ for all $i \in [d]$ and there exists at least $1$ and at most $d-1$ numbers of $i\in [d]$ such that $F_i(t) = 1$}. Then we know that $0 < \underset{1 \le i \le d}{\min} F_i(t) < 1$, without loss of generality, we assume that $j = \underset{1 \le i \le d}{\arg \min} F_i(t)$. We define $\Delta^\prime = \left\{ \pmb{\lambda} | \pmb{\lambda} \in \Delta^{d-1}, \sum_{i=1}^d \lambda_i F_i(t) < 1 \right\}$, then we define:
        \begin{equation*}
            \mathcal{P}_{f,\rho}^\prime \!=\!\! \left\{ \!P \bigg| \exists \pmb{\lambda} \!\in\! \Delta^\prime \text{ s.t. } D_f\!\left(\!\!P\bigg\Vert \!\sum_{i=1}^d \!\lambda_i S_i\!\!\right) \!\!\le\! \rho, \frac{dP}{d\sum_{i=1}^d \!\lambda_i S_i} \text{ is constant on } \{ S \!\!\le\!\! t \} \text{ and } \{ S\!\! >\!\! t \} \!\!\right\}.
        \end{equation*}
        It's obvious that $\mathcal{P}_{f,\rho}^\prime \subseteq \mathcal{P}_{f,\rho}^*$ and similar to the proof in situation (2), we have:
        \begin{equation} \label{eq::worst-case-Fsub-prime} 
            \begin{aligned}
                \widetilde{F}(t; \mathcal{P}_{f,\rho}^\prime)&=\inf\left\{ P(S\le t) | P \in \mathcal{P}_{f,\rho}^\prime \right\} \\
                &= \inf \left\{ P(S\le t) \bigg| \exists \pmb{\lambda} \in \Delta^\prime \text{ s.t. } D_f\left(P\bigg\Vert \sum_{i=1}^d \lambda_i S_i\right) \le \rho,  \right. \\
                &\ \ \ \ \ \ \ \ \ \ \ \ \ \ \ \ \ \ \ \ \ \ \ \ \ \ \ \ \ \ \ \ \ \ \ \ \ \left. \frac{dP}{d\sum_{i=1}^d \lambda_i S_i} \text{ is constant on } \{ S \le t \} \text{ and } \{ S > t \} \right\}\\
                &= \inf \left\{ \!P(S\le t) \bigg| \exists \pmb{\lambda} \in \Delta^\prime \text{ s.t. } \frac{dP}{d\sum_{i=1}^d \lambda_i S_i} \text{ is constant on } \{ S \!\le\! t \} \text{ and } \{ S \!>\! t \},  \right. \\
                &\left. \left( \sum_{i=1}^d \lambda_i F_i(t) \!\!\right)f\left(\! \frac{P(S\le t)}{\sum_{i=1}^d \lambda_i F_i(t)} \!\right) + \left(\!1\!-\!\sum_{i=1}^d \lambda_i F_i(t) \!\!\right) f\left(\! \frac{1-  P(S\le t)}{1 \!-\! \sum_{i=1}^d \lambda_i F_i(t)} \!\!\right) \!\le\! \rho \!\right\} \\
                &= \inf \left\{ P(S\le t) \bigg| \underset{\pmb{\lambda} \in \Delta^\prime}{\inf} h\left( P(S\le t), \sum_{i=1}^d \lambda_i F_i(t) \right) \le \rho \right\}.
            \end{aligned}
        \end{equation}
        According to the definition of $\Delta^\prime$, we have:
        \begin{equation*}
            \left\{ \sum_{i=1}^d \lambda_i F_i(t) \bigg| \pmb{\lambda} \in \Delta^\prime \right\} = \left[ F_j(t), 1 \right).
        \end{equation*}
        So we have:
        \begin{equation*}
            \widetilde{F}(t; \mathcal{P}_{f,\rho}^\prime) = \inf \left\{ P(S\le t) \bigg| \underset{\beta \in [ F_j(t), 1)}{\inf} h\left( P(S\le t), \beta \right) \le \rho \right\}.
        \end{equation*}
        A similar argument as in the proof \Cref{lma::multi-g-reduce-to-single-g} provides: $\widetilde{F}(t; \mathcal{P}_{f,\rho}^\prime) = g_{f,\rho}(F_j(t))$. Since $F_{j}(t) = \underset{1 \le i \le d}{\min} F_i(t)$, \Cref{lma::multi-g-reduce-to-single-g} tells us that $g_{f,\rho}(F_j(t)) = g_{f,\rho}\left(F_1(t), \cdots, F_d(t) \right)$, which means that $\widetilde{F}(t; \mathcal{P}_{f,\rho}) \le \widetilde{F}(t; \mathcal{P}_{f,\rho}^\prime) = g_{f,\rho}\left(F_1(t), \cdots, F_d(t) \right)$.

        The other direction comes from similar arguments in situations (2) and (3). When the $P_0$ satisfies $P_0(S \le t) < 1$, we use the Markov kernel defined in situation (2), otherwise, we use the Markov kernel defined in the situation (3).
        
     \end{enumerate}
\end{proof}

Now we begin to proceed with the proof of \Cref{thm::wc-quantile-to-standard-quantile}, which is a direct result from \Cref{lma::worst-case-F-multi-g-relation}.
\begin{proof} [Proof of \Cref{thm::wc-quantile-to-standard-quantile}]
    \begin{equation*}
        \begin{aligned}
            \widetilde{\mathcal{Q}}(\alpha; \mathcal{P}_{f,\rho}) &\overset{\sroman{1}}{=} \inf \left\{ q \in \mathbb{R} \Big| \widetilde{F}(q; \mathcal{P}_{f,\rho}) \ge \alpha \right\} \overset{\sroman{2}}{=} \inf \left\{ q \in \mathbb{R} \Big| g_{f,\rho}\left( F_{\min}(q) \right) \ge \alpha \right\} \\
            &\overset{\sroman{3}}{=} \inf \left\{ q \in \mathbb{R} \Big| F_{\min}(q) \ge g_{f,\rho}^{-1} (\alpha) \right\} = \mathcal{Q}\left( g_{f,\rho}^{-1} (\alpha); F_{\min} \right),
        \end{aligned}
    \end{equation*}
    where $\sroman{1}$ follows the definition of the quantile function; $\sroman{2}$ comes from \Cref{lma::worst-case-F-multi-g-relation}; $\sroman{3}$ is a result of item (5) in \Cref{lma::properties-of-g}.
\end{proof}

\subsection{Proof of \Cref{prop::Fmin-empirical-approx}} \label{prf::prop-Fmin-empirical-approx}
\begin{custompro}{\ref{prop::Fmin-empirical-approx}}
    Let $F_1, \dots, F_d$ be c.d.f.'s on $\mathbb{R}$, define $F_{\min}(x) = \underset{1 \le i \le d}{\min} F_i(x)$. Suppose $\hat{F}_1, \dots, \hat{F}_d$ are the empirical c.d.f.'s corresponding to $F_1, \dots, F_d$ with $m_1, \dots, m_d$ examples, respectively. Define $\hat{F}_{\min}(x) = \underset{1 \le i \le d}{\min} \hat{F}_i(x)$, then for any $\epsilon > 0$,
    \begin{equation*}
        \mathbb{P}\left( \underset{x \in \mathbb{R}}{\sup} \left| F_{\min}(x) - \hat{F}_{\min}(x) \right| > \epsilon \right) \le 2 \sum_{i=1}^d e^{-2m_i \epsilon^2},
    \end{equation*}
    where the probability is over the randomness of the examples that define the empirical c.d.f.'s.
\end{custompro}

We need the following famous Dvoretzky–Kiefer–Wolfowitz inequality (DKW inequality for short) as a tool.
\begin{lemma} [{\citep[DKW inequality]{massart1990tight}}] \label{lma::DKW}
    Given a natural number $n$, let $X_1, X_2, \dots, X_n$ be real-valued i.i.d. random variables with c.d.f. $F(\cdot)$. Let $F_n$ denote the associated empirical distribution function defined by:
    \begin{equation*}
        F_n(x) = \frac{1}{n} \sum_{i=1}^n \mathbb{I}_{(-\infty, x]}(X_i), \ \ \ \ \ \ x \in \mathbb{R}.
    \end{equation*}
    Then, for any $\epsilon \ge \sqrt{\frac{1}{2n} \ln{2}}$, we have:
    \begin{equation*}
        \mathbb{P}\left( \underset{x \in \mathbb{R}}{\sup} \left( F_n(x) - F(x) \right) > \epsilon \right) \le e^{-2n \epsilon^2},
    \end{equation*}
    moreover, for any $\epsilon > 0$, we have:
    \begin{equation*}
        \mathbb{P}\left( \underset{x \in \mathbb{R}}{\sup} \left| F_n(x) - F(x) \right| > \epsilon \right) \le 2 e^{-2n \epsilon^2}.
    \end{equation*}
\end{lemma}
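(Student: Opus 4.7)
The plan is to reduce to the case of uniform random variables via the probability integral transform, then establish the sharp one-sided tail bound, and finally deduce the two-sided version by symmetry.

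First I would perform the reduction. When $F$ is continuous, the variables $U_i := F(X_i)$ are i.i.d.\ $\text{Uniform}(0,1)$, and one checks that the distribution of $\sup_x (F_n(x) - F(x))$ equals that of $\sup_{t \in [0,1]} (G_n(t) - t)$, where $G_n$ denotes the empirical c.d.f.\ of $U_1, \dots, U_n$. For general $F$, an approximation argument using the generalised inverse $F^{-1}(u) = \inf\{x : F(x) \ge u\}$ transfers the bound, since flat pieces and jumps of $F$ can only reduce the relevant supremum. Hence it suffices to show $\mathbb{P}(\sup_t (G_n(t) - t) > \epsilon) \le e^{-2n\epsilon^2}$ for $\epsilon \ge \sqrt{\ln 2 / (2n)}$.

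Next, for the one-sided bound, I would follow Massart's argument. A direct pointwise Hoeffding bound gives $\mathbb{P}(G_n(t) - t > \epsilon) \le e^{-2 n \epsilon^2}$ for each fixed $t$, but a union bound over $t$ is lossy. Instead, I would observe that the supremum is attained at an order statistic: writing $U_{(1)} \le \dots \le U_{(n)}$, the event reduces to $\bigcup_{k=1}^{n}\{k/n - U_{(k)} > \epsilon\}$. Viewing the centred process $k - n U_{(k)}$ as a random walk based on the joint density of the uniform order statistics and appealing to the cycle/ballot lemma, one obtains an explicit Smirnov-type formula for the probability. Tight bookkeeping of the resulting sum of binomial terms, together with the regime restriction $\epsilon \ge \sqrt{\ln 2 /(2n)}$ to dominate boundary contributions, gives the bound $e^{-2n\epsilon^2}$ with the sharp constant $2$.

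The two-sided bound then follows from the decomposition $\{\sup_x |F_n - F| > \epsilon\} \subseteq \{\sup_x (F_n - F) > \epsilon\} \cup \{\sup_x (F - F_n) > \epsilon\}$. The second event has the same law as the first by applying the one-sided result to the reflected empirical process (i.e.\ to $1 - F_n$ versus $1 - F$), which yields the factor of $2$. For $\epsilon < \sqrt{\ln 2 /(2n)}$ the right-hand side $2 e^{-2n\epsilon^2}$ exceeds $1$, so the inequality is trivial there. The main obstacle is pinning down the optimal constant $2$ in the one-sided exponent: weaker forms with polynomial prefactors or worse constants are accessible from symmetrisation, Rademacher/VC-type chaining, or a union bound over a net, but attaining Massart's sharp form $e^{-2n\epsilon^2}$ without any prefactor requires the Smirnov identity and delicate asymptotic control of binomial tail sums, which is the technical heart of the argument.
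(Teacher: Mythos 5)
The paper does not prove this lemma at all: it is imported verbatim by citation from Massart (1990) and used as a black-box tool in the proof of Proposition~\ref{prop::Fmin-empirical-approx}. So there is no in-paper argument to compare against, and the real question is whether your sketch constitutes a proof on its own.

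It does not, though the scaffolding is sound. The reduction to the uniform case via $U_i = F(X_i)$ (and the generalised inverse for non-continuous $F$) is correct, the identification of the supremum with $\max_k\bigl(k/n - U_{(k)}\bigr)$ is correct, and the two-sided deduction --- union bound, reflection $U_i \mapsto 1-U_i$ for the lower tail, and triviality of the bound when $\epsilon < \sqrt{\ln 2/(2n)}$ because then $2e^{-2n\epsilon^2} > 1$ --- is complete as stated. But the entire content of the lemma is the sharp one-sided bound $\mathbb{P}\bigl(\sup_t (G_n(t)-t) > \epsilon\bigr) \le e^{-2n\epsilon^2}$ with no prefactor, and for that step you write only that ``tight bookkeeping of the resulting sum of binomial terms \dots gives the bound.'' That bookkeeping is Massart's theorem: starting from the Smirnov--Birnbaum--Tingey exact formula $\mathbb{P}(\sup_t(G_n(t)-t) > \epsilon) = (1-\epsilon)^n + \epsilon\sum_{k}\binom{n}{k}\bigl(\tfrac{k}{n}-\epsilon\bigr)^k\bigl(1-\tfrac{k}{n}+\epsilon\bigr)^{n-k-1}$, one must bound each term against the corresponding piece of $e^{-2n\epsilon^2}$ via delicate inequalities that occupy the bulk of his paper, and it is exactly here that the regime restriction $\epsilon \ge \sqrt{\ln 2/(2n)}$ is consumed. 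Asserting that this works is not the same as doing it. As a proof attempt the proposal therefore has a genuine gap at its technical core; as a justification for invoking the lemma in this paper, the citation to Massart is the appropriate and sufficient move, which is what the authors do.
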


\begin{proof} [Proof of \Cref{prop::Fmin-empirical-approx}]
    Fix $\epsilon > 0$, we define the events $A_i, i \in [d]$ as: $A_i = \left\{ \underset{x \in \mathbb{R}}{\sup} \left| \hat{F}_i(x) - F_i(x) \right| \le \epsilon \right\}$. Similarly, define $A = \left\{ \underset{x \in \mathbb{R}}{\sup} \left| \hat{F}_{\min}(x) - F_{\min}(x) \right| \le \epsilon \right\}$. If $\cap_{i=1}^d A_i$ holds, then, for all $i \in [d]$:
    \begin{equation*}
        \underset{x \in \mathbb{R}}{\sup} \left| \hat{F}_i(x) - F_i(x) \right| \le \epsilon,
    \end{equation*}
    which means that:
    \begin{equation*}
        \forall i \in [d], \forall x \in \mathbb{R}: \ \ F_i(x) - \epsilon \le \hat{F}_i(x) \le F_i(x) + \epsilon.
    \end{equation*}
    For an arbitrary $x \in \mathbb{R}$, assume that $j = \underset{1 \le i \le d}{\arg\min} \hat{F}_i(x)$, then:
    \begin{equation} \label{ieq::Fmin-approx-ge-directon}
        \hat{F}_{\min}(x) = \hat{F}_j(x) \ge F_j(x) - \epsilon \ge \underset{1 \le i \le d}{\min} F_i(x) - \epsilon \ge F_{\min}(x) - \epsilon.
    \end{equation}
    Suppose $k = \underset{1 \le i \le d}{\arg\min} F_i(x)$, then:
    \begin{equation} \label{ieq::Fmin-approx-le-directon}
        F_{\min}(x) + \epsilon = F_k(x) + \epsilon \ge \hat{F}_k(x) \ge \underset{1 \le i \le d}{\min} \hat{F}_i(x) = \hat{F}_{\min}(x).
    \end{equation}
    Combining \Cref{ieq::Fmin-approx-ge-directon} and \Cref{ieq::Fmin-approx-le-directon} shows that:
    \begin{equation*}
        \forall x \in \mathbb{R}: \ \ F_{\min}(x) - \epsilon \le \hat{F}_{\min}(x) \le F_{\min}(x) + \epsilon,
    \end{equation*}
    which means that:
    \begin{equation*}
        \underset{x \in \mathbb{R}}{\sup} \left| \hat{F}_{\min}(x) - F_{\min}(x) \right| \le \epsilon.
    \end{equation*}
    So we have proved that $\cap_{i=1}^d A_i \subseteq A$, which means that $A^c \subseteq \left( \cap_{i=1}^d A_i \right)^c = \cup_{i=1}^d A_i^c$, where the superscript $A^c$ is the complement set of $A$. So we have:
    \begin{equation} \label{ieq::prob-Ac-le-sum-Aic}
        \mathbb{P}\left( A^c \right) \le \mathbb{P}\left( \cup_{i=1}^d A_i^c \right) \overset{\sroman{1}}{\le} \sum_{i=1}^d \mathbb{P}\left( A_i^c \right),
    \end{equation}
    where $\sroman{1}$ comes from the subadditivity of the probability measure. \Cref{ieq::prob-Ac-le-sum-Aic} implies that:
    \begin{equation*}
        \begin{aligned}
            \mathbb{P}\left( \underset{x \in \mathbb{R}}{\sup} \left| F_{\min}(x) - \hat{F}_{\min}(x) \right| > \epsilon \right) &\le \sum_{i=1}^d \mathbb{P}\left( \underset{x \in \mathbb{R}}{\sup} \left| F_i(x) - \hat{F}_i(x) \right| > \epsilon \right)\\
            &\overset{\sroman{2}}{\le} 2 \sum_{i=1}^d e^{-2 m_i \epsilon^2}.
        \end{aligned}
    \end{equation*}
\end{proof}

\subsection{Proof of \Cref{thm::coverage-for-finite-approx}} \label{prf::thm-coverage-for-finite-approx}
\begin{customthm}{\ref{thm::coverage-for-finite-approx}} [Marginal coverage guarantee for the empirical estimations]
    Assume $S_{n+1} = s(X_{n+1}, Y_{n+1})\sim P \in \mathcal{P}_{f,\rho}$ is independent of $\{ S_{ij}\}_{i,j=1}^{d, m_i}$ where $\{ S_{ij}\}_{j=1}^{m_i} \overset{i.i.d.}{\sim} s\#S_i$ for $i \in [d]$. Suppose $\rho^\star = \underset{P_0 \in \mathcal{CH}_s}{\inf} D_f(P\Vert P_0) \le \rho$ where $\mathcal{CH}_s = \mathcal{CH}(s\# S_1, \cdots, s\# S_d)$. Let $\hat{F}_{\min}$ be defined as in \Cref{prop::Fmin-empirical-approx} and let $\hat{S}_1, \dots, \hat{S}_d$ be the empirical distributions of $S_1, \dots, S_d$ respectively. Define
    \begin{equation*}
        \hat{\mathcal{P}}_{f,\rho} \coloneqq \left\{ S \text{ is a distribution on } \mathbb{R} \Big| \exists S_0 \in \mathcal{CH}(s\# \hat{S}_1, \cdots, s\# \hat{S}_d)\ \text{s.t.} \ D_f(S\Vert S_0) \le \rho\right\}.
    \end{equation*}
    If we set $t = \widetilde{\mathcal{Q}}(1-\alpha; \hat{\mathcal{P}}_{f,\rho}) = \mathcal{Q}\left( g_{f,\rho}^{-1}(1-\alpha) ; \hat{F}_{\min} \right)$, then for any $\epsilon > 0$, we can get the following marginal coverage guarantee for $\widetilde{\mathcal{C}}$:
    \begin{equation*}
        \begin{aligned}
            \mathbb{P}\left( Y_{n+1} \in \widetilde{\mathcal{C}} (X_{n+1}) \right) &\ge \left( 1 - 2 \sum_{i=1}^d e^{-2 m_i \epsilon^2} \right) g_{f,\rho^\star}\left( g_{f,\rho}^{-1}(1-\alpha) - \epsilon \right) \\
            &\ge \left( 1 - 2 \sum_{i=1}^d e^{-2 m_i \epsilon^2} \right)\left( 1 - \alpha - \epsilon \cdot g_{f,\rho}^\prime\left( g_{f,\rho}^{-1}(1-\alpha) \right) \right),
        \end{aligned}
    \end{equation*}    
    where the randomness is over the choice of the source examples and $(X_{n+1}, Y_{n+1})$.
\end{customthm}

\begin{proof} [Proof of \Cref{thm::coverage-for-finite-approx}]
    Suppose $F$ is the c.d.f. of the target domain $P$. Recall that by \Cref{lma::worst-case-F}, $\widetilde{F}(t; \mathcal{P}_{f,\rho}) = \underset{P \in \mathcal{P}_{f,\rho}}{\inf} P(S\le t)$, then we have, for any $t \in \mathbb{R}$:
    \begin{equation*}
        F(t) \ge \widetilde{F}(t; \mathcal{P}_{f,\rho}) \overset{\sroman{1}}{=} g_{f,\rho^\star}\left(F_{\min}(t)\right),
    \end{equation*}
    where $\sroman{1}$ is from \Cref{lma::worst-case-F-multi-g-relation}. Recall that $\widetilde{\mathcal{C}}(x) = \{ y \in \mathcal{Y} | s(x,y) \le t \}$, set $t = \widetilde{\mathcal{Q}}(1-\alpha; \hat{\mathcal{P}}_{f,\rho})$, then we have:
    \begin{equation*}
        \begin{aligned}
            \mathbb{P}&\left( Y_{n+1}  \in \widetilde{\mathcal{C}}(X_{n+1}) \Big| \{ (X_{ij}, Y_{ij}) \}_{i,j=1}^{d,m_i} \right) \overset{\sroman{2}}{=} \mathbb{P}\left( s(X_{n+1}, Y_{n+1}) \le \widetilde{\mathcal{Q}}(1-\alpha; \hat{\mathcal{P}}_{f,\rho})\Big| \{ (X_{ij}, Y_{ij}) \}_{i,j=1}^{d,m_i} \right) \\
            &\overset{\sroman{3}}{=} F\left( \widetilde{\mathcal{Q}}(1-\alpha; \hat{\mathcal{P}}_{f,\rho}) \right) \ge g_{f,\rho^\star}\left(F_{\min}\left( \widetilde{\mathcal{Q}}(1-\alpha; \hat{\mathcal{P}}_{f,\rho}) \right)\right) \\
            &\overset{\sroman{4}}{=} g_{f,\rho^\star}\left( F_{\min} \left( \mathcal{Q}\left( g_{f,\rho}^{-1}(1-\alpha) ; \hat{F}_{\min} \right) \right) \right),
        \end{aligned}
    \end{equation*}
    where $\sroman{2}$ is from the definition of $\widetilde{\mathcal{C}}$ and $t$; $\sroman{3}$ is from the fact that $(X_{n+1}, Y_{n+1})$ is independent of $\{ (X_{ij}, Y_{ij}) \}_{i,j=1}^{d,m_i}$ and $\sroman{4}$ is a result of \Cref{thm::wc-quantile-to-standard-quantile}. For $\epsilon > 0$, when $\underset{x \in \mathbb{R}}{\sup} \left| F_{\min}(x) - \hat{F}_{\min}(x) \right| \le \epsilon$, we have:
    \begin{equation*}
        F_{\min} \left( \mathcal{Q}\left( g_{f,\rho}^{-1}(1-\alpha) ; \hat{F}_{\min} \right) \right) \ge \hat{F}_{\min} \left( \mathcal{Q}\left( g_{f,\rho}^{-1}(1-\alpha) ; \hat{F}_{\min} \right) \right) - \epsilon = g_{f,\rho}^{-1}(1-\alpha) - \epsilon.
    \end{equation*}
    
    For any fixed $\epsilon>0$, define $A = \left\{ \underset{x \in \mathbb{R}}{\sup} \left| F_{\min}(x) - \hat{F}_{\min}(x) \right| \le \epsilon \right\}$ and let $P^*$ be the probability measure with respect to $\{ (X_{ij}, Y_{ij}) \}_{i,j=1}^{d,m_i}$. Taking expectation with respect to $\{ (X_{ij}, Y_{ij}) \}_{i,j=1}^{d,m_i}$ yields:
    \begin{equation*}
        \begin{aligned}
            \mathbb{P}&\left( Y_{n+1} \in \widetilde{\mathcal{C}} (X_{n+1}) \right) = \underset{\{ (X_{ij}, Y_{ij}) \}}{\mathbb{E}} \left[ \mathbb{P} \left( Y_{n+1}  \in \widetilde{\mathcal{C}}(X_{n+1}) \Big| \{ (X_{ij}, Y_{ij}) \}_{i,j=1}^{d,m_i} \right) \right] \\
            &= \int_A \mathbb{P} \left( Y_{n+1}  \in \widetilde{\mathcal{C}}(X_{n+1}) \Big| \{ (X_{ij}, Y_{ij}) \}_{i,j=1}^{d,m_i} \right) dP^* \\
            &\ \ \ \ \ \ \ \ \ \ \ \ \ \ \ \ \ \ \ \ \ \ \ \ \ \ \ \ \ + \int_{A^c} \mathbb{P} \left( Y_{n+1}  \in \widetilde{\mathcal{C}}(X_{n+1}) \Big| \{ (X_{ij}, Y_{ij}) \}_{i,j=1}^{d,m_i} \right) dP^* \\
            &\ge \int_A \mathbb{P} \left( Y_{n+1}  \in \widetilde{\mathcal{C}}(X_{n+1}) \Big| \{ (X_{ij}, Y_{ij}) \}_{i,j=1}^{d,m_i} \right) dP^* \\
            &\ge \int_A g_{f,\rho^\star}\left( F_{\min} \left( \mathcal{Q}\left( g_{f,\rho}^{-1}(1-\alpha) ; \hat{F}_{\min} \right) \right) \right) dP^* \\
            &\overset{\sroman{5}}{\ge} \int_A g_{f,\rho^\star}\left( g_{f,\rho}^{-1}(1-\alpha) - \epsilon \right) dP^* = \mathbb{P}(A)\  g_{f,\rho^\star}\left( g_{f,\rho}^{-1}(1-\alpha) - \epsilon \right) \\
            &\overset{\sroman{6}}{\ge} \left( 1 - 2 \sum_{i=1}^d e^{-2 m_i \epsilon^2} \right) g_{f,\rho^\star}\left( g_{f,\rho}^{-1}(1-\alpha) - \epsilon \right)
        \end{aligned}
    \end{equation*}
    where $\sroman{5}$ comes from the fact that $g_{f,\rho}(\beta)$ is non-decreasing in $\beta$ (\Cref{lma::properties-of-g}) and $\sroman{6}$ is a result of \Cref{prop::Fmin-empirical-approx}. Since $g_{f,\rho}(\beta)$ is non-increasing (\Cref{lma::properties-of-g}) in $\rho$ and $\rho^\star \le \rho$, we know that:
    \begin{equation*}
        g_{f,\rho^\star}\left( g_{f,\rho}^{-1}(1-\alpha) - \epsilon \right) \ge g_{f,\rho}\left( g_{f,\rho}^{-1}(1-\alpha) - \epsilon \right).
    \end{equation*}
    \Cref{lma::properties-of-g} shows that $g_{f,\rho}(\beta)$ is convex with respect to $\beta$, so for any $x,y \in [0,1]$:
    \begin{equation*}
        g_{f,\rho}(y) \ge g_{f,\rho}(x) + g_{f,\rho}^\prime(x)(y-x).
    \end{equation*}
    Taking $x = g_{f,\rho}^{-1}(1-\alpha)$ and $y = g_{f,\rho}^{-1}(1-\alpha) - \epsilon$ yields:
    \begin{equation*}
        \begin{aligned}
            g_{f,\rho}\left( g_{f,\rho}^{-1}(1-\alpha) - \epsilon \right) &\ge g_{f,\rho}\left( g_{f,\rho}^{-1}(1-\alpha) \right) + g_{f,\rho}^{\prime}\left( g_{f,\rho}^{-1}(1-\alpha)\right)\cdot \epsilon \\
            &= 1 - \alpha - \epsilon \cdot g_{f,\rho}^{\prime}\left( g_{f,\rho}^{-1}(1-\alpha)\right).
        \end{aligned}
    \end{equation*}
\end{proof}

\subsection{Proof of \Cref{cor::correct-coverage-guarantee}} \label{prf::cor-correct-coverage-guarantee}
\begin{customcor}{\ref{cor::correct-coverage-guarantee}} [Correct the prediction set to get a $(1-\alpha)$ marginal coverage guarantee]
    Let $(X_{n+1}, Y_{n+1})$, $\hat{F}_{\min}$, $\hat{\mathcal{P}}_{f,\rho}$ be defined as in \Cref{thm::coverage-for-finite-approx}. For arbitrary $\epsilon > 0$, if we set $t = \widetilde{\mathcal{Q}}(1-\alpha^\prime; \hat{\mathcal{P}}_{f,\rho}) = \mathcal{Q}\left( g_{f,\rho}^{-1}(1-\alpha^\prime) ; \hat{F}_{\min} \right)$, where
    \begin{equation*}
        \alpha^\prime = 1 - g_{f,\rho}\left( \epsilon + g_{f,\rho}^{-1}\left( \frac{1-\alpha}{1 - 2 \sum_{i=1}^d e^{-2 m_i \epsilon^2}} \right) \right),
    \end{equation*}
    then we can get the following marginal coverage guarantee:
    \begin{equation*}
         \mathbb{P}\left( Y_{n+1} \in \widetilde{\mathcal{C}} (X_{n+1}) \right) \ge 1 - \alpha.
    \end{equation*}
\end{customcor}

\begin{proof} [Proof of \Cref{cor::correct-coverage-guarantee}]
    By \Cref{lma::properties-of-g}, $g_{f,\rho}(\beta)$ is non-increasing in $\rho$, since $\rho^\star \le \rho$, $g_{f,\rho^\star}(\beta) \ge g_{f,\rho}(\beta)$ for any $\beta \in [0,1]$. Then by \Cref{thm::coverage-for-finite-approx} we know that:
    \begin{equation*}
        \begin{aligned}
            \mathbb{P}&\left( Y_{n+1} \in \widetilde{\mathcal{C}} (X_{n+1}) \right) \ge \left( 1 - 2 \sum_{i=1}^d e^{-2 m_i \epsilon^2} \right) g_{f,\rho^\star}\left( g_{f,\rho}^{-1}(1-\alpha^\prime) - \epsilon \right) \\
            &\ge \left( 1 - 2 \sum_{i=1}^d e^{-2 m_i \epsilon^2} \right) g_{f,\rho}\left( g_{f,\rho}^{-1}(1-\alpha^\prime) - \epsilon \right).
        \end{aligned}
    \end{equation*}
    Since $\alpha^\prime = 1 - g_{f,\rho}\left( \epsilon + g_{f,\rho}^{-1}\left( \frac{1-\alpha}{1 - 2 \sum_{i=1}^d e^{-2 m_i \epsilon^2}} \right) \right)$, we have:
    \begin{equation*}
        \begin{aligned}
            \mathbb{P}&\left( Y_{n+1} \in \widetilde{\mathcal{C}} (X_{n+1}) \right) \ge \left( 1 - 2 \sum_{i=1}^d e^{-2 m_i \epsilon^2} \right) g_{f,\rho}\left( g_{f,\rho}^{-1}(1-\alpha^\prime) - \epsilon \right) \\
            &= \left( 1 - 2 \sum_{i=1}^d e^{-2 m_i \epsilon^2} \right) g_{f,\rho}\left( g_{f,\rho}^{-1}\left(g_{f,\rho}\left( \epsilon + g_{f,\rho}^{-1}\left( \frac{1-\alpha}{1 - 2 \sum_{i=1}^d e^{-2 m_i \epsilon^2}} \right) \right)\right) - \epsilon \right)\\
            &= \left( 1 - 2 \sum_{i=1}^d e^{-2 m_i \epsilon^2} \right) g_{f,\rho}\left( \epsilon + g_{f,\rho}^{-1}\left( \frac{1-\alpha}{1 - 2 \sum_{i=1}^d e^{-2 m_i \epsilon^2}} \right) - \epsilon \right) \\
            &= 1- \alpha.
        \end{aligned}
    \end{equation*}
\end{proof}

\subsection{Proof of \Cref{lma::g-inverse-form}} \label{prf::lma-g-inverse-form}

\begin{customlemma}{\ref{lma::g-inverse-form}} [The form of $g_{f,\rho}^{-1}$ that can be efficiently solved]
    Let $g_{f,\rho}, g_{f,\rho}^{-1}$ be defined as in \Cref{thm::wc-quantile-to-standard-quantile}, then for $\tau \in [0,1]$, we have:
    \begin{equation*}
        g_{f,\rho}^{-1}(\tau) = \sup \left\{ \beta \in [\tau, 1]\left| \beta f\left( \frac{\tau}{\beta} \right) + (1-\beta) f\left( \frac{1-\tau}{1-\beta} \right) \le \rho \right. \right\}.
    \end{equation*}
\end{customlemma}

\begin{proof} [Proof of \Cref{lma::g-inverse-form}]
    As in the proof of \Cref{lma::multi-g-reduce-to-single-g}, we define $h(\beta,z) = \beta f\left( \frac{z}{\beta} \right) + (1-\beta) f \left( \frac{1-z}{1-\beta} \right)$ for simplicity, moreover, $h(\beta,z)$ is convex.
    
    For any $\rho >0$, it is obvious that $h(\tau,\tau) = 0 < \rho$, so $g_{f,\rho}(\tau) \le \tau$, which means that $g_{f,\rho}^{-1}(\tau) \ge \tau$. So $g_{f,\rho}^{-1}(\tau) = \sup \left\{ \beta \in [\tau,1]\left| g_{f,\rho}(\beta) \le \tau \right. \right\}$. Moreover, for $\beta \in [\tau, 1]$, since the minimum of $h(\beta,z)$ is achieved at $z=\beta$ for a given $\beta$, we know that $\underset{0 \le z \le \tau}{\inf} h(\beta,z) = h(\beta,\tau)$. Then we have:

    \begin{equation*}
        \begin{aligned}
            g_{f,\rho}^{-1}(\tau) &= \sup \left\{ \beta \in [\tau,1]\left| \inf \left\{ z \in [0,1] \left| h(\beta, z) \le \rho \right. \right\} \le \tau \right. \right\} \\
            &\overset{\sroman{1}}{=} \sup \left\{ \beta \in [\tau,1]\left| \exists z \le \tau \text{ s.t. } h(\beta, z) \le \rho \right. \right\} \\
            &= \sup \left\{ \beta \in [\tau,1]\left| \underset{0\le z \le \tau}{\inf} h(\beta, z) \le \rho \right. \right\} \\
            &\overset{\sroman{2}}{=} \sup \left\{ \beta \in [\tau,1]\left| h(\beta, \tau) \le \rho \right. \right\},
        \end{aligned}
    \end{equation*}
    where $\sroman{1}$ is from the fact that: when $\beta$ is fixed, $h(z,\beta)$ is continuous in $z$ and the infimum can be achieved and $\sroman{2}$ is implied by the fact that $\underset{0 \le z \le \tau}{\inf} h(\beta,z) = h(\beta,\tau)$ when $\beta \in [\tau,1]$.
\end{proof}

\subsection{Proof of the examples} \label{prf::examples}
\begin{proof} [Proof of \Cref{example::chi-square-div}]
    For $g_{f,\rho}$, when $f(t) = (t-1)^2$, we have that, for $0 < \beta < 1$:
    \begin{equation*}
        \begin{aligned}
            g_{f,\rho}(\beta) &= \inf \left\{ z\in [0,1] \left| \beta f\left( \frac{z}{\beta} \right) + (1-\beta) f\left( \frac{1-z}{1-\beta} \right) \le \rho \right. \right\} \\
            &= \inf \left\{ z\in [0,1] \left| \beta \left( \frac{z}{\beta} - 1 \right)^2 + (1-\beta) \left( \frac{1-z}{1-\beta} -1 \right)^2 \le \rho \right. \right\} \\
            &= \inf \left\{ z\in [0,1] \left| \left( z- \beta \right)^2 \le \rho\beta (1-\beta) \right. \right\}\\
            &= \left( \beta - \sqrt{\rho \beta (1-\beta)} \right)_+.
        \end{aligned}
    \end{equation*}
    When $\beta \downarrow 0$, $\beta f\left( \frac{z}{\beta} \right) = \left( \frac{z}{\sqrt{\beta}} - \sqrt{\beta} \right)^2 \to + \infty$ unless $z=0$. By \Cref{lma::properties-of-g}, when $\rho > 0$, $g_{f,\rho}(\beta)$ is continuous for $\beta \in [0,1]$, so we have $g_{f,\rho}(0) = \underset{\beta \downarrow 0}{\lim} g_{f,\rho}(\beta) = 0$. Similarly, we have: $g_{f,\rho}(1) = 1$. Since the value of $g_{f,\rho}$ on $0,1$ is consistent with the formula $g_{f,\rho}(\beta) = \left( \beta - \sqrt{\rho \beta (1-\beta)} \right)_+$, we conclude that $g_{f,\rho}(\beta) = \left( \beta - \sqrt{\rho \beta (1-\beta)} \right)_+$ for $\beta \in [0,1]$.

    For $g_{f,\rho}^{-1}$, we first solve the equation $\beta - \sqrt{\rho \beta (1-\beta)}=0$ and get a solution $\beta^\star = \frac{\rho}{\rho+1}$. By \Cref{lma::properties-of-g}, $g_{f,\rho}(\beta)$ is non-decreasing and continuous when $\beta \in [0,1]$, so $g_{f,\rho}(\beta) = \beta - \sqrt{\rho \beta (1-\beta)}$ when $\beta \ge \beta^\star$. By the definition of $g_{f,\rho}^{-1}$, it's obvious that $\beta^\star \le g_{f,\rho}^{-1}(\tau)$, so we can compute $g_{f,\rho}^{-1}(\tau)$ by solving the following optimization problem:
    \begin{equation*}
        \max \beta \text{   s.t. } \left\{
                                        \begin{array}{l}
                                        \frac{\rho}{\rho + 1} \le \beta \le 1 \\
                                        \beta - \sqrt{\rho \beta(1-\beta)} \le \tau
                                        \end{array} \right..
    \end{equation*}
\end{proof}

\begin{proof} [Proof of \Cref{example::TV}]
    For $g_{f,\rho}$, when $f(t) = \frac{1}{2} |t-1|$, we have that, for $0 < \beta < 1$:
    \begin{equation*}
        \begin{aligned}
            g_{f,\rho}(\beta) &= \inf \left\{ z\in [0,1] \left| \beta f\left( \frac{z}{\beta} \right) + (1-\beta) f\left( \frac{1-z}{1-\beta} \right) \le \rho \right. \right\} \\
            &= \inf \left\{ z\in [0,1] \left| \frac{1}{2}\beta \left| \frac{z}{\beta} -1 \right| + \frac{1}{2}(1-\beta) \left| \frac{1-z}{1-\beta} - 1 \right| \le \rho \right. \right\} \\
            &= \inf \left\{ z\in [0,1] \left|  \left| z -\beta \right|  \le \rho \right. \right\} \\
            &= \left( \beta - \rho \right)_+.
        \end{aligned}
    \end{equation*}
    According to \Cref{lma::properties-of-g}, $g_{f,\rho}(\beta)$ is continuous when $\beta \in [0,1]$, so $g_{f,\rho}(\beta) = \left( \beta - \rho \right)_+$ when $\beta \in [0,1]$.

    For $g_{f,\rho}^{-1}$, we have:
    \begin{equation*}
        g_{f,\rho}^{-1}(\tau) = \sup \left\{ \beta \in [0,1] \left| g_{f,\rho}(\beta) \le \tau \right. \right\} = \sup \left\{ \beta \in [0,1] \left| \left( \beta - \rho \right)_+ \le \tau \right. \right\} = \min \{ \tau + \rho, 1 \}.
    \end{equation*}
\end{proof}

\section{B Additional experimental results} \label{apd::extra-exp-results}
In this section, we provide additional experimental results. We provide the results of the average length of the predicted confidence sets in \Cref{fig::exp-source1-efficiency,fig::exp-source2-efficiency}, corresponding to the results of the coverage in \Cref{fig::exp-source1-coverage,fig::exp-source2-coverage}, respectively.

\begin{figure}
  \centering
  \includegraphics[width=1\textwidth]{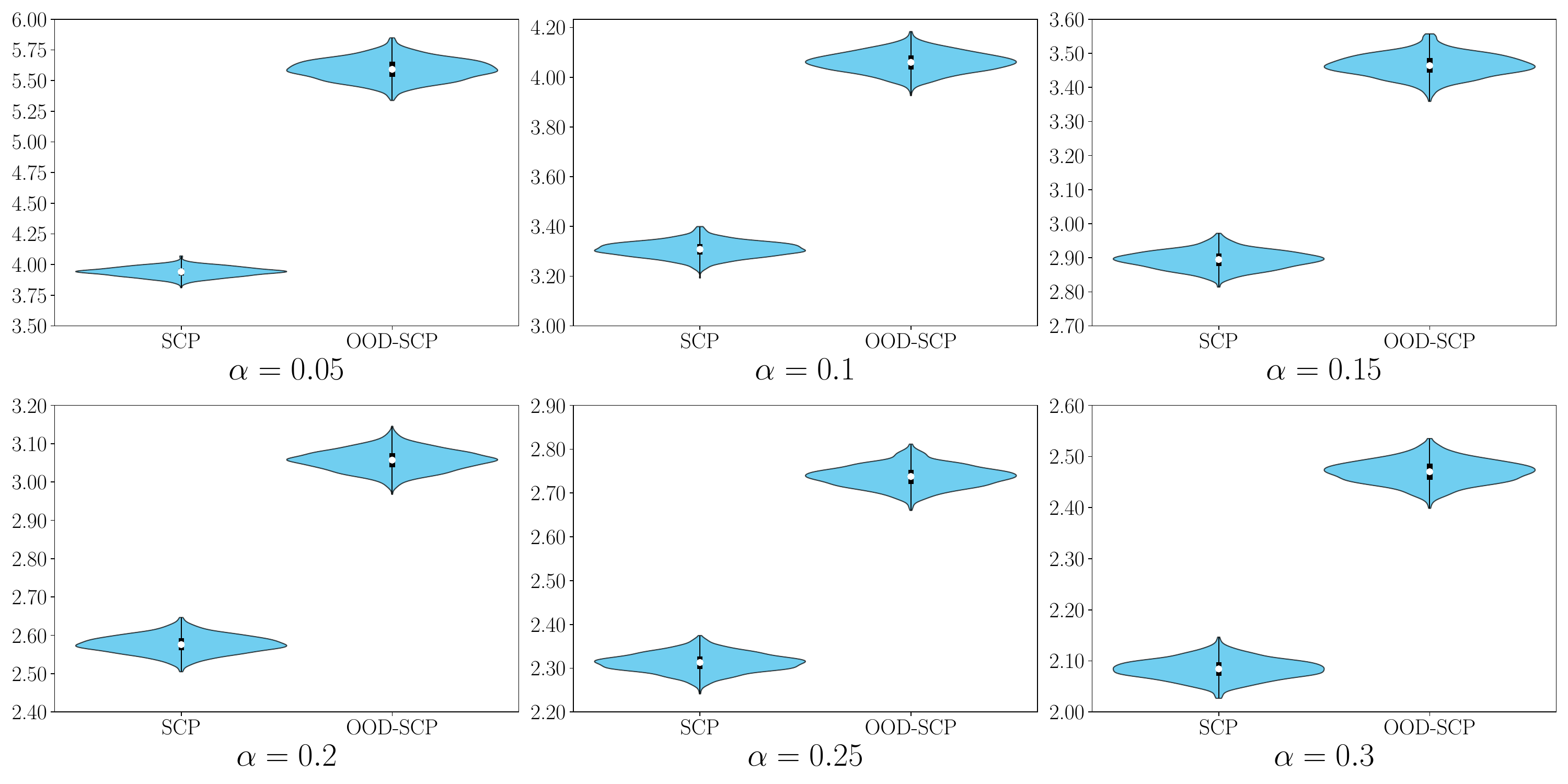}    
  \caption{The violin plots for the \textbf{average length} of the $1000$ runs under the same data generation settings as in Section \ref{sec::motivating-experiment}. We show results for $\alpha=\{0.05, 0.1, 0.15, 0.2, 0.25, 0.3\}$. The white point represents the median, while the two endpoints of the thick line are the $0.25$ quantile and the $0.75$ quantile.} \label{fig::exp-source1-efficiency}
\end{figure}

\begin{figure}
  \centering
  \includegraphics[width=1\textwidth]{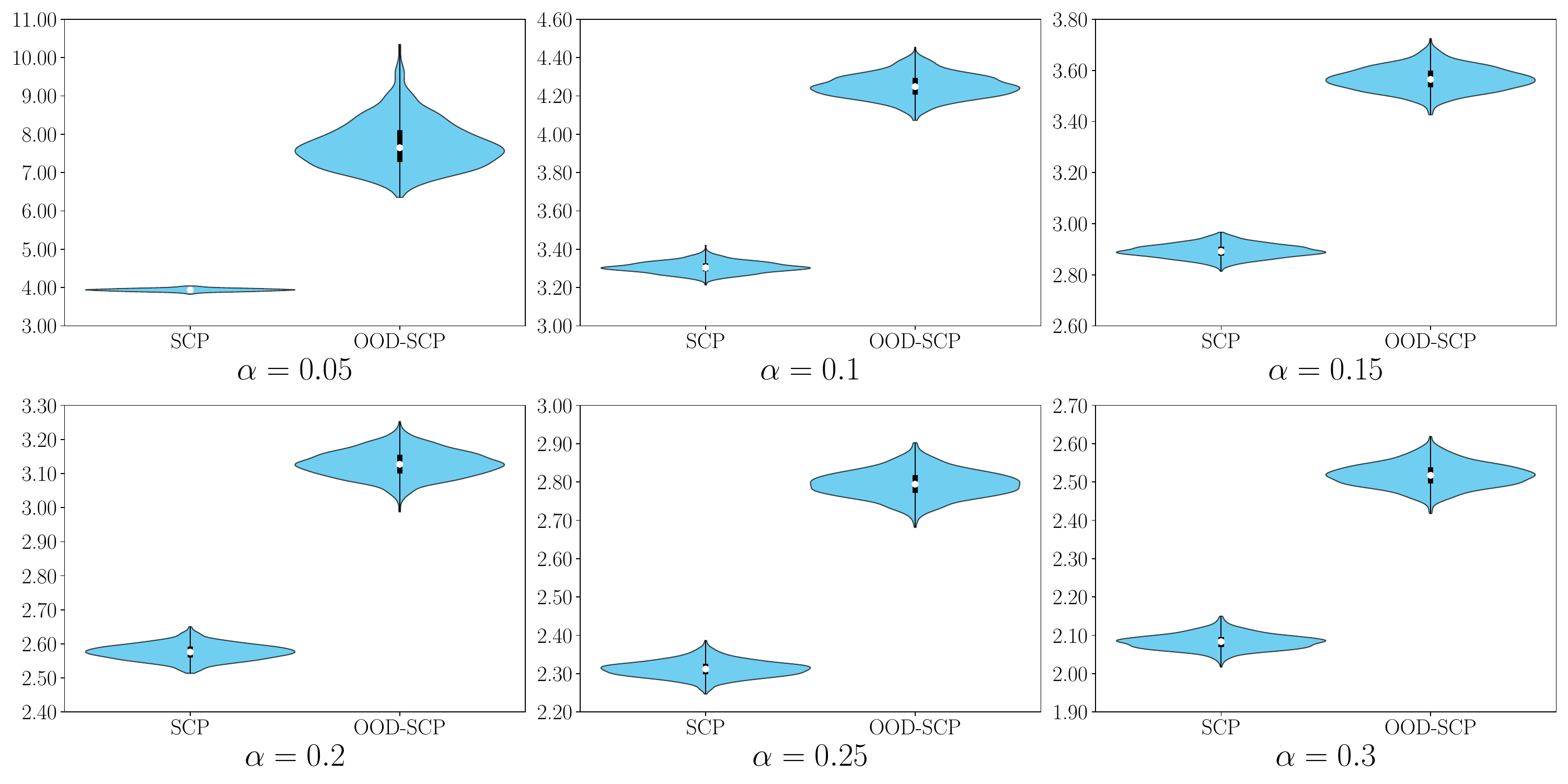}    
  \caption{The violin plots for the \textbf{average length} of the $1000$ runs for the multi-source OOD confidence set prediction task. We show results for $\alpha=\{0.05, 0.1, 0.15, 0.2, 0.25, 0.3\}$. The white point represents the median, while the two endpoints of the thick line are the $0.25$ quantile and the $0.75$ quantile.} \label{fig::exp-source2-efficiency}
\end{figure}

\end{document}